\documentclass{article}

\usepackage{microtype}
\usepackage{graphicx}
\usepackage{subcaption}
\usepackage{booktabs} 

\usepackage{hyperref}

\usepackage[accepted]{icml2026}

\usepackage{amsmath}
\usepackage{amssymb}
\usepackage{mathtools}
\usepackage{amsthm}

\usepackage[capitalize,noabbrev]{cleveref}

\theoremstyle{plain}
\newtheorem{theorem}{Theorem}[section]

\newtheorem{lemma}[theorem]{Lemma}

\theoremstyle{definition}
\newtheorem{definition}[theorem]{Definition}

\theoremstyle{remark}

\usepackage[textsize=tiny]{todonotes}

\icmltitlerunning{Generalizing Multi-Scale Time-Series Modeling with a Single Operator}

\usepackage{booktabs}
\usepackage{makecell}
\usepackage{graphicx}
\usepackage{multirow}
\usepackage{xcolor}
\usepackage{subcaption}
\usepackage{enumitem}

\usepackage{wrapfig}
\usepackage{tabularx}
\usepackage{lscape}
\usepackage{array,makecell}
\usepackage{pifont}

\usepackage{amsthm}
\usepackage{xspace}
\usepackage{amsmath}
\usepackage{cleveref}
\usepackage{amssymb}

\usepackage{placeins}
\usepackage{tikz}
\usetikzlibrary{arrows.meta,positioning,fit,calc}

\usepackage{amsmath,amsfonts,bm}

\def\eqref#1{equation~\ref{#1}}
\def\Eqref#1{Equation~\ref{#1}}

\def\1{\bm{1}}

\def\vb{{\bm{b}}}
\def\vc{{\bm{c}}}

\def\vs{{\bm{s}}}

\def\vu{{\bm{u}}}
\def\vv{{\bm{v}}}

\def\vx{{\bm{x}}}
\def\vy{{\bm{y}}}

\def\mH{{\bm{H}}}
\def\mI{{\bm{I}}}

\def\mK{{\bm{K}}}

\def\mW{{\bm{W}}}
\def\mX{{\bm{X}}}

\DeclareMathAlphabet{\mathsfit}{\encodingdefault}{\sfdefault}{m}{sl}
\SetMathAlphabet{\mathsfit}{bold}{\encodingdefault}{\sfdefault}{bx}{n}

\def\gF{{\mathcal{F}}}

\def\gS{{\mathcal{S}}}

\def\gX{{\mathcal{X}}}
\def\gY{{\mathcal{Y}}}

\def\sR{{\mathbb{R}}}

\def\sZ{{\mathbb{Z}}}

\newcommand{\R}{\mathbb{R}}

\newcommand{\mypar}[1]{\textbf{#1}\hspace{4pt}}

\newcommand{\circone}{\ding{172}\xspace}
\newcommand{\circtwo}{\ding{173}\xspace}
\newcommand{\circthree}{\ding{174}\xspace}
\newcommand{\circfour}{\ding{175}\xspace}
\newcommand{\circfive}{\ding{176}\xspace}
\newcommand{\circsix}{\ding{177}\xspace}

\newcommand{\RplusM}{\mathbb{R}_{\smash{+}}^{\smash{M}}}
\newcommand{\RplusL}{\mathbb{R}_{\smash{+}}^{\smash{L}}}

\newcommand{\method}{\textsc{Sigma}\xspace}
\newcommand{\methodlong}{\underline{Si}ngle \underline{G}eneralized \underline{M}ulti-scale \underline{A}rchitecture\xspace}

\newcommand{\best}[1]{\textbf{\textcolor{red}{#1}}}
\newcommand{\second}[1]{\textcolor{blue}{\underline{#1}}}

\begin{document}

\twocolumn[

  \icmltitle{Generalizing Multi-Scale Time-Series Modeling with a Single Operator}



  \icmlsetsymbol{equal}{*}
\begin{icmlauthorlist}
\icmlauthor{Cheonwoo Lee}{kaistee}
\icmlauthor{Dooho Lee}{kaistee}
\icmlauthor{Doyun Choi}{snucs}
\icmlauthor{Jaemin Yoo}{snucs}
\end{icmlauthorlist}

\icmlaffiliation{kaistee}{School of Electrical Engineering, KAIST, Daejeon, Republic of Korea}
\icmlaffiliation{snucs}{Department of Computer Science and Engineering, Seoul National University, Seoul, Republic of Korea}

\icmlcorrespondingauthor{Jaemin Yoo}{jaeminyoo@snu.ac.kr}

  \icmlkeywords{
Time-Series Forecasting,
Multi-Scale Modeling,
Machine Learning
}

  \vskip 0.3in
]



\printAffiliationsAndNotice{}  

\begin{abstract}

Multi-scale modeling has emerged as an effective design principle for time-series forecasting by capturing temporal dynamics at multiple resolutions.
As no principled foundation has been established in the literature, we unify existing scaling methods into a \emph{scaling operator family}, revealing a fundamental limitation of existing approaches: reliance on fixed and discrete scaling.
To address this limitation, 
we propose \method (\methodlong), which enables 
distance-aware scaling via the learnable discrete Gaussian (LDG) kernel grounded in scale-space theory.
We evaluate \method comprehensively on long- and short-term forecasting benchmarks against state-of-the-art multi-scale baselines.
\method outperforms all competitors on both tasks, achieving the best performance in 13 out of 16 long-term evaluation settings.
Beyond accuracy, \method significantly improves training speed by up to 5.3 times and reduces memory consumption by up to 3.8 times over the strongest competitors.
Code is available at \url{https://github.com/cheonwoolee/SiGMA}.
    
\end{abstract}

\section{Introduction}
\label{sec:intro}

Time-series forecasting is widely applied in diverse domains, including energy systems \citep{deb2017review}, weather prediction \citep{dimri2020time}, and traffic management \citep{tedjopurnomo2020survey}. 
Despite its importance, accurate forecasting is challenging due to the complex temporal structures of real-world time series \citep{kolambe2024forecasting}.
Inspired by the success of deep learning, recent work has introduced neural forecasting methods, including MLP-based methods \citep{ekambaram2023tsmixer, zeng2023transformers, yi2023frequency, challu2023nhits} and Transformer-based methods \citep{nie2023a, liu2024itransformer, shi2025timemoe}.

Within this scope, \emph{multi-scale modeling} has emerged as a powerful design principle.
By constructing temporal dynamics at multiple resolutions, it explicitly models cross-scale interactions to enhance predictive performance.
Specifically, coarse-grained scaling captures long-term dynamics, whereas fine-grained scaling preserves high-frequency fluctuations, yielding a more comprehensive representation \citep{chen2023multi}.
Building upon this principle, recent work has introduced diverse strategies from simple multi-rate sampling to hierarchical and frequency-aware mechanisms \citep{zhao2024himtm, wang2024timemixer,wang2025timemixer, naghashi2025multiscale, hu2025adaptive,yang2025multi}.

\begin{figure*}[t]
\centering
\includegraphics[width=\textwidth,
alt={Figure 1 compares three common scaling operators for time-series data: mean pooling, moving average, and subsampling. Each operator is applied to the same original periodic signal using scale values s=1, s=2, and s=4. As the scale increases, the transformed sequences become coarser and lose fine-grained temporal variation. The figure illustrates that these operators apply a fixed discrete scale uniformly across all timesteps, which can misalign the resulting representation with local temporal structures such as dominant periods or decay rates.}]{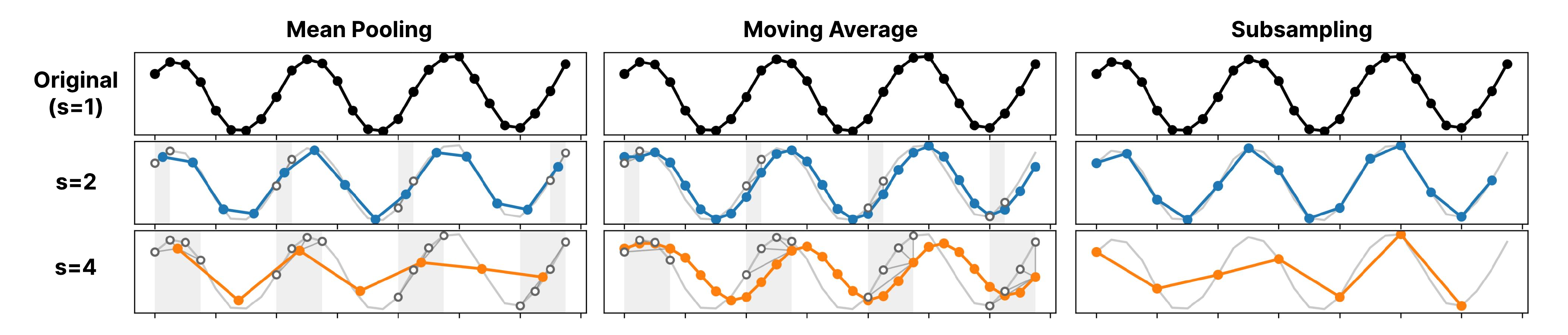}
\caption{Examples of popular scaling operators used in existing methods. 
Each operator applies a discrete scaling parameter $s$ uniformly across all timesteps to transform the input into coarser representations, often creating an abstraction that is mismatched with the dominant periods or decay rates of the time series.
Our goal is to design a learnable, dynamic scaling through a generalized framework.
}
\label{fig:scale}
\vspace{-7pt}
\end{figure*}

We visualize popular scaling operators in Figure \ref{fig:scale}.
Each scaling operator takes an integer scale parameter $s$ to transform the given sequence and applies the same scale uniformly across all timesteps within a single transformation.
This reveals a key structural limitation: scale is discretized and shared across all timesteps.
In practice, characteristic time scales, such as dominant periods or decay rates, are real-valued and may vary over time.
As a result, this design restricts the ability to represent temporal dynamics that vary smoothly across resolutions and introduces implicit boundaries between representations at different scales.
Since most existing approaches rely on these scaling operators \citep{wang2023micn,wang2024timemixer,hu2025adaptive}, such limitations are inherent to their multi-scale constructions.

To make these limitations explicit and to address them systematically, we first define a \emph{family of scaling operators} that unifies the scaling methods studied in the literature (see Table \ref{tab:scale-operators}). Within this framework, we define two fundamental requirements, \emph{non-expansiveness} and \emph{energy reduction}, that any valid scaling operator must satisfy. We further show that trivial operators on sequences, such as constant mappings or permutations, fail to meet these criteria.

Building on this unified view, we propose \method (\methodlong), a principled framework for multi-scale modeling.
Specifically, we introduce a novel concept of a \emph{generalized scaling operator family}, which generalizes existing scaling operators by allowing continuous scale parameters while preserving their structural properties.
\method adopts a learnable discrete Gaussian (LDG) kernel grounded in scale-space theory as an instance of such a generalized scaling operator family.
As a result, it allows for multiple, continuous, and learnable scale parameters within a single operator, preserving a coherent cross-scale structure directly at the representation level.

We evaluate \method extensively on long- and short-term forecasting benchmarks and observe consistent improvements over state-of-the-art multi-scale models.
In particular, on long-term tasks, \method achieves the best performance in 13 out of 16
evaluation settings with up to 7.8\% lower error than the strongest baselines.
Beyond accuracy, \method delivers clear efficiency gains, reducing memory usage by up to 3.8$\times$ and increasing training speed by 5.3$\times$.
We further support these findings with comprehensive ablation and case studies, confirming that \method provides a principled and efficient framework for multi-scale time-series modeling.

Our contributions are summarized as follows:
\begin{itemize}[topsep=0pt, itemsep=0pt,leftmargin=10pt]
        \item \textbf{Unified Framework:} We introduce a novel concept of a scaling operator family, which provides a principled foundation for multi-scale time-series modeling and clarifies the structural limitations of existing approaches.

    \item \textbf{Generalized Architecture:} 
    We propose \method, a principled architecture for multi-scale modeling. 
    It enables 
    distance-aware scaling through the learnable discrete Gaussian 
    kernel with dynamic scaling parameters.
    
    \item \textbf{Empirical Validation:} We show that \method achieves state-of-the-art performance on extensive long- and short-term forecasting benchmarks while substantially reducing computational complexity compared to prior multi-scale models.
    We further conduct comprehensive ablation and case studies to provide deeper insights on \method.
\end{itemize}
\begin{table*}[t]
\centering
\caption{Representative scaling operator families with associated scale parameters $s$.
Each family simplifies the sequence in a structured manner as the scale parameter $s$ increases, where $k \in [1, L_s]$ is the element index and $L_s$ denotes the length of the output sequence.
For wavelet decomposition, $f(\vx|s)$ yields the approximation coefficients at level $s$ using the scaling function $\phi$.}

\label{tab:scale-operators}

\begin{tabular}{lllll}
\toprule
Operator $f$ & Equation & Scale parameter $s$ & $L_s$ & Usage\\
\midrule
Average pooling & $[f(\vx|s)]_k = (1/s)\sum_{i=0}^{s-1} x_{ks+i}$ & Window length & $P/s$ &\citet{wang2024timemixer} \\
Max pooling & $[f(\vx|s)]_k = \max_{0 \le i < s} x_{ks+i}$ & Window length & $P/s$ & \citet{challu2023nhits}\\
Moving average & $[f(\vx|s)]_k = (1/s)\sum_{i=0}^{s-1} x_{k-i}$ & Window length & $P$ & \citet{wang2023micn} \\
Subsampling & $[f(\vx|s)]_k = x_{ks}$ & Stride & $P/s$ & \citet{liu2022scinet} \\
Segmentation & $[f(\vx|s)]_k = x_k$ & Length divisor & $P/s$ & \citet{wu2023timesnet} \\
Wavelet decomposition & $[f(\vx|s)]_k=\langle \vx, \phi_{s,k}\rangle$ & Decomposition level & $P/2^s$ & \citet{murad2025wpmixer} \\
\bottomrule
\end{tabular}
\vspace{-7pt}
\end{table*}

\section{Problem Definition and Related Work}
\label{sec:related-work}

\mypar{Problem Definition}
Given a time series, let $\vx \in \mathbb{R}^{L}$ denote an input sequence of length $L$, and let $\vy \in \mathbb{R}^{T}$ denote the corresponding future trajectory of length $T$. We denote the input space as $\gX \subset \mathbb{R}^{L}$ and the target space as $\gY \subset \mathbb{R}^{T}$.
Our goal is to learn a forecasting model $h:\gX \to \gY$ that maps a past window $\vx$ to a $T$-step forecast of its future $\vy$. 

 \mypar{Assumption on Data}
Throughout this work, we focus on non-trivial datasets whose forecastability $\phi$ is strictly less than $1$. The forecastability of a time series is defined as one minus the normalized spectral entropy of its Fourier decomposition~\citep{goerg2013forecastable}, formally given as
\begin{align}
\label{eqn:forecastability}
   \phi(\vx) &= 1 - \frac{H(\vx)}{\log(2\pi)} \in [0,1],\\ 
   H(\vx) &= - \int_{-\pi}^{\pi} p_{\vx}(\omega)\,\log p_{\vx}(\omega)\,d\omega,
\end{align} 
where $p_{\vx}(\omega)$ denotes the normalized spectral density of $\vx$ on $[-\pi,\pi]$, characterizing how concentrated the spectrum of $\vx$ is.
Most real-world datasets, including every benchmark used in our experiments, satisfy $\phi < 1$ (see Appendix~\ref{sec:exp-detail}).
This serves as a mild theoretical assumption and does not restrict the practical applicability of our framework.
In general, 
larger forecastability $\phi$ indicates a more concentrated spectrum and thus a stronger predictable structure~\citep{wang2024timemixer,wang2025timemixer}, which is desirable for forecasting.

\mypar{Time Series Forecasting}
The field of time series forecasting has seen a significant evolution from traditional statistical methods to modern deep learning architectures. While statistical models like ARIMA \citep{box2015time} and state-space models \citep{durbin2012time} are still used, they are often limited in their ability to capture complex nonlinear and long-range dependencies \citep{zhou2021informer}.
Among deep forecasting models that learn representations directly from raw sequences, multi-layer perceptron architectures have recently been recognized for their effectiveness and efficiency \citep{ekambaram2023tsmixer, zeng2023transformers, yi2023frequency, challu2023nhits}. Concurrently, Transformer-based approaches have demonstrated strong empirical results, particularly for long-horizon forecasting tasks \citep{nie2023a, liu2024itransformer, shi2025timemoe}.

\mypar{Multi-Scale Design in Time Series}
Time series often exhibit complex patterns across diverse temporal scales, from rapid fluctuations to long-term trends. To capture such heterogeneity, recent work has incorporated explicit multi-scale mechanisms to disentangle and integrate temporal dependencies. These methods include hierarchical decompositions that construct multi-resolution representations via downsampling \citep{liu2022pyraformer,challu2023nhits}, frequency-domain or wavelet-based analyses that emphasize periodic structure through spectral transforms \citep{wu2023timesnet,cai2024msgnet,murad2025wpmixer}, and aggregation schemes that fuse representations across scales \citep{wang2023micn,wang2024timemixer,wang2025timemixer}.
However, existing multi-scale methods rely on fixed and discrete scaling strategies to generate multi-scale representations, which limits adaptability to diverse and continuously varying temporal patterns.
This highlights the need for more flexible and principled approaches to multi-scale modeling.

\section{Foundations of Scaling in Time Series}
\label{sec:scale}

In time series analysis, \emph{scaling} denotes transformations that change the temporal resolution of time series.
As illustrated in \Cref{fig:scale}, existing methods employ common operations such as downsampling and moving averages with discrete scale parameters to produce representations at different resolutions.
However, these operations are typically introduced heuristically, offering limited insight into why particular scales are appropriate or how they relate to one another.

To address this gap, we introduce a rigorous definition of \emph{scaling operator families}, providing a unified mathematical framework for analyzing scaling operations in time series. To the best of our knowledge, this is the first work to establish such a foundation.
All proofs are given in Appendix~\ref{sec:proofs}.

\begin{definition}[Scaling operator family]
\label{def:scale-op}
    A set of transformations $\gF = \{ f(\vx|s) \mid s\in\sZ_+ \}$ such that $f(\cdot|s):\gX \to\gX_s\subset \sR^{L_s\vphantom{y}}$, conditioned on a positive integer $s \in \sZ_+$, is a scaling operator family if it satisfies
    \begin{itemize}[nosep, leftmargin=20pt]
        
        \item \emph{Non-expansiveness}: For  $s_i\in\sZ_+$ and $\vx,\vx'\in\gX$, $\|f(\vx|s_i)-f(\vx'|s_i)\|_2 \le \|\vx-\vx'\|_2.$

        \item \emph{Energy reduction}: For any $s_i, s_j\in\sZ_+$ such that $s_i=m\cdot s_j$ for some $m>1$, $\|f(\vx|s_i)\|_2 \le \|f(\vx|s_j)\|_2$ for all $\vx\in\gX$
        and $\|f(\vx'|s_i)\|_2 < \|f(\vx'|s_j)\|_2$ for some $\vx'\in\gX$.

    \end{itemize}
    The scale parameter $s$ determines the level of scaling and also the output dimension $L_s$.
\end{definition}

\begin{theorem}
\label{thm:scale-op}
    Popular sequence operations used in recent work \citep{liu2022scinet, challu2023nhits, wu2023timesnet, wang2023micn, wang2024timemixer, murad2025wpmixer}, such as max- or mean-pooling~\citep{zheng2014time},
    subsampling~\citep{hannan2009multiple}, 
    moving averages~\citep{box2015time}, segmentation~\citep{keogh2004segmenting}
    and wavelet decompositions~\citep{percival2000wavelet}, are scaling operator families.
\end{theorem}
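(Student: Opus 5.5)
The plan is to verify the two axioms of Definition~\ref{def:scale-op} separately for each operator in Table~\ref{tab:scale-operators}, grouping them by structure. Throughout I assume the length $P=L$ is divisible by the relevant scales, so that the operators in Table~\ref{tab:scale-operators} are well defined. The operators split into three types. \emph{Linear averaging} operators are average pooling, moving average and the wavelet approximation. \emph{Coordinate-selection} operators are subsampling and segmentation. \emph{Max pooling} is the one nonlinear case.

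\textbf{Non-expansiveness.} For the linear operators, $f(\vx|s)-f(\vx'|s)=f(\vx-\vx'|s)$, so it suffices to bound the operator norm by $1$.
\begin{itemize}[nosep, leftmargin=15pt]
\item For average pooling, Cauchy--Schwarz gives $\big(\tfrac1s\sum_i u_{ks+i}\big)^2\le \tfrac1s\sum_i u_{ks+i}^2$. Summing over the disjoint windows yields $\|f(\vu|s)\|_2^2\le \tfrac1s\|\vu\|_2^2\le\|\vu\|_2^2$.
\item For the moving average with zero padding, Young's inequality for convolution with a kernel of $\ell_1$ norm $1$ gives norm at most $1$.
\item For the wavelet approximation, the coefficients are inner products with an orthonormal family $\{\phi_{s,k}\}_k$. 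Bessel's inequality gives $\|f(\vu|s)\|_2\le\|\vu\|_2$.
\item Subsampling and segmentation select a subset of coordinates, so they are trivially $1$-Lipschitz.
\item For max pooling, I use $|\max_i a_i-\max_i b_i|\le\max_i|a_i-b_i|$. Squaring this and bounding the maximum by the window sum, then summing over the disjoint windows, gives the claim.
\end{itemize}

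\textbf{Energy reduction, weak inequality.} Let $s_i=m s_j$. The key idea is a \emph{semigroup/composition} structure: I want to write $f(\cdot|s_i)=g\circ f(\cdot|s_j)$ with $g$ a norm-nonincreasing map.
\begin{itemize}[nosep, leftmargin=15pt]
\item Average pooling at $ms$ equals average pooling by $m$ applied to pooling at $s$.
\item Max pooling composes in the same way.
\item Subsampling at stride $ms$ is subsampling by $m$ of the stride-$s$ output.
\item Segmentation keeps the first $L/s$ entries, so $L/(ms)$ entries form a prefix of them.
\item Wavelet approximation at a coarser level is obtained from the finer-level approximation by the orthogonal low-pass filter bank, which is a partial isometry, so Bessel's inequality applies again.
\end{itemize}
Applying the non-expansiveness just established to $g$ with $\vx'=0$, using $g(0)=0$, then gives $\|f(\vx|s_i)\|_2\le\|f(\vx|s_j)\|_2$.

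The moving average does not factor exactly in this way. For it I would instead argue that the window-$ms$ average is an average of $m$ shifted window-$s$ averages. Concretely, $\mathrm{MA}_{ms}=\tfrac1m\sum_{r=0}^{m-1}S^{rs}\,\mathrm{MA}_s$, where $S$ is the zero-padded shift. Since each shift is norm-nonincreasing, the triangle inequality gives the bound.

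\textbf{Energy reduction, strict inequality.} This requires exhibiting a witness $\vx'$ for each operator. Typically a single spike works. For example, take $\vx'=e_1$ (or $e_{s_j}$ for subsampling, so that the index survives at stride $s_j$ but not at $s_i$).
\begin{itemize}[nosep, leftmargin=15pt]
\item For average pooling, the norm drops from $1/s_j$ to $1/s_i$.
\item For max pooling, a vector that is positive in two coarse-window-merged fine windows works: for instance ones in the first $2s_j$ entries, so the norm drops from $\sqrt2$ to $1$ when $m\ge2$.
\item For segmentation, a spike located in the dropped tail works.
\item For the moving average, the spike gives norms $1/\sqrt{s_j}$ versus $1/\sqrt{s_i}$ if zero padding keeps the support inside the window.
\item For wavelets, I would take a detail (wavelet) function at level $s_i$, which has positive level-$s_j$ approximation energy but is annihilated at level $s_i$.
\end{itemize}

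I expect the main obstacle to be the boundary conventions. These include padding for the moving average, the meaning of $m>1$ when $m$ is not an integer (I would restrict to integer $m$, as the definition implicitly does), and the precise wavelet recursion. The moving-average decomposition into shifts under truncation is the step I would check most carefully.
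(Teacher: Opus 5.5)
Your proposal is correct, and for pooling, subsampling, segmentation and wavelets it is essentially the paper's proof. The paper likewise writes $f(\cdot|s_i)=G_{i,j}\circ f(\cdot|s_j)$ with a non-expansive $G_{i,j}$ and uses $G_{i,j}(\mathbf{0})=\mathbf{0}$ (Lemmas~\ref{lem:nonexp-energy} and~\ref{lem:comp}). It also uses the same Cauchy--Schwarz, Lipschitz-max, coordinate-projection and orthogonal-projection bounds for non-expansiveness.

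The one genuinely different step is the moving average. The paper works in the frequency domain: it shows $|H_{ms}(\omega)|\le|H_s(\omega)|$ pointwise from $|\sin(m\alpha)|\le m|\sin\alpha|$ and integrates against $|X(\omega)|^2$. You use the time-domain identity $\mathrm{MA}_{ms}=\frac1m\sum_{r=0}^{m-1}S^{rs}\,\mathrm{MA}_s$ instead. These are the same inequality in two guises, since $\frac1m|\sin(m\alpha)|/|\sin\alpha|$ is exactly the gain of your dilated averaging map $G=\frac1m\sum_{r}S^{rs}$. So your remark that the moving average ``does not factor'' is too pessimistic. $G$ is a non-expansive linear map, and $\mathrm{MA}_{ms}=G\circ\mathrm{MA}_s$ falls under the same composition lemma as every other operator.

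Your version is exact on the length-$P$ output with causal zero padding. Strictly speaking, the paper's relation $Y_s=H_sX$, and hence its Parseval computation, holds for the untruncated convolution on $\mathbb{Z}$, not for the truncated output in Table~\ref{tab:scale-operators}. The paper's version, in exchange, describes all inputs with strict inequality: those with spectral mass where $|H_{s_i}|<|H_{s_j}|$. Your explicit spike witnesses give only the single witness Definition~\ref{def:scale-op} requires, which is enough; the paper does the same for average pooling via surjectivity.

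Your restriction to integer $m$ is necessary, not just a convention. For average pooling with $s_j=2$ and $s_i=3$, the sequence $(1,-1,1,-1,1,-1)$ has zero norm at scale $2$ but norm $\sqrt{2}/3$ at scale $3$. The paper's proofs likewise only treat integer $m$.
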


\begin{figure}[t]
    \centering
    \includegraphics[
        width=0.9\linewidth,
        alt={Figure 2 shows two line plots evaluating the six scaling operator families on the Traffic dataset: moving average, average pooling, max pooling, subsampling, segmentation, and wavelet decomposition. The left plot measures non-expansiveness by showing the average distance between transformed sequences across scales s = 1, 2, 4, 8, and 16. For all six operators, this distance decreases as the scale increases. The right plot measures energy reduction by showing the average squared norm of each transformed sequence across the same scales. For all six operators, the energy also decreases as the scale increases. Overall, the figure empirically confirms that the six operator families satisfy the non-expansiveness and energy reduction properties defined in Definition 3.1.}
    ]{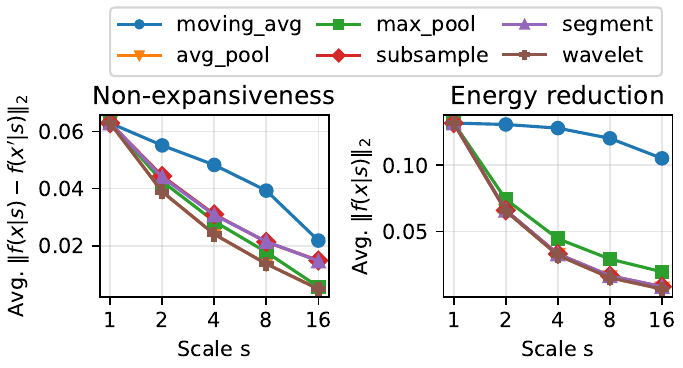}

    \vspace{-5pt}

    \caption{The non-expansiveness and energy reduction of the six scaling operator families on the Traffic dataset.
    {All these operators satisfy the two essential properties stated in Definition~\ref{def:scale-op}.}}

    \vspace{-10pt}
    \label{fig:traffic_scaling_results}
\end{figure}

Definition~\ref{def:scale-op} formalizes the expected behavior of scaling.
Scaling operators are  
\emph{non-expansive}, as they are designed to capture diverse properties from a sequence without adding external information.
At the same time, the \emph{energy reduction} property formalizes the intuition that time series with coarser scales should be simpler than finer ones along multiplicative chains of scale parameters: as the scale parameter $s$ increases, the result exhibits less overall energy, with small fluctuations being smoothed out.

\Cref{tab:scale-operators} shows the operator families stated in \Cref{thm:scale-op} with their associated scale parameters.
These operators can be grouped into two categories based on how they transform the given sequence: \emph{(i) smoothing operators}, which attenuate fine-grained variability by summarizing adjacent observations (e.g., pooling and moving average), and \emph{(ii) structural operators}, which progressively decompose or resample the signals by enforcing a specific structure (e.g., subsampling and wavelet decomposition).
Previous works have typically used the term scaling \emph{only for one group}, but our Definition~\ref{def:scale-op} suggests that both categories of operators can be understood through the same lens in terms of the non-expansiveness and energy reduction properties.

To empirically verify that the operator families in \Cref{tab:scale-operators} satisfy both properties stated in Definition~\ref{def:scale-op}, 
we evaluate their average pairwise contraction and induced average energy on the Traffic dataset. 
As shown in \Cref{fig:traffic_scaling_results}, all operator families exhibit strictly smaller output differences than input differences, confirming non-expansiveness, 
and their energies decrease monotonically over multiplicative scales, confirming energy reduction.  
Additional experiments on other datasets are provided in Appendix~\ref{sec:scale-op-prop}.

\begin{theorem}
\label{thm:not-scale-op}
    Trivial sequence operations \citep{horn2012matrix} such as constant mappings, permutations, additive shifts, scalar multiplications, and general linear transformations do not form scaling operator families.
\end{theorem}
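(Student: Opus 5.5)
Theorem~\ref{thm:not-scale-op} is a negative statement, so the plan is to treat each operation as a one-parameter family $\gF=\{f(\cdot|s)\}$ and show that at least one condition of Definition~\ref{def:scale-op} fails. Since the statement is about families indexed by $s$, we must fix how $s$ enters each trivial operation. We take the most general parametrization: the operation may depend on $s$ arbitrarily, but each $f(\cdot|s)$ belongs to the stated class. Throughout we assume $\gX$ is nontrivial, for instance closed under scaling, or containing two distinct points and a nonzero point. The recurring tool is the strict part of energy reduction: there must exist $s_i=m s_j$ with $m>1$ and some $\vx'$ such that $\|f(\vx'|s_i)\|_2<\|f(\vx'|s_j)\|_2$.

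\emph{Constant mappings} $f(\vx|s)=\vc_s$. Non-expansiveness holds trivially. Energy reduction along the chain $s,2s,4s,\dots$ forces $\|\vc_{s}\|_2$ to be nonincreasing. The strict inequality must then hold for some pair, yet it cannot depend on $\vx'$. This is consistent only if we allow $\vc_s$ to vary, so the interpretation matters. The natural reading in the paper is that a constant mapping is a single fixed map applied at every scale, $f(\vx|s)=\vc$. Then $\|f(\vx|s_i)\|_2=\|f(\vx|s_j)\|_2$ for all $\vx$, and strict reduction fails.

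\emph{Permutations.} Here $f(\vx|s)=P_s\vx$ with $P_s$ a permutation matrix, so $\|P_s\vx\|_2=\|\vx\|_2$ for every $s$. Energy is exactly preserved, so the strict inequality fails even though non-expansiveness holds with equality.

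\emph{Additive shifts.} Here $f(\vx|s)=\vx+\vb_s$. Take $\vx'=-\vb_{s_j}$ (or a point of $\gX$ near it, using that $\gX$ is rich enough). Then $\|f(\vx'|s_j)\|_2=0$, and the non-strict inequality $\|f(\vx'|s_i)\|_2\le 0$ forces $\vb_{s_i}=\vb_{s_j}$. Propagating this along the chain makes all shifts equal, so energy is constant in $s$ and strictness fails. If instead $\gX$ is only assumed to be a general subset, we argue via the asymptotics $\|\vx+\vb\|_2-\|\vx\|_2\to\langle \vx/\|\vx\|,\vb\rangle$. Applying this in directions $\pm\vx$ forces $\vb_{s_i}=\vb_{s_j}$ in the relevant sense.

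\emph{Scalar multiplications.} Here $f(\vx|s)=a_s\vx$. Non-expansiveness requires $|a_s|\le1$. Energy reduction with strictness is achievable if $|a_{2s}|<|a_s|$, so the family must be read as using a fixed multiplier $a$ independent of $s$ (as with constants). Then either $|a|>1$, which violates non-expansiveness by taking $\vx\neq\vx'$, or $|a|\le1$, in which case energy is equal across scales and strictness fails.

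\emph{General linear transformations.} Here $f(\vx|s)=W\vx$ with a single fixed $W$ (nonsingular, so that it is a genuine transformation in the sense of $GL$). If $\|W\|_2>1$, choosing $\vx-\vx'$ along the top singular vector breaks non-expansiveness. Otherwise energy is identical across $s$ and strict reduction fails. As a remark, this class contains permutations and scalar multiples, so those cases can be read as corollaries.

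The main obstacle is this modeling issue. If $s$-dependent parameters are allowed, then families such as $f(\vx|s)=2^{-s}\vx$ do satisfy Definition~\ref{def:scale-op}. The proof must therefore fix the convention that a "trivial operation" means the same map applied regardless of $s$, or, for constants and shifts, use the argument above that forces $s$-independence. We would state this convention explicitly at the outset. With it in place, each case reduces to the observation that $\|f(\vx|s)\|_2$ does not depend on $s$, contradicting strict energy reduction, or to an operator-norm violation of non-expansiveness.
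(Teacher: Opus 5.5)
Your proposal is correct and follows the paper's proof. In each case the map is held fixed in $s$ (only the permutation may vary), so $\|f(\vx|s)\|_2$ is identical at every scale and the strict part of energy reduction fails; your explicit remark that $s$-dependent parameters such as $f(\vx|s)=2^{-s}\vx$ would satisfy Definition~\ref{def:scale-op} usefully makes precise a convention that the paper's proof uses only implicitly.

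Two side pieces are superfluous or off, though neither affects the main line under that convention. The non-expansiveness case splits for scalar and linear maps are unnecessary, since energy is $s$-invariant regardless of $|a|$ or $\|W\|_2$. The asymptotic fallback for shifts on a ``general subset'' $\gX$ does not work as stated: on $\gX=\mathrm{span}(e_1)$, the family $\vx+2^{-s}e_2$ is isometric and has strictly decreasing energy.
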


\Cref{thm:not-scale-op} highlights that not all parameterized transformations qualify as scaling operator families. 
For example, while a scalar multiplication can change the scale of each observation, it does not guarantee energy reduction and thus falls outside our definition.
Putting \Cref{thm:scale-op} and  \Cref{thm:not-scale-op} together, we claim that our definition of scaling operator families is designed carefully to include only the ones that have been considered ``scaling'' in the literature, providing more interpretable and theoretically grounded approaches to multi-scale time series modeling.

\section{\method: Generalized Multi-Scale Modeling}
\label{sec:generalized-scale-genearator}

We introduce \method, a principled approach to multi-scale time series modeling.
We first generalize the family of scaling operators to support multiple continuous scale parameters (Sec. \ref{ssec:generalized}).
Then, we introduce the learnable discrete Gaussian (LDG) kernel grounded in the scale-space theory as an effective instantiation of the generalized scaling operator (Sec. \ref{ssec:ldg}).
Lastly, we introduce our implementation design based on these theoretical foundations (Sec. \ref{ssec:impl}).
Detailed proofs for all theorems are provided in Appendix~\ref{sec:proofs}.

\subsection{Generalized Scaling Operator Family}
\label{ssec:generalized}

The unified foundation for scaling operator families reveals their essential restriction: they are parameterized by a single, discrete scale parameter $s \in \sZ_+$ that is fixed a priori and applied uniformly across the sequence.
Such a restriction often makes scaling operators misaligned with real-world data, where characteristic time scales vary across datasets and may evolve within a single sequence.
To address this limitation, we generalize Definition \ref{def:scale-op} to allow multiple continuous scale parameters $\vs \in \RplusM$ for each $f$.

\begin{definition}[Generalized scaling operator family]
\label{def:ext-scale-op}
A set of transformations $\gF = \{ f(\vx|\vs) \mid \vs\in\RplusM\} $ such that $f(\cdot|\vs):\gX \to\gX_\vs\subset \sR^{L_\vs}$ is a generalized scaling operator family if it satisfies
\begin{itemize}[nosep, leftmargin=20pt]
    \item \emph{Consistency}: For $s\in \sZ_+$, $\{f(\vx|s\mathbf{1})\}$ is a scaling operator family.
    \item \emph{Differentiability}: For any $\vx\in\gX$, $f(\vx|\vs)$ is continuously differentiable with respect to $\vs$.
\end{itemize}
The output dimension $L_\vs$ depends on the set of scale parameters $\vs=(s_1,\cdots, s_M)$.
\end{definition}

The consistency condition ensures that Definition \ref{def:ext-scale-op} strictly generalizes discrete scaling operator families.
The differentiability condition guarantees that changes in scale parameters induce smooth changes in the transformed signal.

\begin{theorem}
\label{thm:quantization-gap}
    Let $\gS_d\subset \sZ^+$ be the discrete set of scale parameters and $\gS_c\subset \sR_+^M$ be a compact set containing $\gS_d$. For a given scaling operator family $f$ and forecastability measure $\phi$, define $g_\vx(\vs)=\phi(f(\vx|\vs))$,
    \begin{align*}
        \Phi_c(\vx)
        =\max_{\vs\in\gS_c} g_\vx(\vs),\quad and \quad \Phi_d(\vx)
        =\max_{s\in\gS_d} g_\vx(s\mathbf{1}).
    \end{align*}
    Let $\Pi_\mathbf{1}=\{\vu\in\mathbb{R}^M: \vu^\top\mathbf{1}=0,\|\vu\|_2=1\}$. Then,
    \begin{align*}
        \Phi_c(\vx)-\Phi_d(\vx)\ge C(\vx)\sup_{\vu\in\Pi_\mathbf{1}} \langle \nabla_\vs g_\vx(\vs)|_{\vs=t\mathbf{1}},\vu\rangle^2,
    \end{align*}
    for all $\vx\in\gX$, with some constant $C(\vx)>0$ and $t\in\mathbb{R}$.
    
\end{theorem}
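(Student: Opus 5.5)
Let $s^\star\in\gS_d$ attain $\Phi_d(\vx)$ and take $t=s^\star$, so that $\Phi_d(\vx)=g_\vx(t\mathbf{1})$. The plan is to show that moving off the diagonal $\{s\mathbf{1}\}$ in a direction orthogonal to $\mathbf{1}$ raises $g_\vx$ by an amount quadratic in the directional derivative. Then $\Phi_c(\vx)$, being a maximum over a compact set that contains such perturbed points, exceeds $\Phi_d(\vx)$ by at least that amount. This is a standard first-order ascent (sufficient-decrease) argument.

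\textbf{Step 1 (regularity).} By the differentiability condition of Definition~\ref{def:ext-scale-op}, $\vs\mapsto f(\vx|\vs)$ is $C^1$. We need $g_\vx=\phi\circ f(\vx|\cdot)$ to be $C^1$ with a locally Lipschitz gradient near $t\mathbf{1}$, with Lipschitz constant $\beta(\vx)$. For $\phi$ this follows from smoothness of the normalized spectral density in the signal away from degenerate spectra: $\phi(\vx)<1$ keeps us away from the singular, fully concentrated case. I will state the local $L$-smoothness of $g_\vx$ as the working assumption and absorb its constant into $C(\vx)$. I expect this to be the main obstacle, because the definition only gives $C^1$ and not $C^{1,1}$. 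If needed, I would fall back on the mean value theorem together with uniform continuity of $\nabla g_\vx$ on the compact set $\gS_c$, which yields a modulus-of-continuity version with a constant that still depends only on $\vx$.

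\textbf{Step 2 (ascent step).} Fix $\vu\in\Pi_\mathbf{1}$, write $a=\langle\nabla_\vs g_\vx(t\mathbf{1}),\vu\rangle$, and set $\vs_\eta=t\mathbf{1}+\eta\,\mathrm{sign}(a)\vu$. The descent lemma gives
\begin{align*}
g_\vx(\vs_\eta)\ge g_\vx(t\mathbf{1})+\eta|a|-\tfrac{\beta}{2}\eta^2 .
\end{align*}
Choosing $\eta=|a|/\beta$ gives an increase of at least $a^2/(2\beta)$. The step has to stay inside $\gS_c$. Since $t\mathbf{1}$ is a point of the compact set, we need a radius $r(\vx)>0$ with $B(t\mathbf{1},r)\subset\gS_c$; I will take $\gS_c$ to contain such a neighborhood, for instance a box around the diagonal segment. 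If $|a|/\beta>r$, use $\eta=r$ instead. Because $|a|$ is bounded by $G=\sup_{\gS_c}\|\nabla g_\vx\|$, the increase is then at least $r|a|/2\ge (r/2G)a^2$. Together the two cases give the bound with $C(\vx)=\min\{1/(2\beta),\,r/(2G)\}>0$.

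\textbf{Step 3 (conclude).} We have $\Phi_c(\vx)\ge g_\vx(\vs_\eta)\ge\Phi_d(\vx)+C(\vx)a^2$ for every $\vu\in\Pi_\mathbf{1}$, with $C$ independent of $\vu$. Taking the supremum over $\vu$ gives the claim. The maximum $\Phi_c$ exists because $g_\vx$ is continuous on the compact set $\gS_c$.

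Restricting to $\vu\perp\mathbf{1}$ is what makes the bound meaningful. Along $\mathbf{1}$, a continuous scale could already improve on integer scales, but the orthogonal directions are the ones that no choice of single shared scale can ever reach. The bound therefore quantifies the gain that comes specifically from using multiple scale parameters.
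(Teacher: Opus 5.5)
Your proposal is correct and follows essentially the paper's argument. The paper also lower-bounds $g_\vx(t\mathbf{1}+\tau\vu)$ by the concave quadratic $g_\vx(t\mathbf{1})+\tau a-\frac{\beta}{2}\tau^2$, obtaining it from a Taylor remainder with a Hessian envelope $\beta(\vx)$ where you use the descent lemma. It then maximizes that quadratic to get $a^2/(2\beta)$ and concludes from $\Phi_c\ge g_\vx(t\mathbf{1}+\tau\vu)$ and $g_\vx(t\mathbf{1})\ge\Phi_d$. Your $\eta=r$ case is in fact more careful than the paper, whose claim that the maximum over $\tau\in[0,1]$ is at least $a^2/(2\beta)$ fails when $a>\beta$; your constant $\min\{1/(2\beta),\,r/(2G)\}$ is exactly the repair. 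Both versions rely on the same unstated assumptions: second-order regularity of $g_\vx$, and that $\gS_c$ contains a neighbourhood of $t\mathbf{1}$ in the directions of $\Pi_\mathbf{1}$.
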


Theorem \ref{thm:quantization-gap} formalizes a quantization gap in multi-scale forecasting. 
The set $\Pi_\mathbf{1}$ consists of directions that lie outside the representational capacity of a discrete scaling operator family. 
If the forecastability function $g$ increases along any direction
in $\Pi_\mathbf{1}$, then this improvement cannot be captured by any $s\in\gS_d$. In such cases, a generalized scaling operator family yields strictly larger forecastability.
Intuitively, restricting scale parameters to discrete values introduces an irreducible mismatch between the operator's effective scale and the signal's intrinsic temporal structure;
discrete scaling operator families incur an irreducible loss in forecastability relative to generalized families.

\begin{figure}[t]
\centering
\includegraphics[
width=\linewidth,
alt={Figure 3 empirically validates Theorem 4.2 on the Traffic dataset using two plots. The left plot shows forecastability as a function of the scale parameter s. Forecastability increases from small scales, reaches its maximum at the non-discrete scale s = 12.4, and then slightly decreases, showing that the best scale is not contained in a fixed discrete scale set. The right plot compares the expressivity gap with the theoretical lower bound for individual samples. Most points lie above the diagonal reference line, indicating that the observed expressivity gap exceeds the lower bound predicted by Theorem 4.2.}
]{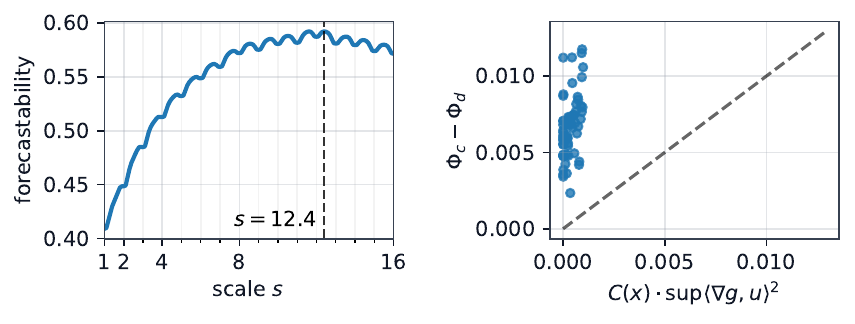}
\caption{
Empirical validation of Theorem~\ref{thm:quantization-gap} on the Traffic dataset. Forecastability is maximized at a non-discrete scale, and the expressivity gap $\Phi_c - \Phi_d$ exceeds the theoretical lower bound.
}
\label{fig:theorem42-traffic}
\vspace{-7pt}
\end{figure}

To empirically validate Theorem~\ref{thm:quantization-gap}, 
we evaluate the average forecastability and quantization gap on the Traffic dataset. 
As shown in Figure~\ref{fig:theorem42-traffic}, the left panel indicates that forecastability is maximized at a non-discrete scale, implying that the maximum forecastability over the continuous scale set, $\Phi_c$, exceeds that over the discrete scale set, $\Phi_d$. Furthermore, the right panel shows that the quantization gap, $\Phi_c-\Phi_d$, consistently exceeds the theoretical lower bound across all samples $\vx$, thereby providing empirical support for Theorem~\ref{thm:quantization-gap}. Additional results on the other datasets are provided in Appendix~\ref{sec:thm-42}.

As Definition \ref{def:ext-scale-op} is not intuitive at first glance, it may be nontrivial to design an operator that satisfies both properties.
We thus provide a guideline for designing generalized scaling operator families in Appendix~\ref{sec:design-guide-ext} (e.g., generalized mean-pooling). The key idea is to obtain generalized operators through a smooth expansion of discrete scaling operators with respect to their scale parameters. Concretely, this is realized by interpolating discrete operators through smooth functions of scale, ensuring exact recovery at integer scales and preserving the differentiability condition. 

\subsection{Learnable Discrete Gaussian as a Scaling Operator}
\label{ssec:ldg}

Among various constructions that can realize a generalized scaling operator family, we propose adopting the \emph{learnable discrete Gaussian} (LDG) kernel $k$~\citep{lindeberg2002scale}:

\begin{align}
    k(\vx | \vs) = \mK(\vs)\vx,
\qquad 
[\mK(\vs)]_{i,j} = e^{-s_d} I_{d}(s_d),
\label{eq:ldg}
\end{align}

where $\mK(\vs) \in \sR^{L \times L}$ is the kernel matrix with $L$ being the length of the sequence $\vx$, $[\cdot]_{i,j}$ denotes the $(i, j)$-th element of a matrix, 
$I_n(\cdot)$ is the modified Bessel function of the first kind of integer order $n$,
and $d = |i-j|$ denotes the temporal distance between positions $i$ and $j$.

Intuitively, the LDG kernel in \Eqref{eq:ldg} performs a smooth, locality-aware aggregation of the input sequence.
Each output element is computed as a weighted average of its neighboring elements with symmetrically decaying weights.
The parameter 
$s_d$ controls how broadly information is aggregated 
at distance $d$:
small values concentrate
the weights near the center, preserving local variations, while larger values spread the weights over a wider
temporal neighborhood, producing smoother representations.

Unlike classical discrete Gaussian filtering, we model the scale parameters $\vs$ as learnable and position-dependent. As a result, $k$ applies dynamic scaling to each element of $\vx$, allowing the operator to adaptively control the extent of temporal aggregation at each time step.
Since the LDG kernel can apply heterogeneous smoothing across the sequence, it avoids the structural limitations of discrete designs. These scale parameters are optimized from data via gradient-based learning with respect to the downstream forecasting loss.

\begin{figure}[t]
  \centering
  \includegraphics[
  width=0.8\linewidth,
  alt={Figure 4 illustrates the learnable discrete Gaussian kernel applied to a time series. The top panel shows an input sequence with a local smoothing window whose weights are largest near the target timestep and decay symmetrically with temporal distance. The bottom panel shows the resulting smoothed sequence, where local fluctuations are reduced while the broader periodic pattern is preserved. The shaded regions indicate that the kernel has a continuous, distance-aware receptive field that extends beyond a fixed local window, enabling long-range dependency modeling.}
  ]{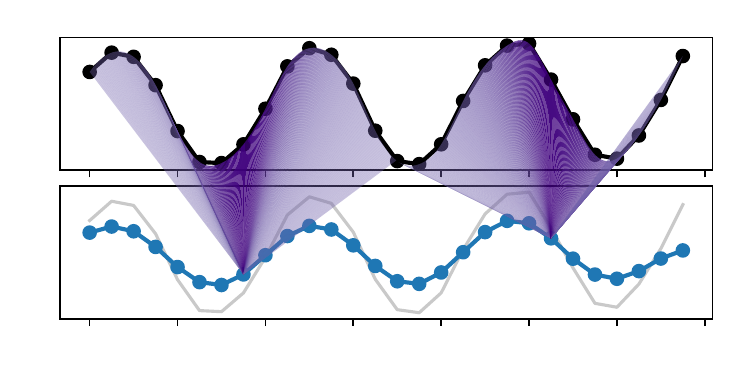}
  \vspace{-5pt}
  \caption{The LDG kernel performs smoothing with continuous 
  distance-aware scales.
  It induces an effectively unbounded receptive field, enabling long-range dependency modeling.}
  \vspace{-10pt}
  \label{fig:kernel}
\end{figure}

Figure~\ref{fig:kernel} illustrates how the LDG kernel actually works on a time series; it yields smooth, scale-controlled transformations at each time step with learnable scale parameters.
At the same time, it provides a symmetric and effectively unbounded receptive field for capturing long-range dependencies, which is its unique characteristic.

\begin{theorem}
\label{thm:gaussian-scale-op}
    The LDG kernel family, $\{ k(\cdot|\vs) \mid \vs \in \RplusM \}$, is a generalized scaling operator family.
\end{theorem}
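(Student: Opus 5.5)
The plan is to check the two conditions of Definition~\ref{def:ext-scale-op} separately. Differentiability is routine, while consistency reduces to showing that $\{\mK(s\mathbf{1})\vx\}_{s\in\sZ_+}$ satisfies the two requirements of Definition~\ref{def:scale-op}.

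\textbf{Differentiability.} For each $d$, the map $s_d\mapsto e^{-s_d}I_d(s_d)$ is real-analytic on $\sR_+$. This follows from the power series $I_d(z)=\sum_{m\ge0}(z/2)^{2m+d}/(m!\,(m+d)!)$, which converges locally uniformly and can be differentiated termwise. Every entry of the finite matrix $\mK(\vs)$ is such a function of a single coordinate of $\vs$. Hence $\vs\mapsto \mK(\vs)\vx$ is $C^\infty$, and in particular $C^1$, for every fixed $\vx$.

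\textbf{Consistency, non-expansiveness.} Set $\vs=s\mathbf{1}$. Then $\mK(s\mathbf{1})$ is the symmetric Toeplitz matrix with entries $T_s(i-j)$, where $T_s(n)=e^{-s}I_{|n|}(s)\ge 0$ is Lindeberg's discrete Gaussian kernel. The kernel satisfies $\sum_{n\in\sZ}T_s(n)=e^{-s}e^{s}=1$, by the generating function $e^{\frac{s}{2}(z+z^{-1})}=\sum_n I_n(s)z^n$ evaluated at $z=1$. Every row sum and every column sum of $\mK(s\mathbf{1})$ is a truncation of this series, so each is at most $1$. The Schur test then gives $\|\mK(s\mathbf{1})\|_2\le 1$. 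By linearity, $\|\mK\vx-\mK\vx'\|_2\le\|\vx-\vx'\|_2$.

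\textbf{Consistency, energy reduction.} I would work in the Fourier domain. Evaluating the same generating function at $z=e^{i\omega}$ gives the symbol $\widehat{T_s}(\omega)=e^{-s(1-\cos\omega)}\in(0,1]$, and this is the mechanism behind the semigroup property $T_{s+t}=T_s*T_t$. For $s_i=m s_j$ with $m>1$, we get $\widehat{T_{s_i}}(\omega)^2\le \widehat{T_{s_j}}(\omega)^2$ pointwise, with strict inequality for $\omega\ne0$. Parseval then gives $\|T_{s_i}*\vx\|_2\le\|T_{s_j}*\vx\|_2$ for every $\vx\in\ell^2(\sZ)$. For strictness, take $\vx'$ to be a unit impulse (or any non-constant signal). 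Its spectrum has mass at $\omega\ne0$, so the inequality is strict.

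\textbf{Main obstacle.} The operator acting on $\sR^L$ is the truncation $\vx\mapsto P\,(T_s*\iota\vx)$, where $\iota$ is zero-padding and $P$ is restriction to the window. Restriction does not obviously preserve the pointwise spectral ordering.

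My first attempt would be to write $T_{s_i}=T_{s_i-s_j}*T_{s_j}$ and bound the boundary leakage. This is not immediate, because $P$ does not commute with the outer convolution.

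If that fails, I would adopt the convention standard in scale-space theory, which matches the paper's description of an ``effectively unbounded receptive field'': either the sequence is treated as extended to $\sZ$, or $\mK$ is taken to act circularly. In the circular case $\mK(s\mathbf{1})$ is circulant and diagonalized by the DFT with eigenvalues $\sum_{k}\widehat{T_s}(\omega_\ell+2\pi k)$-type samples $e^{-s(1-\cos\omega_\ell)}$. The spectral argument above then goes through verbatim: every eigenvalue lies in $(0,1]$, and every eigenvalue with $\omega_\ell\neq0$ is strictly decreasing in $s$.

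Alternatively, I would restrict $\gX$ to signals whose support is far enough from the boundary that truncation is negligible. Either way, I expect this boundary issue to be the one genuinely delicate point of the proof.
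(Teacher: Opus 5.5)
Your proof is essentially the paper's. Differentiability comes from analyticity of $e^{-s}I_d(s)$. Consistency comes from the symbol $e^{-s(1-\cos\omega)}$ plus Plancherel, with strictness witnessed by any signal whose spectrum is not concentrated at $\omega=0$. The paper phrases the key step as $\widehat{g}_{s_i}=(\widehat{g}_{s_j})^m$ via the semigroup; your pointwise monotonicity in $s$ is the same fact and covers all $s_i>s_j$, not just integer multiples. The paper deals with your ``main obstacle'' by identifying $k(\vx|s\mathbf{1})$ with the full convolution $g_s*\vx$ on $\ell^2(\mathbb{Z})$, i.e.\ your first fallback, and under that reading your argument is complete. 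Your Schur-test bound is a small improvement, because it applies directly to the actual $L\times L$ matrix. The paper's Plancherel step for non-expansiveness also survives truncation, since restriction is a contraction, but the paper does not say so.

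Do not try to bound the boundary leakage, because for the literal truncated matrix the energy-reduction inequality is false. Take $L=3$, $s_j=5$, $s_i=10$, $\vx=(4,-7,4)$. We have $e^{-5}I_{0,1,2}(5)\approx(0.1835,\,0.1640,\,0.1180)$ and $e^{-10}I_{0,1,2}(10)\approx(0.1278,\,0.1213,\,0.1036)$. Then $\mK(5\mathbf{1})\vx\approx(0.058,\,0.027,\,0.058)$ and $\mK(10\mathbf{1})\vx\approx(0.077,\,0.075,\,0.077)$. Hence $\|\mK(10\mathbf{1})\vx\|_2\approx 0.132>0.087\approx\|\mK(5\mathbf{1})\vx\|_2$.

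Truncation does preserve the quadratic-form order $\langle\vx,\mK(s_i\mathbf{1})\vx\rangle=\frac{1}{2\pi}\int_{-\pi}^{\pi}e^{-s_i(1-\cos\omega)}|X(\omega)|^2\,d\omega\le\langle\vx,\mK(s_j\mathbf{1})\vx\rangle$, since these matrices are compressions of the bi-infinite convolutions. Energy reduction, however, needs the squares ordered, and that fails because the truncated matrices do not commute (their eigenvectors rotate with $s$).

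So the $\ell^2(\mathbb{Z})$ reading is a necessary reinterpretation of the theorem, not just a convenience, and the paper adopts it silently. Your other two options do not rescue the literal statement. The circulant version replaces the paper's Toeplitz $\mK$ by a periodized kernel, which proves the claim for a different operator. Restricting supports away from the boundary only controls leakage approximately, because $e^{-s}I_d(s)>0$ for every $d$, whereas the definition demands an exact inequality.
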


\begin{theorem}
\label{thm:discrete-gaussian-optimality}
The LDG kernel is the unique generalized scaling operator family of symmetric kernels that satisfies the discrete scale-space axioms~\citep{lindeberg2002scale}.
\end{theorem}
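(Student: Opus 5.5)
The plan is to reduce the claim to Lindeberg's classical uniqueness theorem for discrete scale-space. We then check that, within the class of symmetric convolution kernels, the extra requirements of Definition~\ref{def:ext-scale-op} pick out exactly that solution.

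\emph{Step 1: restate the axioms for a one-parameter symmetric kernel.} Restrict to the diagonal $\vs=t\mathbf{1}$, as the consistency condition allows. A symmetric, shift-invariant kernel family $T(\cdot;t)$ on $\sZ$ then yields $k(\vx|t\mathbf{1})=\mK(t\mathbf{1})\vx$ with $[\mK]_{i,j}=T(|i-j|;t)$. The axioms of \citet{lindeberg2002scale} are:
\begin{itemize}[nosep, leftmargin=20pt]
\item linearity and shift invariance;
\item the semigroup property $T(\cdot;t_1)*T(\cdot;t_2)=T(\cdot;t_1+t_2)$ with $T(\cdot;0)=\delta$;
\item non-creation of local extrema (scale-space causality) for all $t$;
\item normalization $\sum_n T(n;t)=1$, together with continuity in $t$.
\end{itemize}

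\emph{Step 2: invoke Lindeberg's characterization.} By his theorem, any such family is generated by an infinitesimal operator of the form $\partial_t T=\alpha\,\nabla_3^2 T+\beta\,\nabla_5^2T$. Under the non-creation of extrema requirement on $\sZ$ (1D), the generator must be $\partial_t T=\tfrac{\alpha}{2}(T(n+1)-2T(n)+T(n-1))$. Its unique solution with $T(\cdot;0)=\delta$ is $T(n;t)=e^{-\alpha t}I_n(\alpha t)$. This can be verified through the generating function $\sum_n T(n;t)z^n=e^{\alpha t(z+z^{-1})/2-\alpha t}$. Symmetry rules out the odd (drift) terms that would otherwise be allowed. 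Absorbing $\alpha$ into the scale reparametrization gives the LDG kernel of \Eqref{eq:ldg}.

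\emph{Step 3: lift to the generalized family.} Theorem~\ref{thm:gaussian-scale-op} already establishes that the LDG family is a generalized scaling operator family, so existence is settled. For uniqueness, take any other symmetric generalized family satisfying the axioms. By consistency, its diagonal restriction must coincide with $e^{-t}I_n(t)$ at every scale. Off the diagonal, the kernel entry at distance $d$ depends only on $s_d$, since the entries are indexed by $d=|i-j|$. Applying the axioms entrywise then forces each entry to be $e^{-s_d}I_d(s_d)$. Differentiability is automatic, because $I_d$ is entire.

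The main obstacle is Step 3. The axioms are naturally stated for a scalar scale, so it is delicate to say in what sense a position-dependent $\vs$ ``satisfies'' the semigroup property. My plan is to \emph{define} satisfaction of the axioms for $\vs$ via the restrictions $s_d\mapsto T(d;s_d)$, and then argue that the form on each diagonal slice determines the whole family. Failing that, I would prove uniqueness only up to this reparametrization.
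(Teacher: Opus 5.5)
Your Steps 1--2 establish exactly what the paper's proof establishes, but you cite Lindeberg's 1D classification where the paper re-derives it. The paper stays at a scalar scale throughout. For a symmetric convolution family $K_s$ on $\ell^2(\sZ)$ it argues as follows:
\begin{itemize}
\item the semigroup and regularity axioms give $\widehat{K}_s(\omega)=\exp(s\phi(\omega))$;
\item non-enhancement of local extrema at $s=0$ makes the off-centre taps of the generator $\partial_sK_s|_{s=0}$ nonnegative, so $\phi(\omega)=\sum_{m\ge1}c_m(\cos m\omega-1)$ with $c_m\ge0$;
\item the same axiom forces $c_m=0$ for $m\ge2$, giving $K_s[n]=e^{-\alpha s}I_{|n|}(\alpha s)$, unique up to $s\mapsto\alpha s$.
\end{itemize}
Citing is shorter; the derivation shows where each axiom is used.

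Step 2 has two slips. First, $\alpha\nabla_3^2+\beta\nabla_5^2$ is not Lindeberg's 1D statement; it looks like a mix-up with his 2D result, which combines the five-point Laplacian and the cross operator. In 1D the nearest-neighbour generator comes out directly. A site can be a local maximum relative to its two neighbours while a site at distance $m\ge2$ is arbitrarily large, so any $c_m>0$ makes $\partial_s y>0$ at that maximum. Second, Step 1 lists non-\emph{creation} of extrema (the P\'olya-frequency formulation), while Step 2 runs the generator argument belonging to the non-\emph{enhancement} formulation the paper uses. Either yields the result, but commit to one.

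Step 3 goes beyond anything the paper proves, and as written it fails. Consistency in Definition~\ref{def:ext-scale-op} only asks that the integer-diagonal restriction be non-expansive with energy reduction. Many non-Gaussian kernels satisfy that, so it cannot pin the diagonal to $e^{-t}I_n(t)$. That identification comes from your Step 2 applied on the diagonal, and only as $e^{-\alpha t}I_n(\alpha t)$.

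More seriously, ``the entry at distance $d$ depends only on $s_d$'' is a feature of the LDG parametrization, not something you can derive for a competitor. Take $f(\vx|\vs)=\mK(\bar{s}\mathbf{1})\vx$ with $\bar{s}$ the mean of the entries of $\vs$. It is symmetric, smooth in $\vs$, consistent, and coincides with the discrete Gaussian on the diagonal. Yet its centre tap $e^{-\bar{s}}I_0(\bar{s})$ differs from LDG's $e^{-s_0}I_0(s_0)$ whenever $s_0\neq\bar{s}$, since $u\mapsto e^{-u}I_0(u)$ is strictly decreasing. So uniqueness over multi-parameter families is false unless that structure is assumed.

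Once it is assumed, the lift is immediate: an entry $h_d(s_d)$ is determined everywhere by its diagonal values $h_d(t)=e^{-\alpha t}I_d(\alpha t)$. No ``entrywise'' use of the axioms is needed, and none is possible. For non-constant $\vs$, the row $d\mapsto e^{-s_d}I_d(s_d)$ is neither normalized nor a semigroup member. Your fallback, uniqueness of the diagonal family up to $s\mapsto\alpha s$, is precisely what the paper's proof delivers.
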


Theorem~\ref{thm:gaussian-scale-op} establishes that this kernel constitutes a valid generalized scaling operator family, confirming its suitability for principled scaling.
Moreover, this Gaussian instantiation is the unique operator determined by the discrete scale-space axioms as shown in Theorem \ref{thm:discrete-gaussian-optimality}.
This finding aligns our method with the scale-space theory, a principled framework for multi-resolution signal analysis originating in computer vision \citep{witkin1987scale,lindeberg2013scale}. 
We empirically examine alternative kernel choices in \Cref{ssec:deep-analysis}, showing that replacing the LDG kernel with other smoothing or learnable operators consistently degrades performance.

\subsection{Lightweight Forecasting Module}
\label{ssec:impl}

Many existing multi-scale forecasting models integrate information across scales after applying the scaling operator, such as weighted summation or aggregation modules \citep{wu2023timesnet,cai2024msgnet}, or cross-scale mixing architectures that dynamically couple representations at different temporal resolutions \citep{hu2025adaptive,wang2024timemixer,wang2025timemixer}. While these designs improve expressivity under discrete scaling operator families, they typically incur additional parameters and computational overhead.

An essential strength of our learnable scaling operation is that additional multi-scale interaction is not required in the forecasting module, since the LDG kernel already preserves coherent cross-scale structure at the representation level.
Accordingly, prediction can be performed using an arbitrary nonlinear mapping. We therefore adopt a lightweight multi-layer perceptron (MLP), which is parameter-efficient, fully parallel, and empirically sufficient for real-world data, as confirmed by the ablation results in \Cref{ssec:deep-analysis}.

\begin{table*}[t]
\centering
\caption{
Long-term forecasting results across eight datasets with horizons
$T \in \{96,192,336,720\}$ and input length fixed at 96. 
Results are averaged across all horizons, while the full results are provided in Appendix~\ref{sec:full}.
\method achieves the smallest forecasting errors in 13 out of 16 evaluation settings and the second-best in 2 cases. 
}
\label{tab:long-term}
\small
\setlength{\tabcolsep}{3pt}
\renewcommand{\arraystretch}{1.0}
\resizebox{\textwidth}{!}{
\begin{tabular}{l|cc|cc|cc|cc|cc|cc|cc|cc|cc}
\toprule
\multirow{2}{*}{Models} &
\multicolumn{2}{c|}{\begin{tabular}[c]{@{}c@{}}\method\\ (Ours)\end{tabular}} 
 & \multicolumn{2}{c|}{\begin{tabular}[c]{@{}c@{}}AMD\\ \citeyearpar{hu2025adaptive}\end{tabular}} 
 & \multicolumn{2}{c|}{\begin{tabular}[c]{@{}c@{}}MultiPatch.\\ \citeyearpar{naghashi2025multiscale}\end{tabular}} 
 & \multicolumn{2}{c|}{\begin{tabular}[c]{@{}c@{}}WPMixer\\ \citeyearpar{murad2025wpmixer}\end{tabular}} 
 & \multicolumn{2}{c|}{\begin{tabular}[c]{@{}c@{}}TimeMixer\\ \citeyearpar{wang2024timemixer}\end{tabular}} 
 & \multicolumn{2}{c|}{\begin{tabular}[c]{@{}c@{}}MSGNet\\ \citeyearpar{cai2024msgnet}\end{tabular}} 
 & \multicolumn{2}{c|}{\begin{tabular}[c]{@{}c@{}}MICN\\ \citeyearpar{wang2023micn}\end{tabular}} 
 & \multicolumn{2}{c|}{\begin{tabular}[c]{@{}c@{}}TimesNet\\ \citeyearpar{wu2023timesnet}\end{tabular}} 
 & \multicolumn{2}{c}{\begin{tabular}[c]{@{}c@{}}Pyra.\\ \citeyearpar{liu2022pyraformer}\end{tabular}} \\
\cmidrule{2-19}
 
 & MSE & MAE & MSE & MAE & MSE & MAE & MSE & MAE
 & MSE & MAE & MSE & MAE & MSE & MAE & MSE & MAE & MSE & MAE \\
\toprule

{Weather} &
0.247 & \best{0.273} &
0.263 & 0.286 &
0.255 & 0.278 &
\best{0.245} & \best{0.273} &
\second{0.246} & \second{0.276} &
0.258 & 0.283 &
0.266 & 0.315 &
0.262 & 0.288 &
0.285 & 0.347 \\
\midrule

{Electricity} &
\best{0.175} & \best{0.269} &
0.208 & 0.289 &
0.198 & 0.282 &
0.194 & 0.282 &
\second{0.185} & \second{0.274} &
0.199 & 0.307 &
0.187 & 0.298 &
0.194 & 0.294 &
0.298 & 0.390 \\
\midrule

{Traffic} &
\best{0.458} & \best{0.302} &
0.546 & 0.344 &
\second{0.497} & 0.329 &
0.527 & 0.343 &
0.501 & \second{0.318} &
0.654 & 0.384 &
0.543 & 0.320 &
0.627 & 0.334 &
0.685 & 0.384 \\
\midrule

{Exchange} &
\second{0.353} & \best{0.400} &
0.358 & \second{0.401} &
0.387 & 0.419 &
0.376 & 0.408 &
0.384 & 0.414 &
0.431 & 0.446 &
\best{0.345} & 0.424 &
0.416 & 0.443 &
1.181 & 0.854 \\
\midrule

{ETTh1} &
\best{0.443} & \best{0.433} &
\second{0.447} & \second{0.434} &
\best{0.443} & 0.440 &
0.451 & 0.442 &
0.455 & 0.444 &
0.459 & 0.458 &
0.572 & 0.531 &
0.480 & 0.468 &
0.879 & 0.731 \\
\midrule

{ETTh2} &
\best{0.376} & \second{0.402} &
\best{0.376} & \best{0.400} &
\second{0.379} & 0.406 &
0.390 & 0.410 &
0.389 & 0.410 &
0.402 & 0.421 &
0.582 & 0.527 &
0.410 & 0.424 &
4.086 & 1.621 \\
\midrule

{ETTm1} &
\best{0.383} & \best{0.397} &
0.395 & 0.399 &
\second{0.385} & 0.400 &
\second{0.385} & \best{0.397} &
\second{0.385} & \second{0.398} &
0.401 & 0.412 &
0.399 & 0.427 &
0.410 & 0.415 &
0.737 & 0.614 \\
\midrule

{ETTm2} &
\best{0.276} & \best{0.322} &
0.285 & 0.328 &
0.283 & 0.328 &
\second{0.279} & \second{0.324} &
0.281 & 0.327 &
0.289 & 0.330 &
0.354 & 0.397 &
0.298 & 0.333 &
1.462 & 0.846 \\
\bottomrule
\end{tabular}
}
\vspace{-7pt}
\end{table*}

Formally, given an input sequence \(\vx \in \mathbb{R}^{L}\), we first produce $d$-dimensional embeddings \(\mX = \mathrm{Embed}(\vx) \in \mathbb{R}^{L \times d}\) as done in \citep{wu2021autoformer,wang2024timemixer,wang2025timemixer}. Based on these embeddings, our multi-scale forecasting model is succinctly represented by the following equation:
\begin{align}
\hat{\vy}
&= \mW_1 (\mathrm{MLP}(\mH) + \mH) \mW_2, \\
\mH &= \mK(\vs)\mX \mathbin\Vert (\mI-\mK(\vs))\mX \in \R^{2L \times d},
\end{align}
where $\mW_{1} \in \mathbb{R}^{T \times 2L}$ and $\mW_{2}\in\mathbb{R}^{d\times 1}$ are projection heads and $\vs\in\RplusL$ denotes learnable scale parameters.
These scale parameters are optimized as dataset-level parameters during training and kept fixed at inference.

One notable design decision is that we decompose the input into a smoothed component \(\mK(\vs)\mX\) and a residual component \((\mI-\mK(\vs))\mX\) analogous to the classical trend-seasonal decomposition \citep{cleveland1990stl}. To further stabilize optimization and preserve scale-specific information, we incorporate a skip connection that adds each scale’s input back to its transformed output~\citep{he2016deep}.
We also adopt channel independence, processing each variable separately, which makes the architecture naturally applicable to multivariate settings \citep{zeng2023transformers}.
We further apply reversible instance normalization to each time series variable to remove distribution shifts \citep{kim2022reversible}.

\begin{table*}[t]

\centering
\caption{Short-term forecasting results in the M4 benchmark dataset with various temporal granularities.
\method performs the best in 11 of the 15 cases.
This highlights its effectiveness in capturing multi-scale patterns across a diverse range of time series types.}
\setlength{\tabcolsep}{2pt}
\scriptsize
\renewcommand{\arraystretch}{0.95}
\newcolumntype{L}[1]{>{\raggedright\arraybackslash}p{#1}}
\newcolumntype{R}[1]{>{\raggedleft\arraybackslash}p{#1}}

\resizebox{\textwidth}{!}{%
\begin{tabular}{ll|r|r|r|r|r|r|r|r|r}

\midrule
 &
\multicolumn{1}{|c|}{\parbox{0.85cm}{\centering Metric}} &
{\parbox{1.25cm}{\centering \method\\(Ours)}} &
{\parbox{1.25cm}{\centering AMD\\\citeyearpar{hu2025adaptive}}} &
{\parbox{1.25cm}{\centering MultiPatch.\\\citeyearpar{naghashi2025multiscale}}} &
{\parbox{1.25cm}{\centering WPMixer\\\citeyearpar{murad2025wpmixer}}} &
{\parbox{1.25cm}{\centering TimeMixer\\\citeyearpar{wang2024timemixer}}} &
{\parbox{1.25cm}{\centering MSGNet\\\citeyearpar{cai2024msgnet}}} &
{\parbox{1.25cm}{\centering MICN\\\citeyearpar{wang2023micn}}} &
{\parbox{1.25cm}{\centering TimesNet\\\citeyearpar{wu2023timesnet}}} &
{\parbox{1.25cm}{\centering Pyra.\\\citeyearpar{liu2022pyraformer}}} \\

\specialrule{0.3pt}{1pt}{2pt}
\multirow{3}{*}{\rotatebox[origin=c]{90}{\scalebox{0.8}{Yearly}}}
& \multicolumn{1}{|l|}{SMAPE} & \second{13.314} & 13.447 & \best{13.296} & 13.632 & 13.326 & 13.354 & 14.580 & 13.482 & 14.987 \\
& \multicolumn{1}{|l|}{MASE}  & \best{2.989} & 3.022 & 3.009 & 3.075 & \second{3.002} & \best{2.989} & 3.382 & 3.056 & 3.361 \\
& \multicolumn{1}{|l|}{OWA}   & \best{0.783} & 0.792 & 0.785 & 0.804 & 0.785 & \second{0.784} & 0.871 & 0.797 & 0.881 \\
\specialrule{0.3pt}{1pt}{2pt}
\multirow{3}{*}{\rotatebox[origin=c]{90}{\scalebox{0.8}{Quarterly}}}
& \multicolumn{1}{|l|}{SMAPE} & \best{10.060} & 10.259 & 10.166 & 10.299 & 10.281 & 10.446 & 11.389 & \second{10.116} & 11.706 \\
& \multicolumn{1}{|l|}{MASE}  & \best{1.177} & 1.211  & \second{1.178} & 1.217  & 1.206  & 1.248  & 1.380  & 1.186  & 1.397 \\
& \multicolumn{1}{|l|}{OWA}   & \best{0.886}  & 0.907  & \second{0.892} & 0.911  & 0.907  & 0.929  & 1.020  & \second{0.892} & 1.041 \\
\specialrule{0.3pt}{1pt}{2pt}
\multirow{3}{*}{\rotatebox[origin=c]{90}{\scalebox{0.8}{Monthly}}}
& \multicolumn{1}{|l|}{SMAPE} & \best{12.750} & 12.898 & 12.810 & 12.945 & 12.984 & 12.970 & 13.797 & \second{12.775} & 14.444 \\
& \multicolumn{1}{|l|}{MASE}  & \best{0.936}  & 0.952  & \second{0.942} & 0.959  & 0.964  & 0.976  & 1.077  & 0.945  & 1.142  \\
& \multicolumn{1}{|l|}{OWA}   & \best{0.882}  & 0.895  & \second{0.887} & 0.900  & 0.903  & 0.908  & 0.985  & \second{0.887} & 1.038  \\
\specialrule{0.3pt}{1pt}{2pt}
\multirow{3}{*}{\rotatebox[origin=c]{90}{\scalebox{0.8}{Others}}}
 & \multicolumn{1}{|l|}{SMAPE} & 4.867  & \second{4.822}  & 4.849  & 4.925  & \best{4.739}  & 5.521  & 6.123  & 4.978  & 6.115  \\
& \multicolumn{1}{|l|}{MASE}  & 3.316  & \second{3.245}  & 3.271  & 3.259  & \best{3.234}  & 3.829  & 4.196  & 3.265  & 4.156  \\
& \multicolumn{1}{|l|}{OWA}   & 1.037  & \second{1.021}  & 1.028  & 1.028  & \best{1.014}  & 1.174  & 1.300  & 1.048  & 1.282  \\
\specialrule{0.3pt}{1pt}{2pt}
\multirow{3}{*}{\rotatebox[origin=c]{90}{\scalebox{0.8}{\shortstack{Weighted\\[-0.5ex]Average}}}}
& \multicolumn{1}{|l|}{SMAPE} & \best{11.840} & 11.987 & \second{11.889} & 12.067 & 12.002 & 12.080 & 13.015 & 11.910 & 13.495 \\
& \multicolumn{1}{|l|}{MASE}  & \best{1.585}  & 1.605  & \second{1.591}  & 1.623  & 1.604  & 1.647  & 1.836  & 1.604  & 1.864  \\
& \multicolumn{1}{|l|}{OWA}   & \best{0.868}  & 0.881  & \second{0.872}  & 0.887  & 0.882  & 0.898  & 0.983  & 0.876  & 1.015 \\
\midrule
\label{tab:short-term}
\end{tabular}
}
\renewcommand{\arraystretch}{1.0}
\vspace{-10pt}
\end{table*}

\section{Experiments}
\label{sec:exp}

We conduct a comprehensive empirical evaluation of \method on standard long-term and short-term forecasting datasets against eight state-of-the-art baseline models. We also provide deeper analyses to investigate its efficiency, robustness, and underlying design principles.

\mypar{Datasets}
We evaluate \method on both long-term and short-term forecasting tasks, following the TSLib benchmark~\citep{wang2026deep}. The evaluated datasets include (long-term) Weather, ETT (ETTh1, ETTh2, ETTm1, ETTm2), Electricity, Exchange, Traffic, and (short-term) M4.
Following the standard protocol \citep{zhou2021informer}, we partition all datasets into training, validation, and test sets in chronological order, using a 6:2:2 ratio for the ETT datasets and a 7:1:2 ratio for the others.
Consistent with previous work, the observation window is fixed to $L=96$, and forecasting horizons are set to $T\in\{96,192, 336, 720\}$ for long-term forecasting tasks \citep{wu2023timesnet}.
As the M4 benchmark consists of 100,000 univariate time series collected at multiple temporal frequencies ranging from yearly to hourly, we follow the official benchmark protocol, where the prediction lengths are fixed according to the frequency of each series~\citep{makridakis2018m4}.
Comprehensive details for each dataset are provided in Appendix~\ref{sec:exp-detail}.

\mypar{Baselines}
We compare \method with eight state-of-the-art baselines for time series forecasting based on multi-scale modeling strategies: AMD \citep{hu2025adaptive}, MultiPatchFormer \citep{naghashi2025multiscale}, WPMixer \citep{murad2025wpmixer}, TimeMixer \citep{wang2024timemixer}, MSGNet \citep{cai2024msgnet}, MICN \citep{wang2023micn}, TimesNet \citep{wu2023timesnet}, and Pyraformer \citep{liu2022pyraformer}.

\mypar{Experimental Settings}
We adopt evaluation metrics used in previous work for each benchmark.
For long-term forecasting, we use mean squared error (MSE) and mean absolute error (MAE)~\citep{wu2023timesnet}.
For short-term forecasting, we report symmetric mean absolute percentage error (SMAPE), mean absolute scaled error (MASE), and the overall weighted average (OWA)~\citep{Oreshkin2020N-BEATS:}. 
We apply the same batch size, number of training epochs, and early-stopping strategy across all methods~\citep{wu2023timesnet}.
Each experiment is repeated three times, and the averaged results are reported. 

\subsection{Comparison with Baseline Models}

\mypar{Long-Term Forecasting}
Table~\ref{tab:long-term} demonstrates that \method achieves the best performance in 13 out of 16 evaluation settings for long-term forecasting across all datasets and horizons. 
The improvements are particularly evident in the high-dimensional Electricity and Traffic datasets.
\method achieves a 5.4\% and 7.8\% reduction in MSE relative to the second-best model, respectively.
On the other hand, \method is less significant on the Weather dataset, which has relatively low dimensionality and high intrinsic forecastability. In such a setting, discrete scaling operators can also capture the dominant temporal structure, allowing existing methods to perform competitively.
Overall, these results highlight the importance of learning 
distance-aware scale parameters from data to capture fine-grained temporal dynamics through principled multi-scale modeling when applied to real-world multivariate time series, with particularly strong benefits in complex, high-dimensional settings.

\mypar{Short-Term Forecasting}
\Cref{tab:short-term} shows that \method also establishes clear superiority over the competitors on the M4 benchmark for short-term forecasting, achieving the best results in 11 out of 15 cases.
It achieves the best average results, particularly in the Quarterly and Monthly datasets, which contain the largest number of series.
On the other hand, \method exhibits limited performance in the ``Others'' category, which lacks sufficient data, as it accounts for less than 5\% of the benchmark.
This suggests that learning 
distance-aware scale parameters is most effective when trained on sufficiently large datasets, while less-structured series favor stronger inductive biases to extract meaningful patterns.
Nevertheless, \method outperforms all other baselines by a large margin overall.

\begin{figure}[t]
  \centering
  \includegraphics[
  width=0.8\columnwidth,
  alt={Figure 5 presents an efficiency analysis on the ETTh1 dataset under the predict-720 setting. The scatter plot compares multi-scale forecasting methods by training time on the x-axis, MSE on the y-axis, and memory footprint as marker size. SiGMA appears in the lower-left region, with 8 milliseconds per iteration, 76 megabytes of memory, and the lowest MSE, indicating the best accuracy-efficiency trade-off. Compared with AMD, MultiPatchFormer, WPMixer, and TimeMixer, SiGMA achieves lower forecasting error while requiring less training time and memory.}
  ]{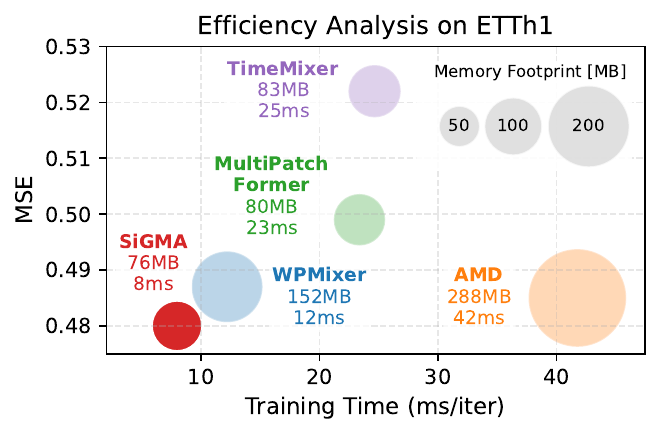}
  \vspace{-5pt}
  \caption{Efficiency analysis on ETTh1 with predict-720 setting. \method achieves the best trade-off between accuracy and efficiency, attaining the lowest MSE while requiring substantially less training time and memory than competing multi-scale methods.}
  \vspace{-10pt}
  \label{fig:efficiency}
\end{figure}

\mypar{Efficiency Analysis}
The performance gains achieved by \method are realized through a lightweight and parameter-efficient architecture.
We assess efficiency in terms of per-iteration training time and memory footprint on the ETTh1 dataset under the predict-720 setting, with all methods executed on a single GPU using the same batch size.  
As shown in~\Cref{fig:efficiency}, \method attains the fastest training time and the most compact memory footprint while achieving the lowest MSE.
Specifically, \method reduces memory consumption by 3.8 times and improves training speed by 5.3 times compared to AMD, the previous state-of-the-art baseline.

This significant efficiency gap comes from fundamental architectural differences.
AMD stacks three downsampling stages and applies per-variable multi-scale mixing in a sequential manner.
While this approach may improve accuracy over other baselines, it imposes substantial memory and runtime overhead.
In contrast, \method employs a single-kernel formulation that preserves full parallelism in its representation updates. 
Our principled simplification of multi-scale modeling yields a more favorable speed-memory trade-off without compromising modeling capacity, leading to superior overall performance.
We extend this efficiency analysis to additional datasets in Appendix~\ref{sec:efficiency}, showing that \method preserves faster training time with competitive memory usage on complex datasets.

\begin{figure*}[t]
\centering
\includegraphics[
width=\textwidth,
alt={Figure 6 provides a case study on the Traffic dataset under the predict-96 setting. The left side shows that SiGMA learns distance-aware scale parameters for each timestep and uses the LDG kernel to produce both a smoothed trend component and a residual variation component. The middle part shows that these components are separately processed and then integrated by an MLP. The right side compares the predicted sequence with the ground truth, showing that the integrated prediction closely follows both the long-term periodic trend and short-term fluctuations.}
]{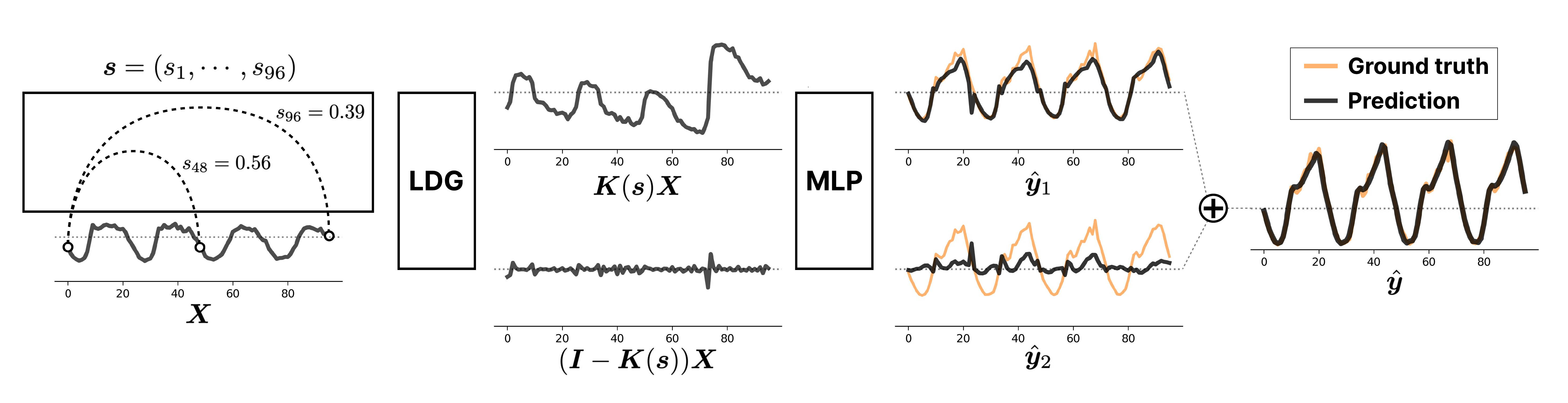}
\caption{
Case study on Traffic for predict-96 setting. 
\method learns 
distance-aware scale parameters to adaptively control temporal smoothing via the LDG kernel, while an MLP integrates the resulting multi-scale representations to capture both long-term trends and short-term variations.
By integrating these complementary signals, \method achieves more accurate and effective predictions.}

\vspace{-4pt}
\label{fig:case-study}
\end{figure*}


\subsection{Deeper Analysis on \method}
\label{ssec:deep-analysis}

\begin{wraptable}[10]{r}{0.45\columnwidth}
    \vspace{-13pt}
    \centering
    \caption{Ablation study with predict-720 setting on ETTh1.}
    \vspace{-5pt}
    \small
    \label{tab:ablation}
    \begin{tabular}{ccc}
        \toprule
        Method   & MSE   & MAE   \\
        \midrule
        \method & \best{0.480} & \second{0.468} \\
        \circone & \second{0.486} & \best{0.467} \\
        
        \circtwo  &  0.489     &  0.473     \\
        \circthree & 0.490 & 0.474 \\
        \circfour  &  0.492     &  0.475     \\
        \circfive  &  0.493     &  0.475     \\
        \circsix  &  0.524   & 0.492 \\
        
        \bottomrule
    \end{tabular}
    
\end{wraptable}

\mypar{Ablation Study}
To validate the design principles of \method, we conduct an ablation study on ETTh1 under the predict-720 setting, comparing \method against five of its variants:
\circone replaces the MLP predictor with the trend-seasonal mixing mechanism of TimeMixer, which is a more complicated variant of an MLP;
\circtwo changes the LDG kernel to have a single scale parameter applied to all elements;
\circthree changes the LDG kernel to use a sample-wise scale parameter using a two-layer MLP;
\circfour removes the scaling mechanism entirely and relies only on the raw input;
\circfive employs moving average, a non-learnable scale operator family;
\circsix uses unnormalized convolution, a learnable kernel that does not belong to the scaling operator family.

As shown in \Cref{tab:ablation}, 
most modifications result in performance degradation. This indicates that the expressivity of the LDG kernel in \circone and \circtwo, the effect of scaling in \circfour, and the 
distance-aware learnable scaling in \circfive all play a critical role.
While \circone achieves performance comparable to \method through a different multi-scale 
modeling strategy, it further validates that a simpler and more parameter-efficient MLP-based formulation is sufficient for effective forecasting.
\circthree shows that direct sample-wise scaling does not improve performance, suggesting that its increased flexibility may introduce higher variance and greater sensitivity to noise.
The significant performance drop in \circsix demonstrates that invalid scaling operations which do not belong to the scaling operator family can be sub-optimal, as they can distort the inherent characteristics of time series as well.

\begin{figure}[t]
    \centering

    \begin{subfigure}[t]{0.47\columnwidth}
        \centering
        \includegraphics[
        width=\linewidth,
        alt={Figure 7a shows forecasting performance on ETTh1 across input lengths 96, 192, 336, and 720. The y-axis reports MSE, and four lines correspond to prediction lengths 96, 192, 336, and 720. The longest prediction horizon, predict-720, improves as the input length increases, while shorter horizons achieve their best or near-best performance at intermediate input lengths.}
        ]{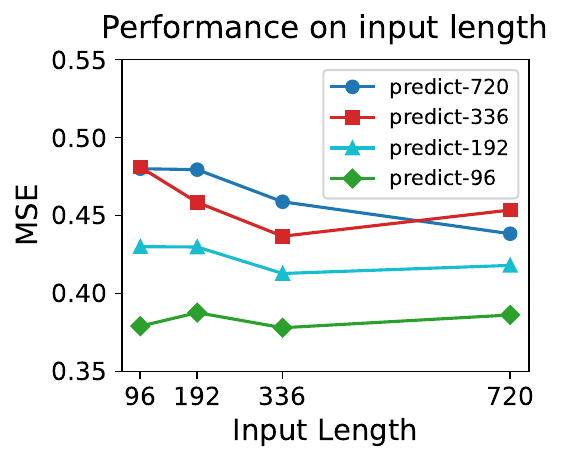}
        \caption{Performance across input lengths on ETTh1.}
        \label{fig:etth1_mse}
    \end{subfigure}
    \hfill
    \begin{subfigure}[t]{0.47\columnwidth}
        \centering
        \includegraphics[
        width=\linewidth,
        alt={Figure 7b shows computational cost on ETTh1 across input lengths 96, 192, 336, and 720. The left y-axis reports training time in milliseconds, and the right y-axis reports memory usage in megabytes. Both training time and memory usage increase as the input length becomes larger, indicating that longer input windows improve modeling capacity at the cost of higher computation and memory.}
        ]{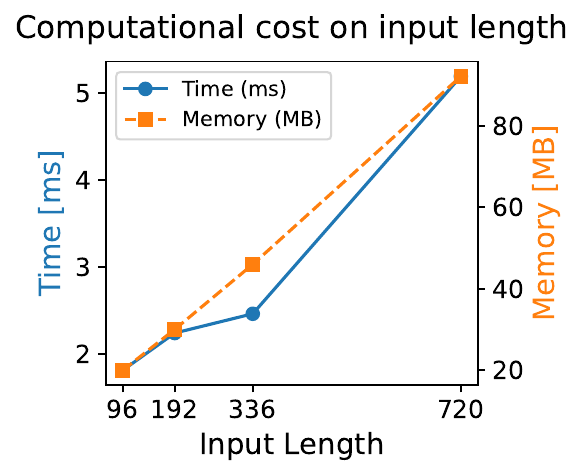}
        \caption{Computational cost across input lengths on ETTh1.}
        \label{fig:etth1_cost}
    \end{subfigure}

    \caption{Hyperparameter sensitivity of the input length $L$ on ETTh1.
    (a) Larger $L$ generally benefits long-horizon forecasting, while intermediate sizes suffice for shorter horizons. (b) Training time and memory usage scale linearly with $L$.}
    \label{fig:hyperparam_sensitivity}

\vspace{-10pt}
\end{figure}

\mypar{Hyperparameter Sensitivity}
We further examine the sensitivity of \method to the input length $L$, a key hyperparameter in multi-scale forecasting.
We evaluate forecasting performance (MSE) and computational cost on ETTh1 under varying lookback windows $L \in \{96, 192, 336, 720\}$.

As shown in Figure~\ref{fig:etth1_mse}, increasing the input length leads to consistent performance improvements for the long-horizon setting (predict-720), whereas for shorter horizons the best results are typically achieved with intermediate sizes.
This aligns with the properties of the LDG kernel, which has an effectively unbounded receptive field, allowing longer input sequences to better capture long-range temporal structure.
Figure~\ref{fig:etth1_cost} shows that memory usage grows linearly, while training time increases superlinearly with $L$. Under the current implementation, the LDG operator is applied via dense matrix multiplication, resulting in $O(L^2)$ time complexity. However, because the LDG kernel induces a Toeplitz operator, this cost could be reduced through more efficient implementations, such as truncated convolution or FFT-based multiplication. We provide further analysis in Appendix~\ref{sec:ldg-complexity}.
Overall, \method achieves an effective accuracy-cost trade-off as the lookback window increases.

\mypar{Case Study}
We conduct a case study on the Traffic dataset under the predict-96 setting to illustrate the behavior of \method. As shown in \Cref{fig:case-study}, \method learns a scale parameter at each timestep. The LDG kernel acts as a principled smoothing operator, allowing the model to selectively emphasize broad trends or localized variations. 
The resulting representations are then transformed by a lightweight MLP,
which captures richer temporal dependencies and adjusts the balance between coarse- and fine-grained components. 
By integrating these components, \method yields predictions that closely align with the ground truth, validating the effectiveness of its multi-scale design.

\section{Conclusion}
\label{sec:conclusion}

In this work, we introduce the concept of \emph{scaling operator families}, providing a principled foundation for understanding and unifying scaling mechanisms used in multi-scale time-series modeling.
Building on this foundation, we propose \method, a principled approach grounded in a \emph{generalized scaling operator family}.
\method employs the learnable discrete Gaussian (LDG) kernel, which generalizes classical scaling operators by enabling continuous, 
distance-aware and learnable scale parameters while preserving coherent cross-scale structure at the representation level.
Extensive experiments on both long- and short-term forecasting benchmarks demonstrate that \method consistently outperforms state-of-the-art baselines, while substantially reducing memory consumption and computational time.

\mypar{Limitations and Future Work}
A limitation of this work is that dynamic scaling parameters are learned at the dataset level, which is suboptimal when training samples are insufficient.
Future work can include extending our framework to multivariate scaling and sample-specific dynamic modeling to further enhance its expressiveness and applicability.
Nevertheless, our ablation results indicate that direct sample-wise parameterization requires careful design. A promising future direction is to combine shared dataset-level scales with residual sample-wise adaptations, thereby improving flexibility while preserving stability.

\section*{Acknowledgements}

This work was partly supported by the National Research Foundation of Korea (NRF) grant funded by the Korea government (MSIT) (RS-2024-00341425 and RS-2024-00406985), ``Advanced GPU Utilization Support Program'' funded by the Government of the Republic of Korea (Ministry of Science and ICT), and the New Faculty Startup Fund from Seoul National University.

\section*{Impact Statement}

This work contributes to the foundations of time-series forecasting through a structured multi-scale modeling approach. Its societal impact is indirect and mediated by downstream applications, and we do not foresee ethical concerns beyond those commonly associated with forecasting methods.





\bibliography{reference}
\bibliographystyle{icml2026}

\clearpage
\newpage
\onecolumn

\appendix

\section{Proofs}
\label{sec:proofs}

\subsection{Proof of \Cref{thm:scale-op}}

\begin{lemma}
\label{lem:nonexp-energy}
If $T:\mathbb{R}^m\to\mathbb{R}^n$ satisfies
$\|T(\vx)-T(\vy)\|_2\le \|\vx-\vy\|_2$ for all $\vx,\vy$, then
$\|T(\vx)\|_2\le\|\vx\|_2$ for all $\vx$.
\end{lemma}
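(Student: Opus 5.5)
The plan is to specialize the non-expansiveness inequality to the pair $(\vx,\bm{0})$. Taking $\vy=\bm{0}$ in the hypothesis gives
\begin{equation*}
\|T(\vx)-T(\bm{0})\|_2\le\|\vx\|_2 \quad\text{for all } \vx .
\end{equation*}
If $T(\bm{0})=\bm{0}$, the left-hand side is exactly $\|T(\vx)\|_2$ and the claim follows at once.

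\textbf{Main obstacle.} The whole argument hinges on $T(\bm{0})=\bm{0}$, which does not follow from non-expansiveness alone. The translation $T(\vx)=\vx+\vc$ with $\vc\neq\bm{0}$ is an isometry, yet $\|T(\bm{0})\|_2=\|\vc\|_2>0=\|\bm{0}\|_2$. So the lemma as literally stated needs one extra hypothesis. Without it, the triangle inequality only gives
\begin{equation*}
\|T(\vx)\|_2\le\|\vx\|_2+\|T(\bm{0})\|_2 ,
\end{equation*}
and this cannot be improved in general.

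\textbf{Intended repair.} I would state the lemma for maps with $T(\bm{0})=\bm{0}$, or equivalently for maps that are positively homogeneous. This hypothesis holds for every operator it will be applied to in \Cref{thm:scale-op}, and checking it is a one-line verification per family in \Cref{tab:scale-operators}:
\begin{itemize}[nosep]
\item average pooling, moving average, subsampling, segmentation and the wavelet approximation coefficients $\langle\vx,\phi_{s,k}\rangle$ are all linear, so they map $\bm{0}$ to $\bm{0}$;
\item max pooling is positively homogeneous, since $\max_i(\lambda x_i)=\lambda\max_i x_i$ for $\lambda\ge 0$, which again gives $T(\bm{0})=\bm{0}$.
\end{itemize}

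\textbf{Why this suffices downstream.} With the zero-preserving hypothesis in place, the lemma is the bridge that converts the non-expansiveness established for each operator in \Cref{thm:scale-op} into the energy bound $\|f(\vx|s)\|_2\le\|\vx\|_2$. That bound is then used along the multiplicative chains $s_i=m\,s_j$ to obtain the energy-reduction property.
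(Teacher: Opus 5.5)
Your argument is the same as the paper's, whose entire proof is $\|T(\vx)\|_2=\|T(\vx)-T(\mathbf{0})\|_2\le\|\vx\|_2$; that first equality silently assumes $T(\mathbf{0})=\mathbf{0}$, so your translation counterexample is valid (the constant maps of \Cref{thm:not-scale-op} also break the lemma as stated) and your added hypothesis is genuinely needed. That hypothesis does hold for every map $G_{i,j}$ the paper actually feeds into the lemma in the proof of \Cref{thm:scale-op}, since each is either linear or max/min pooling, though positive homogeneity is only a sufficient condition for $T(\mathbf{0})=\mathbf{0}$, not an equivalent one.
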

\begin{proof}
$\|T(\vx)\|_2
= \|T(\vx)-T(\mathbf{0})\|_2
\le \|\vx-\mathbf{0}\|_2
= \|\vx\|_2.
$\end{proof}

\begin{lemma}
\label{lem:comp}
Suppose for scales $s_i,s_j$ we have
$f(\cdot|s_i) = G_{i,j}\circ f(\cdot|s_j)$ for some non-expansive $G_{i,j}$. Then, $\|f(\vx|s_i)\|_2 \le \|f(\vx|s_j)\|_2.$ for all $\vx$.
Moreover, if there exists some $\vy$ in the range of $f(\cdot|s_j)$ with
$\|G_{i,j}(\vy)\|_2 < \|\vy\|_2$, then there exists $\vx'\in\gX$ such that
$\|f(\vx'|s_i)\|_2 < \|f(\vx'|s_j)\|_2$.
\end{lemma}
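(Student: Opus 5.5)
The first claim follows by applying Lemma~\ref{lem:nonexp-energy} directly to $G_{i,j}$ at the point $f(\vx|s_j)$. The one thing to watch is that Lemma~\ref{lem:nonexp-energy} uses $T(\mathbf{0})=\mathbf{0}$; this is implicit in its proof, so we take $G_{i,j}(\mathbf{0})=\mathbf{0}$ as given in the same sense. This holds for every concrete instance we care about, since all of them are linear or positively homogeneous (pooling, subsampling, moving average, wavelet projections). With that in hand, for any $\vx\in\gX$ we set $\vy=f(\vx|s_j)$ and write
\begin{align*}
\|f(\vx|s_i)\|_2=\|G_{i,j}(\vy)\|_2=\|G_{i,j}(\vy)-G_{i,j}(\mathbf{0})\|_2\le\|\vy-\mathbf{0}\|_2=\|f(\vx|s_j)\|_2 ,
\end{align*}
which is exactly the weak energy-reduction inequality.

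For the strict part, the point $\vy$ lies in the range of $f(\cdot|s_j)$. So we choose $\vx'\in\gX$ with $f(\vx'|s_j)=\vy$, which exists by the definition of the range. By the composition identity, $\|f(\vx'|s_i)\|_2=\|G_{i,j}(\vy)\|_2<\|\vy\|_2=\|f(\vx'|s_j)\|_2$. This gives the required witness.

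Neither step involves real computation. The only potential obstacle is the hidden hypothesis $G_{i,j}(\mathbf{0})=\mathbf{0}$. If we want to avoid assuming it, we can add it explicitly to the statement; for linear $G_{i,j}$ it is automatic. Alternatively, in the intended applications we can take $G_{i,j}$ to be a linear contraction, namely the pooling, subsampling, or wavelet map that takes scale $s_j$ to scale $s_i=m s_j$. In that case Lemma~\ref{lem:nonexp-energy} applies verbatim, and the proof is a two-line chain of inequalities.
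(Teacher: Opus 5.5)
Your argument is the same as the paper's. The weak inequality comes from applying Lemma~\ref{lem:nonexp-energy} to $G_{i,j}$ at $f(\vx|s_j)$, and the strict one from pulling the witness $\vy$ back through the range of $f(\cdot|s_j)$. Your caveat about $G_{i,j}(\mathbf{0})=\mathbf{0}$ is also correct. The paper's Lemma~\ref{lem:nonexp-energy} silently uses $T(\mathbf{0})=\mathbf{0}$ and is false without it, since a translation $\vu\mapsto\vu+\vb$ with $\vb\neq\mathbf{0}$ is non-expansive. So the weak part of this lemma genuinely needs that hypothesis. It holds for every linear or max-pooling $G_{i,j}$ to which the lemma is actually applied.
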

\begin{proof}
By Lemma~\ref{lem:nonexp-energy}, we have $\|G_{i,j}(\vu)\|_2\le \|\vu\|_2$ for all $\vu$.
Thus for any $\vx$,
\begin{align*}
\|f(\vx|s_i)\|_2
&= \|G_{i,j}(f(\vx|s_j))\|_2
\le \|f(\vx|s_j)\|_2.
\end{align*}
If some $\vy$ in the range of $f(\cdot|s_j)$ satisfies
$\|G_{i,j}(\vy)\|_2<\|\vy\|_2$, pick $\vx'$ with $f(\vx'|s_j)=\vy$.
Then
\begin{align*}
\|f(\vx'|s_i)\|_2
= \|G_{i,j}(\vy)\|_2
< \|\vy\|_2
= \|f(\vx'|s_j)\|_2.
\end{align*}
\end{proof}

\paragraph{Average Pooling.}

Define 
\begin{align*}
\big(f_{\mathrm{avg}}(\vx|s)\big)_m
:= \frac{1}{s}\sum_{i\in B_m} x_i,
\end{align*}
where $B_m=\{(m-1)s+1,\dots,ms\}$ for $m=1,\dots,L_s:=P/s$.

\emph{Non-expansiveness.}
For any $\vx,\vy$ and any block $B_m$,
\begin{align*}
\big|f_{\mathrm{avg}}(\vx|s)_m - f_{\mathrm{avg}}(\vy|s)_m\big|
&= \frac{1}{s}\Big|\sum_{i\in B_m} (x_i-y_i)\Big| \\
&\le \frac{1}{s}\sqrt{s}\,\|\vx|_{B_m}-\vy|_{B_m}\|_2
\le \|\vx|_{B_m}-\vy|_{B_m}\|_2.
\end{align*}
Summing squares over $m$ yields
\[
\|f_{\mathrm{avg}}(\vx|s)-f_{\mathrm{avg}}(\vy|s)\|_2\le\|\vx-\vy\|_2,
\]
so non-expansiveness holds.

\emph{Energy Reduction.}
Let $s_i = m\cdot s_j$ with $m>1$.
Each $s_i$-block is a union of $m$ consecutive $s_j$-blocks.
Let $y = f_{\mathrm{avg}}(\vx|s_j)\in\mathbb{R}^{L_{s_j}}$, and define
\begin{align*}
\big(G_{i,j}(y)\big)_k := \frac{1}{m}\sum_{r=0}^{m-1} y_{km+r},
\end{align*}
i.e., average pooling on $y$ with block size $m$.
Then one checks directly that
\begin{align*}
f_{\mathrm{avg}}(\vx|s_i) = G_{i,j}\big(f_{\mathrm{avg}}(\vx|s_j)\big),
\end{align*}
and $G_{i,j}$ is the same blockwise averaging operator as above, hence non-expansive.
By Lemma~\ref{lem:comp}, for all $\vx$,
\begin{align*}
\|f_{\mathrm{avg}}(\vx|s_i)\|_2 \le \|f_{\mathrm{avg}}(\vx|s_j)\|_2.
\end{align*}

To see strictness for some input, choose $\vy$ with two distinct values in each $m$-block, so that the block average has strictly smaller energy than $\vy$.
Since $f_{\mathrm{avg}}(\cdot|s_j)$ is surjective onto $\mathbb{R}^{L_{s_j}}$, there exists $\vx'$ with $f_{\mathrm{avg}}(\vx'|s_j)=\vy$, and Lemma~\ref{lem:comp} yields
\begin{align*}
\|f_{\mathrm{avg}}(\vx'|s_i)\|_2 < \|f_{\mathrm{avg}}(\vx'|s_j)\|_2.
\end{align*}

\paragraph{Max- and Min-pooling.}

Define
\begin{align*}
\big(f_{\max}(\vx|s)\big)_m := \max_{i\in B_m} x_i,\,
\big(f_{\min}(\vx|s)\big)_m := \min_{i\in B_m} x_i.
\end{align*}

\emph{Non-expansiveness.}
For any $\vu,\vv\in\mathbb{R}^s$,
\begin{align*}
\big|\max_i u_i - \max_i v_i\big|
\le \max_i |u_i-v_i|
\le \|\vu-\vv\|_2,
\end{align*}
and similarly for the minimum.
Thus each block map is $1$-Lipschitz; by summing over blocks we get
\[
\|f_{\max}(\vx|s)-f_{\max}(\vy|s)\|_2\le\|\vx-\vy\|_2
\]
and likewise for $f_{\min}$.

\emph{Energy Reduction.}
Let $s_i = m\cdot s_j$ with $m>1$. Define $y=f_{\max}(\vx|s_j)$ and
\begin{align*}
\big(G_{i,j}(y)\big)_k := \max_{r\in\{0,\cdots,m-1\}} y_{km+r},
\end{align*}
i.e., max-pooling over $m$ consecutive entries of $y$.
Then
\begin{align*}
f_{\max}(\vx|s_i) = G_{i,j}\big(f_{\max}(\vx|s_j)\big),
\end{align*}
and the same max-Lipschitz argument as above shows $G_{i,j}$ is non-expansive.
Thus by Lemma~\ref{lem:comp}, for all $\vx$,
\begin{align*}
\|f_{\max}(\vx|s_i)\|_2 \le \|f_{\max}(\vx|s_j)\|_2.
\end{align*}

To get strictness, choose $\vy$ such that in each group of $m$ entries there is at least one strictly smaller than the maximum; then $\|G_{i,j}(\vy)\|_2<\|\vy\|_2$.
As before, we can realize such a $\vy$ as $f_{\max}(\vx'|s_j)$ by setting one large value and smaller ones within each block. The same reasoning applies to min-pooling by symmetry.

\paragraph{Moving Average.}

Define
\begin{align*}
\big(f_{\mathrm{ma}}(\vx|s)\big)_t
:= \frac{1}{s}\sum_{i=0}^{s-1} x_{t-i},
\qquad t=1,\dots,P.
\end{align*}
This is convolution with $h_s = \frac{1}{s}\mathbf{1}_s$.

\emph{Non-expansiveness.}
By Young’s inequality,
\begin{align*}
\|f_{\mathrm{ma}}(\vx|s)-f_{\mathrm{ma}}(\vy|s)\|_2
= \|h_s*(\vx-\vy)\|_2
\le \|h_s\|_1\|\vx-\vy\|_2
= \|\vx-\vy\|_2.
\end{align*}
Thus $f_{\mathrm{ma}}(\cdot|s)$ is non-expansive.

\emph{Energy Reduction.}
Let $X(\omega)$ and $Y_s(\omega)$ denote the Fourier transforms of $\vx$ and $f_{\mathrm{ma}}(\vx|s)$, respectively, and $H_s(\omega)$ the frequency response of $h_s$.
We have
\begin{align*}
Y_s(\omega) = H_s(\omega)X(\omega),\,
H_s(\omega)
= \frac{1}{s}\sum_{k=0}^{s-1} e^{-i\omega k}
= \frac{1}{s}\,\frac{\sin(s\omega/2)}{\sin(\omega/2)}e^{-i\omega(s-1)/2}.
\end{align*}
Thus
\begin{align*}
\|f_{\mathrm{ma}}(\vx|s)\|_2^2
= \frac{1}{2\pi}\int_{-\pi}^{\pi} |H_s(\omega)|^2 |X(\omega)|^2\,d\omega.
\end{align*}

For $s_i = m\cdot s_j$, write $\alpha = s_j\omega/2$.
Then
\begin{align*}
\frac{|H_{s_i}(\omega)|}{|H_{s_j}(\omega)|}
= \frac{1}{m}\,\frac{|\sin(m\alpha)|}{|\sin(\alpha)|}
\le 1,
\end{align*}
since $|\sin(m\alpha)|\le m|\sin(\alpha)|$ for all $m\in\mathbb{N}$ and all $\alpha$.
Hence $|H_{s_i}(\omega)|^2\le |H_{s_j}(\omega)|^2$ for all $\omega$, and therefore, for any $\vx$,
\begin{align*}
\|f_{\mathrm{ma}}(\vx|s_i)\|_2^2
\le \|f_{\mathrm{ma}}(\vx|s_j)\|_2^2.
\end{align*}

To see when the inequality is strict, define
\[
D(\omega)\;:=\;|H_{s_j}(\omega)|^2 - |H_{s_i}(\omega)|^2 \;\ge 0.
\]
The above argument shows $D(\omega)\ge 0$ for all $\omega$, and $H_{s_i}\not\equiv H_{s_j}$ implies that $D(\omega)>0$ on a set of nonzero measure.
For any $\vx$,
\begin{align*}
\|f_{\mathrm{ma}}(\vx|s_j)\|_2^2 - \|f_{\mathrm{ma}}(\vx|s_i)\|_2^2
&= \frac{1}{2\pi}\int_{-\pi}^{\pi} D(\omega)\,|X(\omega)|^2\,d\omega.
\end{align*}
Therefore,
\[
\|f_{\mathrm{ma}}(\vx|s_j)\|_2^2 > \|f_{\mathrm{ma}}(\vx|s_i)\|_2^2
\]
for every input $\vx$ whose spectrum $|X(\omega)|^2$ places nonzero mass on the region where $D(\omega)>0$.
Equivalently, the inequality is strict for any non-degenerate $\vx$ whose energy is not concentrated entirely on the set where $|H_{s_i}(\omega)|=|H_{s_j}(\omega)|$.

\paragraph{Subsampling.}

Define
\begin{align*}
\big(f_{\mathrm{sub}}(\vx|s)\big)_m := x_{(m-1)s+1},
\qquad L_s = P/s.
\end{align*}

\emph{Non-expansiveness.}
For each block $B_m$, define $g_s(\mathbf{u}) = u_1$.
Then
\begin{align*}
|g_s(\mathbf{u})-g_s(\mathbf{v})| = |u_1-v_1|\le \|\mathbf{u}-\mathbf{v}\|_2.
\end{align*}
Applying the blockwise arguments shows that $f_{\mathrm{sub}}(\cdot|s)$ is non-expansive.

\emph{Energy Reduction.}
If $s_i = m\cdot s_j$, pure decimation satisfies
\begin{align*}
f_{\mathrm{sub}}(\vx|s_i)
= f_{\mathrm{sub}}\big(f_{\mathrm{sub}}(\vx|s_j)\big|m\big).
\end{align*}
So $f_{\mathrm{sub}}(\cdot|s_i) = G_{i,j}\circ f_{\mathrm{sub}}(\cdot|s_j)$ with $G_{i,j}$ another subsampling operator, which is non-expansive.
Hence Lemma~\ref{lem:comp} yields
\begin{align*}
\|f_{\mathrm{sub}}(\vx|s_i)\|_2\le\|f_{\mathrm{sub}}(\vx|s_j)\|_2
\end{align*}
for all $\vx$. The strict inequality holds by setting $\vx$ as nonzero values concentrated on indices that are dropped at the coarser stride.

\paragraph{Segmentation.}

Define
\begin{align*}
f_{\mathrm{seg}}(\vx|s)
:= (x_1,\dots,x_{L_s}) \in \mathbb{R}^{L_s},\qquad L_s=P/s,
\end{align*}

\emph{Non-expansiveness.}
This is an orthogonal projection onto the first $L_s$ coordinates:
\begin{align*}
f_{\mathrm{seg}}(\vx|s) = R_s\vx,\quad R_sR_s^\top=I_{L_s}.
\end{align*}
Thus
\begin{align*}
\|f_{\mathrm{seg}}(\vx|s)-f_{\mathrm{seg}}(\vy|s)\|_2
= \|R_s(\vx-\vy)\|_2 \le \|\vx-\vy\|_2,
\end{align*}
so $f_{\mathrm{seg}}$ is non-expansive and
$\|f_{\mathrm{seg}}(\vx|s)\|_2\le\|\vx\|_2$.

\emph{Energy Reduction.}
If $s_i = m\cdot s_j$, then
\begin{align*}
f_{\mathrm{seg}}(\vx|s_i)
= R_{s_i}\vx
= R_{s_i}R_{s_j}^\top f_{\mathrm{seg}}(\vx|s_j).
\end{align*}
Here $G_{i,j}:=R_{s_i}R_{s_j}^\top$ is an orthogonal projection from $\mathbb{R}^{L_{s_j}}$ to $\mathbb{R}^{L_{s_i}}$, hence non-expansive.
Thus by Lemma~\ref{lem:comp}, for all $\vx$,
\begin{align*}
\|f_{\mathrm{seg}}(\vx|s_i)\|_2\le\|f_{\mathrm{seg}}(\vx|s_j)\|_2.
\end{align*}
The inequality is strict exactly when at least one of the discarded coordinates is nonzero, i.e., when
\[
x_k \neq 0 \quad\text{for some }k\in\{L_{s_i}+1,\dots,L_{s_j}\}.
\]
For such $\vx$ we have
\begin{align*}
\sum_{k=L_{s_i}+1}^{L_{s_j}} x_k^2 > 0\Rightarrow \|f_{\mathrm{seg}}(\vx|s_i)\|_2
< \|f_{\mathrm{seg}}(\vx|s_j)\|_2.    
\end{align*}

\paragraph{Wavelet Decomposition.}
Let $\{\phi_{s,k}\}_{k}$ be an orthonormal basis of $V_s$, and
define the wavelet approximation coefficients by
\begin{align*}
f_{\mathrm{wav}}(\vx|s)_k := \langle \vx,\phi_{s,k}\rangle.
\end{align*}
For standard wavelets, the approximation spaces $\{V_s\}$ are typically constructed as nested
subspaces indexed by dyadic scales $s=2^j$.
Here, we consider a more general multiplicative scale set $\mathcal{S}\subset\mathbb{Z}_+$ and
assume that to each $s\in\mathcal{S}$ we associate an approximation subspace
$V_s\subset\mathbb{R}^{L_s}$ with orthogonal projector $P_s:\mathbb{R}^L\to V_s$, such that
\begin{align*}
s_i = m s_j\ \text{ with }m>1
\quad\Longrightarrow\quad
V_{s_i}\subset V_{s_j}
\ \text{ and }\
P_{s_i} = P_{s_i}P_{s_j}.
\end{align*}
Equivalently, if $C_s:V_s\to\mathbb{R}^{L_s}$ denotes the isometry mapping
$P_s\vx$ to its coordinates in the basis $\{\phi_{s,k}\}_k$, then $f_{\mathrm{wav}}(\vx|s) = C_s P_s\vx \in\mathbb{R}^{L_s}.$

\emph{Non-expansiveness.}
For any $s\in\mathcal{S}$, $P_s$ is an orthogonal projector, hence
\begin{align*}
\|P_s\vx - P_s\vy\|_2 \le \|\vx - \vy\|_2.
\end{align*}
Since $C_s$ is an isometry on $V_s$,
\begin{align*}
\|f_{\mathrm{wav}}(\vx|s)-f_{\mathrm{wav}}(\vy|s)\|_2
= \| C_s(P_s\vx - P_s\vy)\|_2
= \|P_s\vx - P_s\vy\|_2
\le \|\vx - \vy\|_2,
\end{align*}
so $f_{\mathrm{wav}}(\cdot|s)$ is non-expansive.

\emph{Energy Reduction.}
Fix $s_i,s_j\in\mathcal{S}$ with $s_i = m s_j, m>1.$ 
For any $\vx$, we can write
\begin{align*}
f_{\mathrm{wav}}(\vx|s_j) = C_{s_j}P_{s_j}\vx,\quad
f_{\mathrm{wav}}(\vx|s_i) = C_{s_i}P_{s_i}\vx
= C_{s_i}P_{s_i}P_{s_j}\vx.
\end{align*}
Define a linear map $G_{i,j}:\mathbb{R}^{L_{s_j}}\to\mathbb{R}^{L_{s_i}}$ by
\begin{align*}
G_{i,j}
:= C_{s_i}P_{s_i}P_{s_j}^\top C_{s_j}^\top,
\quad\text{so that}\quad
f_{\mathrm{wav}}(\vx|s_i)
= G_{i,j}\big(f_{\mathrm{wav}}(\vx|s_j)\big).
\end{align*}
Here $C_{s_j},C_{s_i}$ are isometries and $P_{s_i}$ is an orthogonal projector, so
\[
\|G_{i,j}\|_{2}\le 1,
\]
i.e., $G_{i,j}$ is non-expansive.
By Lemma~\ref{lem:comp}, for all $\vx$,
\begin{align*}
\|f_{\mathrm{wav}}(\vx|s_i)\|_2
= \|G_{i,j}(f_{\mathrm{wav}}(\vx|s_j))\|_2
\le \|f_{\mathrm{wav}}(\vx|s_j)\|_2.
\end{align*}
For strictness, pick any $\vx\in V_{s_j}\setminus V_{s_i}$.
Then $P_{s_j}\vx = \vx$, while $P_{s_i}\vx$ is the orthogonal projection of $\vx$
onto the strictly smaller subspace $V_{s_i}$, so by Pythagoras theorem,
$\|P_{s_i}\vx\|_2 < \|\vx\|_2.$ Then,
\begin{align*}
\|f_{\mathrm{wav}}(\vx|s_i)\|_2
= \|C_{s_i}P_{s_i}\vx\|_2
< \|C_{s_j}\vx\|_2
= \|f_{\mathrm{wav}}(\vx|s_j)\|_2.
\end{align*}

\subsection{Proof of Theorem~\ref{thm:not-scale-op}}
\begin{proof}
We will show that each of  operations is scale-degenerate in the sense
that, for some pair $s_i>s_j$, we have
\[
\|f(\vx|s_i)\|_2 = \|f(\vx|s_j)\|_2
\quad\text{for all }\vx\in\gX,
\]
so there exists no input $\vx$ that yields strict inequality. In other words, all such families are degenerate in the scale parameter and therefore violate the energy reduction condition in Definition~\ref{def:scale-op}.

\paragraph{Constant Mappings.}  
If $f(\vx|s)=\vc$ for some fixed $\vc\in\R^{L_s}$, then 
\[
\|f(\vx|s_i)\|_2=\|f(\vx|s_j)\|_2=\|\vc\|_2.
\]

\paragraph{Permutations.}  
If $f(\vx|s)=\Pi_s \vx$ for a permutation matrix $\Pi_s$, then
\[
\|f(\vx|s_i)\|_2  = \|f(\vx|s_j)\|_2 = \|\vx\|_2.
\]

\paragraph{Additive Shifts.}  
If $f(\vx|s)=\vx+\vb$, then
\[
\|f(\vx|s_i)\|_2  = \|f(\vx|s_j)\|_2 = \|\vx+\vb\|_2.
\]

\paragraph{Scalar Multiplications.}  
If $f(\vx|s)=c\vx$ for a constant $c\in\R$, then
\[
\|f(\vx|s_i)\|_2=\|f(\vx|s_j)\|_2=|c|\|\vx\|_2.
\]

\paragraph{General Linear Maps.}  
If $f(\vx|s)=\mW\vx$, then 
\[
\|f(\vx|s_i)\|_2 = \|f(\vx|s_j)\|_2 = \|\mW\vx\|_2.
\]

\end{proof}

\subsection{Proof of \Cref{thm:quantization-gap}}
\begin{proof}
Fix any $\vu\in\Pi_\mathbf{1}$. Consider
\[
h_\vu(\tau):=g_\vx(t\mathbf{1}+\tau\vu),\qquad \tau\in[0,1].
\]
By the fundamental theorem of calculus,
\[
h_\vu(\tau)
= h_\vu(0)+\tau h_\vu'(0)+\int_0^\tau(\tau-r)\,h_\vu''(r)\,dr.
\]
Using $h_\vu''(r)\le \sup_{r'\in[0,\tau]}h_\vu''(r')$ and $\int_0^\tau(\tau-r)\,dr=\tau^2/2$,
we obtain for all $\tau\in[0,1]$,
\[
h_\vu(\tau)
\ge
h_\vu(0)+\tau h_\vu'(0)-\frac{\tau^2}{2}\sup_{r\in[0,\tau]}h_\vu''(r).
\]
By the chain rule,
\[
h_\vu'(0)=\langle\nabla_\vs g_\vx(t\mathbf{1}),\vu\rangle,
\qquad
h_\vu''(r)=\vu^\top\nabla_\vs^2 g_\vx(t\mathbf{1}+r\vu)\,\vu.
\]
Define the directional curvature envelope
\[
\beta(\vx)
:=
\sup_{\vu\in\Pi_\mathbf{1}}\sup_{\tau\in[0,1]}
\vu^\top\nabla_\vs^2 g_\vx(t\mathbf{1}+\tau\vu)\,\vu
\in(0,\infty).
\]
Then
\[
g_\vx(t\mathbf{1}+\tau\vu)
\ge
g_\vx(t\mathbf{1})
+\tau\langle\nabla_\vs g_\vx(t\mathbf{1}),\vu\rangle
-\frac{\beta(\vx)}{2}\tau^2,
\qquad \forall\tau\in[0,1].
\]
For fixed $\vu$, the right-hand side is a concave quadratic in $\tau$, whose maximum over
$\tau\in[0,1]$ is at least
\[
\frac{\langle\nabla_\vs g_\vx(t\mathbf{1}),\vu\rangle^2}{2\beta(\vx)}.
\]
Hence,
\[
\max_{\tau\in[0,1]} g_\vx(t\mathbf{1}+\tau\vu)-g_\vx(t\mathbf{1})
\ge
C(\vx)\,\langle\nabla_\vs g_\vx(t\mathbf{1}),\vu\rangle^2,
\]
where $C(\vx)=1/(2\beta(\vx))$. Taking the supremum over $\vu\in\Pi_\mathbf{1}$, we obtain
\[
\sup_{\vu\in\Pi_\mathbf{1}}\max_{\tau\in[0,1]} g_\vx(t\mathbf{1}+\tau\vu)-g_\vx(t\mathbf{1})
\ge
C(\vx)\sup_{\vu\in\Pi_\mathbf{1}}
\langle\nabla_\vs g_\vx(t\mathbf{1}),\vu\rangle^2.
\]
By definition, $\Phi_c(\vx)=\max_{\vs\in\gS_c}g_\vx(\vs)$.
Since $\{t\mathbf{1}+\tau\vu:\tau\in[0,1]\}\subset\gS_c$ for all $\vu\in\Pi_\mathbf{1}$,
\[
\Phi_c(\vx)\ge \max_{\tau\in[0,1]} g_\vx(t\mathbf{1}+\tau\vu),
\quad \forall\vu\in\Pi_\mathbf{1}.
\]
Moreover, by construction of $t$ as a maximizer along the uniform-scale line,
\[
g_\vx(t\mathbf{1})\ge \max_{s\in\gS_d}g_\vx(s\mathbf{1})=\Phi_d(\vx).
\]
Combining the above inequalities yields
\[
\Phi_c(\vx)-\Phi_d(\vx)
\ge
C(\vx)\sup_{\vu\in\Pi_\mathbf{1}}
\langle\nabla_\vs g_\vx(t\mathbf{1}),\vu\rangle^2,
\]
which proves the claim.
\end{proof}

\subsection{Proof of \Cref{thm:gaussian-scale-op}}
\begin{proof}
We verify (i) \emph{differentiability} in $\vs$ and (ii) \emph{consistency} to a (discrete) scaling
operator family.

\paragraph{Differentiability in $\vs$.}
For each fixed pair $(i,j)$, the mapping $s_i\mapsto e^{-s_i}I_{|i-j|}(s_i)$ is real-analytic on
$\R_{+}$ because $e^{-s_i}$ and the modified Bessel function $I_{\nu}(s_i)$ are analytic for
$s_i>0$ and integer $\nu$. Hence $\vs\mapsto \mK(\vs)$ is $C^\infty$ on $\R_+^M$, and so is
$\vs\mapsto k(\vx|\vs)=\mK(\vs)\vx$ for any $\vx$. This proves the differentiability
requirement in \Cref{def:ext-scale-op}.

\paragraph{Consistency to a Discrete Scaling Operator Family.}

Fix $s\in\sZ_+$ and set $\vs=s\mathbf{1}$. Then $\mK(s\mathbf{1})$ becomes space-invariant:
\[
[\mK(s\mathbf{1})]_{i,j}=e^{-s}I_{|i-j|}(s)=:g_s(i-j),
\]
so $k(\vx|s\mathbf{1})$ is the convolution of $\vx$ with the kernel
$g_s\in\ell^1(\sZ)$. Two classical properties hold:

\smallskip
\emph{(a) Semigroup and normalization.} For all $s,t\ge 0$,
$g_{s+t}=g_s\ast g_t$, and $\sum_{n\in\sZ} g_s(n)=1$
. Thus $\mK(s\mathbf{1})$ is a symmetric,
doubly-stochastic Markov operator.

\smallskip
\emph{(b) Frequency response and contraction.} Let $\widehat{g}_s(\omega)$ denote the discrete-time
Fourier transform (DTFT) of $g_s$. A standard computation gives
\[
\widehat{g}_s(\omega)
= \exp\!\big(s(\cos\omega-1)\big)
= \exp\!\big(-2s\sin^2(\omega/2)\big),\qquad \omega\in[-\pi,\pi],
\]
hence $|\widehat{g}_s(\omega)|\le 1$ with equality only at $\omega=0$ for $s>0$.

\smallskip
We now verify the two axioms in \Cref{def:scale-op} for the discrete family
$f(\vx|s):=k(\vx|s\mathbf{1})$:

\emph{Non-expansiveness.}
By Plancherel,
\[
\|f(\vx|s)-f(\vx'|s)\|_2^2
=\frac{1}{2\pi}\!\int_{-\pi}^{\pi}\!|\widehat{g}_s(\omega)|^2\,
|\widehat{\vx-\vx'}(\omega)|^2\,d\omega
\le \|\vx-\vx'\|_2^2,
\]
since $|\widehat{g}_s(\omega)|\le 1$. Thus
$\|f(\vx|s)-f(\vx'|s)\|_2 \le \|\vx-\vx'\|_2$.

\emph{Energy Reduction.}
Let $s_i,s_j\in\sZ_+$ with $s_i = m s_j$ for some integer $m>1$.
Let $X(\omega)$ denote the DTFT of $\vx$; then by Plancherel,
\[
\|f(\vx|s)\|_2^2
= \frac{1}{2\pi}\int_{-\pi}^{\pi} |\widehat{g}_s(\omega)|^2\,|X(\omega)|^2\,d\omega.
\]
Using the semigroup property, 
\[
g_{s_i} = g_{s_j}^{\ast m}
:= \underbrace{g_{s_j} \ast \cdots \ast g_{s_j}}_{m\ \text{times}},
\] 
the frequency responses satisfy
\[
\widehat{g}_{s_i}(\omega)
= \big(\widehat{g}_{s_j}(\omega)\big)^m,
\qquad
|\widehat{g}_{s_i}(\omega)|^2
= |\widehat{g}_{s_j}(\omega)|^{2m}.
\]
Since $|\widehat{g}_{s_j}(\omega)|\le 1$ for all $\omega$ and $m>1$, we have
\[
|\widehat{g}_{s_i}(\omega)|^2
= |\widehat{g}_{s_j}(\omega)|^{2m}
\le |\widehat{g}_{s_j}(\omega)|^{2}
\quad\text{for all }\omega.
\]
Therefore, for any $\vx$,
\begin{align*}
\|f(\vx|s_i)\|_2^2
&= \frac{1}{2\pi}\int_{-\pi}^{\pi} |\widehat{g}_{s_i}(\omega)|^2\,|X(\omega)|^2\,d\omega \\
&\le \frac{1}{2\pi}\int_{-\pi}^{\pi} |\widehat{g}_{s_j}(\omega)|^2\,|X(\omega)|^2\,d\omega \\
&= \|f(\vx|s_j)\|_2^2,
\end{align*}
which proves the \emph{energy reduction} inequality
\[
\|f(\vx|s_i)\|_2 \le \|f(\vx|s_j)\|_2
\quad\text{for all }\vx
\quad\text{whenever }s_i = m s_j,\;m>1.
\]

To see that the inequality is strict for some $\vx$, note that for $s_j>0$ we have
\[
|\widehat{g}_{s_j}(\omega)| < 1
\quad\text{for all }\omega\neq 0,
\]
and hence
\[
|\widehat{g}_{s_i}(\omega)|^2 = |\widehat{g}_{s_j}(\omega)|^{2m}
< |\widehat{g}_{s_j}(\omega)|^2
\quad\text{for all }\omega\neq 0.
\]
Choose any $\vx$ whose spectrum is not supported solely at $\omega=0$, i.e., such that
$|X(\omega)|^2>0$ on a set of nonzero measure in $\{\omega\in[-\pi,\pi]:\omega\neq 0\}$.
Then on a set of positive measure,
\[
|\widehat{g}_{s_i}(\omega)|^2\,|X(\omega)|^2
< |\widehat{g}_{s_j}(\omega)|^2\,|X(\omega)|^2,
\]
so the inequality above is strict:
\[
\|f(\vx|s_i)\|_2 < \|f(\vx|s_j)\|_2.
\]
Thus the LDG kernel family satisfies both the universal inequality and the existence of a strict
example required by the energy reduction condition in Definition~\ref{def:scale-op}.

\end{proof}

\subsection{Proof of \Cref{thm:discrete-gaussian-optimality}}
\subsubsection{Discrete scale-space axioms}
\label{ssec:dssa-ldg-uniqueness}

\paragraph{Axioms (Discrete, 1D, Symmetric Case).}
For each scale $s\ge 0$, let $\mathcal{T}_s:\ell^2(\mathbb{Z})\to\ell^2(\mathbb{Z})$ denote the smoothing operator at scale $s$.
A family $\{\mathcal{T}_s\}_{s\ge 0}$ satisfies the \emph{discrete scale-space axioms} if:

\begin{enumerate}[label=\textbf{(A\arabic*)}, leftmargin=20pt, nosep]
\item \textbf{Linearity and Shift Invariance.} $\mathcal{T}_s$ is linear and commutes with shifts: there exists $K_s\in\ell^1(\mathbb{Z})$ with $(\mathcal{T}_s x)[n]=(K_s*x)[n]$.
\item \textbf{Semigroup over Scale and Identity at $0$.} $\mathcal{T}_0=\mathrm{Id}$ and $\mathcal{T}_{s+t}=\mathcal{T}_s\circ\mathcal{T}_t$, i.e., $K_{s+t}=K_s*K_t$ for all $s,t\ge 0$.
\item \textbf{Regularity in Scale.} The map $s\mapsto K_s$ is $C^1$ in $s$ in the $\ell^1$ topology.
\item \textbf{DC Normalization.} $\sum_{n\in\mathbb{Z}} K_s[n]=1$ for all $s\ge 0$ (constants are preserved).
\item \textbf{Non-enhancement of Local Extrema (NELE).} If $y(\cdot;s)=\mathcal{T}_s x$ has a local maximum at index $n$, then $\partial_s y[n;s]\le 0$ (resp.\ $\partial_s y[n;s]\ge 0$ at local minima).
\item \textbf{Symmetry.} $K_s[n]=K_s[-n]$ for all $n\in\mathbb{Z}$ and $s\ge 0$.
\end{enumerate}

\subsubsection{Uniqueness of the discrete Gaussian}

\begin{lemma}
\label{lem:cm-exist}
Let $A$ be a translation-invariant, symmetric operator on $\ell^2(\mathbb{Z})$ with impulse response $a[\cdot]\in\ell^1(\mathbb{Z})$ such that
\[
\sum_{n\in\mathbb{Z}} a[n]=0,\qquad a[n]\ge 0\ \text{for }n\ne 0,\qquad a[-n]=a[n].
\]
Then its Fourier symbol $\phi(\omega)=\widehat{A}(\omega)=\sum_{n} a[n]e^{-i n\omega}$ admits the Lévy-Khintchine form
\[
\phi(\omega)=\sum_{m=1}^{\infty} c_m\big(\cos(m\omega)-1\big),\qquad c_m\ge 0,
\]
with real coefficients $c_m=2a[m]$. Conversely, given any sequence $\{c_m\}_{m\ge 1}$ with $c_m\ge 0$ and $\sum_{m\ge 1} c_m<\infty$, defining
\[
a[0]=-\sum_{m\ge 1} c_m,\qquad a[\pm m]=\tfrac{1}{2}c_m\ (m\ge 1),
\]
yields a symmetric Toeplitz operator $A$ with the above properties and symbol $\phi(\omega)=\sum_{m\ge 1} c_m(\cos m\omega-1)$.
\end{lemma}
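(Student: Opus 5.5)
We compute the Fourier symbol of $A$ directly from its impulse response, and then check the converse by reversing the computation. Since $a\in\ell^1(\mathbb{Z})$, the series $\phi(\omega)=\sum_n a[n]e^{-in\omega}$ converges absolutely and uniformly on $[-\pi,\pi]$. This lets us regroup terms freely. We pair the indices $n$ and $-n$ for each $m\ge 1$ and use the symmetry $a[-m]=a[m]$:
\[
a[m]e^{-im\omega}+a[-m]e^{im\omega}=2a[m]\cos(m\omega).
\]
Hence $\phi(\omega)=a[0]+\sum_{m\ge1}2a[m]\cos(m\omega)$, and in particular the symbol is real and even.

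Next we use the zero-sum condition. It gives $a[0]=-\sum_{n\neq 0}a[n]=-\sum_{m\ge1}2a[m]$, where the series converges absolutely by the $\ell^1$ assumption. Substituting this value of $a[0]$ and merging the two convergent series term by term, we obtain
\[
\phi(\omega)=\sum_{m\ge1}2a[m]\bigl(\cos(m\omega)-1\bigr).
\]
Setting $c_m:=2a[m]$, the hypothesis $a[n]\ge0$ for $n\ne0$ gives $c_m\ge0$. Moreover $\sum_m c_m=\sum_{n\ne0}a[n]\le\|a\|_1<\infty$.

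For the converse, we start from $c_m\ge0$ with $\sum_m c_m<\infty$ and define $a$ as stated. Then $\|a\|_1=2\sum_m c_m<\infty$, so $a\in\ell^1(\mathbb{Z})$. By construction $a$ is symmetric and nonnegative off the origin, and $\sum_n a[n]=-\sum_m c_m+2\cdot\tfrac12\sum_m c_m=0$. The associated convolution operator $A=a*\cdot$ is bounded on $\ell^2(\mathbb{Z})$ by Young's inequality, with $\|A\|\le\|a\|_1$, and it is translation-invariant, i.e.\ Toeplitz. It is also self-adjoint, because its impulse response is real and even. Running the pairing computation above in reverse recovers the symbol $\sum_m c_m(\cos m\omega-1)$.

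The only delicate point is justifying the rearrangements: splitting $a[0]$ into a series and merging it with the cosine series. Both steps follow from absolute summability. We will also remark that the name ``Lévy–Khintchine form'' here refers simply to this explicit cosine expansion, so no general Lévy–Khintchine theorem is needed.
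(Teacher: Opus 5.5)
Your proposal is correct and follows the paper's proof step for step. You pair $n$ with $-n$ to get $a[0]+2\sum_{m\ge1}a[m]\cos(m\omega)$, substitute $a[0]=-2\sum_{m\ge1}a[m]$ from the zero-sum condition, set $c_m=2a[m]$, and reverse the construction for the converse; you are just more explicit than the paper about the absolute-summability justifications and about the checks in the converse ($\ell^1$ membership, zero sum, boundedness, self-adjointness).
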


\begin{proof}
Since $a\in\ell^1(\mathbb{Z})$ and $a[-n]=a[n]$,
\[
\phi(\omega)=\sum_{n\in\mathbb{Z}} a[n]e^{-i n\omega}
= a[0]+2\sum_{m=1}^{\infty} a[m]\cos(m\omega).
\]
The zero-sum condition gives $a[0]=-2\sum_{m\ge 1}a[m]$, hence
\[
\phi(\omega)=2\sum_{m=1}^{\infty} a[m]\big(\cos(m\omega)-1\big)
=\sum_{m=1}^{\infty} c_m\big(\cos(m\omega)-1\big),\quad c_m:=2a[m]\ge 0.
\]
Absolute summability of $a$ implies $\sum_m c_m<\infty$, ensuring uniform convergence and continuity of $\phi$. The converse follows by reversing the construction.
\end{proof}

\begin{theorem}[Uniqueness of the discrete Gaussian scale space]
\label{thm:ldg-unique}
Among symmetric kernel families, the \emph{discrete Gaussian}
\[
K_s[n]=e^{-\alpha s}\,I_{|n|}(\alpha s),\qquad \alpha>0,
\]
is the unique generalized scaling operator family that satisfies \textbf{(A1)}–\textbf{(A6)}. Equivalently,
\[
\widehat{K}_s(\omega)=\exp\big(\alpha s(\cos\omega-1)\big)
=\exp\big(-2\alpha s\,\sin^2(\omega/2)\big),\qquad \omega\in[-\pi,\pi],
\]
and any other family satisfying the axioms coincides with $K_s$ up to reparameterizing scale $s\mapsto \alpha s$.
\end{theorem}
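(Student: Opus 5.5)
The plan is to pass to the Fourier side and use the semigroup axiom to reduce the whole family to a single generator. By \textbf{(A1)}, each $\mathcal{T}_s$ is convolution with $K_s\in\ell^1(\mathbb{Z})$. Its symbol $\widehat{K}_s(\omega)$ is continuous on $[-\pi,\pi]$, and it is real-valued by \textbf{(A6)}. By \textbf{(A2)}, $\widehat{K}_{s+t}=\widehat{K}_s\widehat{K}_t$ and $\widehat{K}_0\equiv 1$. By \textbf{(A3)}, $s\mapsto \widehat{K}_s(\omega)$ is $C^1$ uniformly in $\omega$, since the $\ell^1$ norm dominates the sup norm of the symbol. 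The multiplicative Cauchy equation with continuity then gives $\widehat{K}_s(\omega)=\exp(s\,\phi(\omega))$, where $\phi=\partial_s\widehat{K}_s|_{s=0}$. Equivalently, $\mathcal{T}_s=e^{sA}$, where $A$ is the convolution operator with impulse response $a=\lim_{s\downarrow 0}(K_s-\delta)/s$ in $\ell^1$. Positivity of $\widehat K_s$ near $s=0$ is automatic, and it propagates to all $s$ by the semigroup law.

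Next we identify the structure of $a$. From \textbf{(A4)}, $\widehat K_s(0)=1$ for all $s$, so $\phi(0)=\sum_n a[n]=0$. From \textbf{(A6)}, $a[-n]=a[n]$. The key claim is that \textbf{(A5)} forces $a[n]\ge 0$ for $n\neq 0$. At $s=0$ we have $\partial_s y[n;0]=(a*x)[n]$. For fixed $m\neq 0$, take an input $x$ with $x[0]=0$, a local maximum at $0$, $x[-m]=0$, and $x[k]=-1$ for all other $k$. Then NELE gives $(a*x)[0]=-\sum_{k\ne 0,m}a[k]\le 0$. A cleaner choice is to let $x$ vanish near the neighbours of $0$ except at $-m$, and set $x[-m]$ slightly below $0$ while the other entries are very negative. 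Taking limits should isolate $a[m]\ge 0$. With these properties, \Cref{lem:cm-exist} applies and gives $\phi(\omega)=\sum_{m\ge1}c_m(\cos m\omega-1)$ with $c_m\ge 0$.

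The main obstacle is eliminating every $c_m$ with $m\ge 2$, that is, showing that the generator is a nearest-neighbour second difference. The local-extremum argument above treats only isolated maxima with unrestricted neighbourhoods, so it gives nonnegativity but not locality. I would apply NELE along the flow at positive $s$, following Lindeberg's argument. If $c_m>0$ for some $m\ge 2$, construct a signal $x$ with a strict local maximum at $0$, equal values at $\pm1$ just below $x[0]$, and large values $x[\pm m]>x[0]$. Since the neighbours at $\pm1$ are arbitrarily close to the maximum, their contribution $c_1(x[\pm1]-x[0])$ is arbitrarily small. Then $(a*x)[0]=\sum_k a[k](x[-k]-x[0])$ has a positive term from $k=\pm m$. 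This violates $\partial_s y[0;0]\le 0$. Therefore $c_m=0$ for $m\ge 2$, and $\phi(\omega)=\alpha(\cos\omega-1)$ with $\alpha=c_1\ge 0$. The case $\alpha=0$ gives the identity family, which violates the strict part of energy reduction in \Cref{def:scale-op}, so $\alpha>0$.

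Finally, we invert the symbol. The generating function $e^{(z+z^{-1})\alpha s/2}=\sum_n I_n(\alpha s)z^n$ gives $K_s[n]=e^{-\alpha s}I_{|n|}(\alpha s)$. Uniqueness holds up to the reparameterization $s\mapsto\alpha s$, because $\alpha$ is the only free parameter left in the generator. Conversely, this family satisfies \textbf{(A1)}–\textbf{(A6)}: the kernel is positive, the generator is a second difference so NELE holds, and it is a generalized scaling operator family by \Cref{thm:gaussian-scale-op}.
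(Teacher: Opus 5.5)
Your route is the paper's: pass to symbols, use \textbf{(A2)}--\textbf{(A3)} to get $\widehat K_s=e^{s\phi}$ with a symmetric, zero-sum generator $a=\partial_sK_s|_{s=0}$, use NELE at $s=0$ to get the sign pattern required by \Cref{lem:cm-exist}, use NELE again to kill $c_m$ for $m\ge2$, and invert via the Bessel generating function. Your locality step is correct and more explicit than the paper's ``standard extremum test''. You only need to fix the entries you left unspecified: take $x[0]=0$, $x[\pm1]=-\epsilon$, $x[\pm m]=M$, and $x=0$ elsewhere. Then $x\in\ell^2(\mathbb{Z})$ and $(a*x)[0]=-2\epsilon a[1]+2Ma[m]$ exactly. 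Your use of strict energy reduction to exclude $\alpha=0$ also supplies a step the paper only asserts.

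The step that fails as written is the nonnegativity $a[m]\ge0$ for $m\ne0$. This step carries weight: without it \Cref{lem:cm-exist} does not apply, and your locality argument gives only $a[m]\le0$, not $a[m]=0$.
\begin{itemize}
\item Your first test signal yields $\sum_{k\ne0,m}a[k]\ge0$ via NELE. By the zero-sum condition this is $a[m]\le-a[0]$, which says nothing about the sign of $a[m]$. That signal is also not in $\ell^2(\mathbb{Z})$.
\item Your ``cleaner'' choice, with very negative far entries, makes $(a*x)[0]$ dominated by $-M\sum_{\mathrm{far}}a[k]$, so $a[m]$ is again not isolated.
\end{itemize}
The other entries must be zero, not very negative. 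For $x=-\delta_{-m}$ the origin is a (non-strict) local maximum, and NELE gives $(a*x)[0]=-a[m]\le0$.

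If only strict extrema count, use two tests instead. First, take your locality signal with the spikes negated, $x[\pm m]=-M$. This gives $-2\epsilon a[1]-2Ma[m]\le0$, hence $a[m]\ge0$ for $|m|\ge2$ as $\epsilon\to0$. Second, use the strict local-minimum test $x=\delta_1+\delta_{-1}$, which gives $2a[1]\ge0$.

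With this repair, your test signals show directly that $a$ is supported on $\{-1,0,1\}$ with $a[\pm1]\ge0$, so the lemma is not even needed. Your remark that the $s=0$ test gives nonnegativity but not locality is therefore not right: the same test gives both.
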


\begin{proof}
By \textbf{(A1)}, $(\mathcal{T}_s x)[n]=(K_s*x)[n]$ for some $K_s\in\ell^1(\mathbb{Z})$. Let $\widehat{K}_s(\omega)$ be its DTFT. By \textbf{(A2)}–\textbf{(A3)}, for each fixed $\omega$ the map $s\mapsto \widehat{K}_s(\omega)$ is a continuous one-parameter semigroup with $\widehat{K}_0(\omega)=1$, hence there exists a real, even \emph{generator} $\phi(\omega)$ such that
\begin{equation*}
\label{eq:semigroup}
\widehat{K}_s(\omega)=\exp\big(s\,\phi(\omega)\big),\qquad s\ge 0.
\end{equation*}
From \textbf{(A4)}, $\phi(0)=0$. From \textbf{(A6)}, $\phi$ is real and even.

Let $y(\cdot;s)=\mathcal{T}_s x$. Differentiating in $s$,
\[
\partial_s y=A*x,\qquad A:=\partial_s K_s\big|_{s=0},
\]
so that $\widehat{A}(\omega)=\phi(\omega)$. By \textbf{(A4)} and \textbf{(A6)}, $A$ is symmetric with zero row sum. The NELE axiom \textbf{(A5)} at vanishing scale enforces the discrete maximum principle for $A$: off-diagonal coefficients are nonnegative while the diagonal is nonpositive, and rows sum to zero. Thus $A$ fits the hypotheses of \Cref{lem:cm-exist}, and
\begin{equation*}
\label{eq:lk}
\phi(\omega)=\sum_{m=1}^{\infty} c_m\big(\cos(m\omega)-1\big),\qquad c_m\ge 0.
\end{equation*}

We now show that \textbf{(A5)} forces $c_m=0$ for all $m\ge 2$. Consider signals supported on three consecutive sites and apply \textbf{(A5)} at $s=0$ to both local maxima and minima. A standard extremum test yields that any positive coefficient at distance $m\ge 2$ would produce, for sufficiently small $s>0$, an increase at a newly formed off-center extremum before nearest neighbors equilibrate. Hence $c_m=0$ for $m\ge 2$, and
\[
\phi(\omega)=c_1(\cos\omega-1)=:\alpha(\cos\omega-1),\qquad \alpha:=c_1>0.
\]
Substituting into \eqref{eq:semigroup} gives
\[
\widehat{K}_s(\omega)=\exp\big(\alpha s(\cos\omega-1)\big).
\]
Taking the inverse DTFT yields $K_s[n]=e^{-\alpha s}I_{|n|}(\alpha s)$, the discrete Gaussian kernel. This family satisfies \textbf{(A1)}–\textbf{(A6)}; conversely, any other family obeying the axioms has the same generator up to the multiplicative constant $\alpha$, i.e., a reparameterization of scale.
\end{proof}

\newpage

\section{{Properties of Scaling Operator Family}}
\label{sec:scale-op-prop}

Definition~\ref{def:scale-op} characterizes a valid scaling operator through two essential properties: 
(1) non-expansiveness, requiring that applying the operator does not amplify differences between nearby inputs, and 
(2) energy reduction, requiring that coarser scales retain strictly less signal energy over multiplicable scales. 
These properties ensure that scaling progressively smooths the input while preserving stability.

Figure~\ref{fig:appendix-scale-op} reports the empirical behavior of all operator families in \Cref{tab:scale-operators} across seven datasets. 
For every dataset, the induced output differences remain strictly smaller than the input differences, confirming non-expansiveness. 
Likewise, the average energy decreases monotonically over dyadic scales, indicating that each operator consistently removes high-frequency variation as the scale increases. 
Taken together, these results demonstrate that all operator families satisfy both conditions of Definition~\ref{def:scale-op} not only in theory but also in practice, across diverse real-world time series.

\begin{figure*}[h]
\centering
\resizebox{0.8\columnwidth}{!}{%
\begin{minipage}{\columnwidth}

    \begin{subfigure}{0.48\textwidth}
        \centering
        \includegraphics[
        width=\linewidth,
        alt={Figure 8a shows the ETTh1 results. Across scales 1, 2, 4, 8, and 16, most scaling operators reduce both average signal energy and transformed-sequence distance as the scale increases. Moving average preserves much of the signal energy, while the other operators show stronger energy reduction.}
        ]{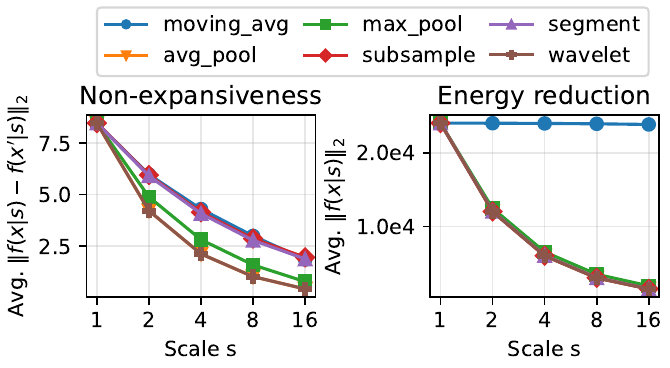}
        \caption{ETTh1}
    \end{subfigure}
    \hfill
    \begin{subfigure}{0.48\textwidth}
        \centering
        \includegraphics[
        width=\linewidth,
        alt={Figure 8b shows the ETTh2 results. The same overall pattern appears as in ETTh1: larger scales produce lower transformed-sequence distances, and most operators reduce energy substantially. Moving average remains comparatively stable in energy, while pooling, subsampling, segmentation, and wavelet decomposition decrease more sharply.}
        ]{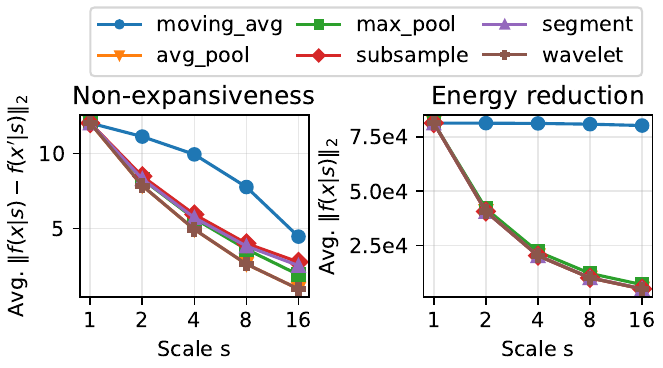}
        \caption{ETTh2}
    \end{subfigure}


    \begin{subfigure}{0.48\textwidth}
        \centering
        \includegraphics[
        width=\linewidth,
        alt={Figure 8c shows the ETTm1 results. Increasing the scale consistently reduces the distance between transformed neighboring sequences. Energy also decreases for most operators, with moving average again changing only slightly compared with the other scaling families.}
        ]{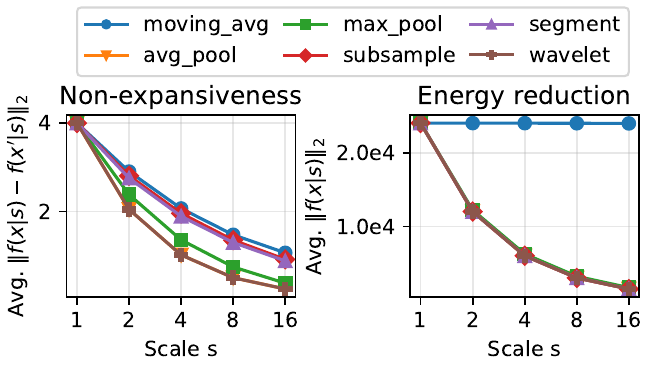}
        \caption{ETTm1}
    \end{subfigure}
    \hfill
    \begin{subfigure}{0.48\textwidth}
        \centering
        \includegraphics[
        width=\linewidth,
        alt={Figure 8d shows the ETTm2 results. All operators become less expansive at larger scales, as indicated by decreasing transformed-sequence distance. Energy reduction is strongest for pooling, subsampling, segmentation, and wavelet decomposition, while moving average stays nearly flat.}
        ]{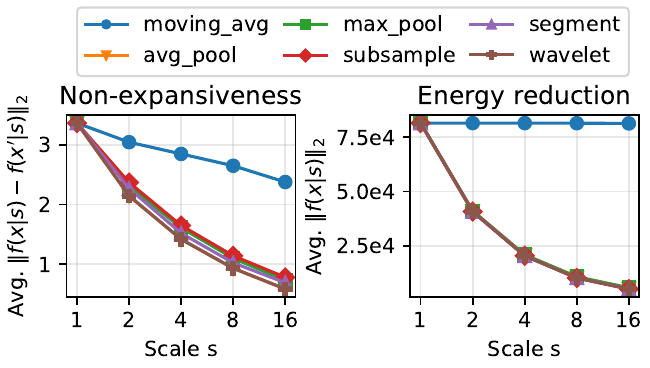}
        \caption{ETTm2}
    \end{subfigure}


    \begin{subfigure}{0.48\textwidth}
        \centering
        \includegraphics[
        width=\linewidth,
        alt={Figure 8e shows the Electricity results. This dataset has much larger absolute energy and distance values than the ETT datasets, but the qualitative behavior is similar: increasing the scale lowers transformed-sequence distance and reduces energy for most operators.}
        ]{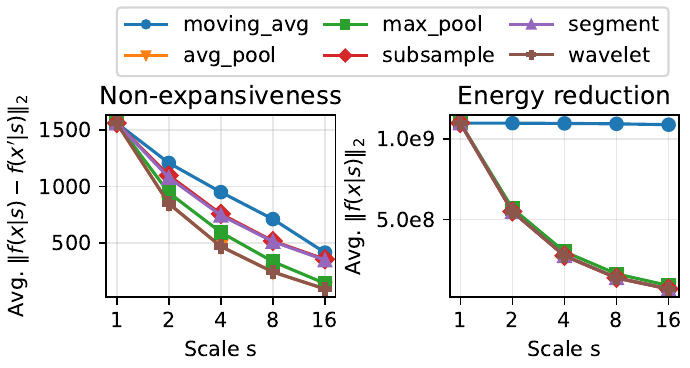}
        \caption{Electricity}
    \end{subfigure}
    \hfill
    \begin{subfigure}{0.48\textwidth}
        \centering
        \includegraphics[
        width=\linewidth,
        alt={Figure 8f shows the Exchange results. Although the absolute values are smaller than in the other datasets, the operators follow the same trend: larger scales yield lower transformed-sequence distances, and most operators reduce average energy with scale.}
        ]{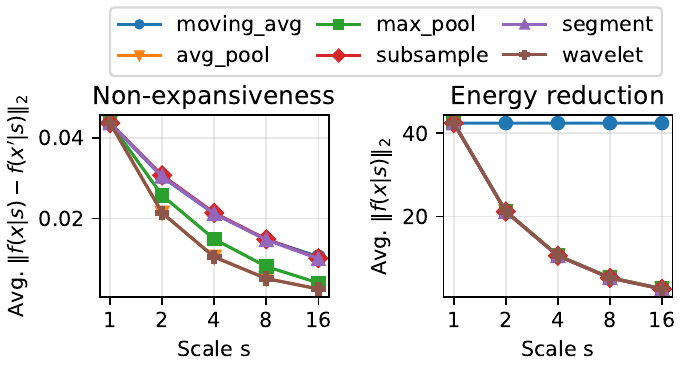}
        \caption{Exchange}
    \end{subfigure}


    \begin{subfigure}{0.48\textwidth}
        \centering
        \includegraphics[
        width=\linewidth,
        alt={Figure 8g shows the Weather results. The operators again show decreasing transformed-sequence distance as scale increases. Energy reduction is evident for most scaling families, while moving average retains comparatively more energy across scales.}
        ]{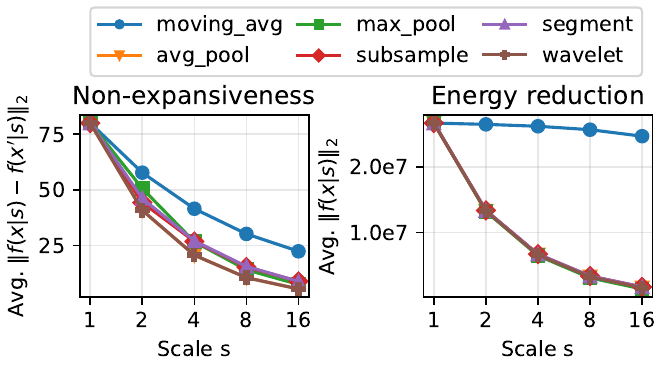}
        \caption{Weather}
    \end{subfigure}
    \hfill
    \begin{subfigure}{0.48\textwidth}
        \centering
    \end{subfigure}

\end{minipage}
}
\caption{Energy reduction and non-expansiveness of the scaling operator families across all datasets.}
    \label{fig:appendix-scale-op}
\end{figure*}

\newpage

\section{Empirical Validation of Theorem~\ref{thm:quantization-gap}}
\label{sec:thm-42}

We additionally provide empirical validation of Theorem~\ref{thm:quantization-gap} across all datasets. In this experiment, the scaling operator family $f(\mathbf{x}|\mathbf{s})$ is instantiated using an extended mean-pooling family, where integer scales recover standard pooling and continuous scales are obtained through interpolation. The results show that forecastability is consistently maximized at non-discrete scales and that the expressivity gap exceeds the theoretical lower bound across all samples.

\vspace{10pt}
\begin{figure*}[h]
\centering
\resizebox{0.8\columnwidth}{!}{%
\begin{minipage}{\columnwidth}

    \begin{subfigure}{0.49\textwidth}
        \centering
        \includegraphics[
        width=\linewidth,
        alt={Figure 9a shows the ETTh1 results. Forecastability reaches its maximum at the non-discrete scale s = 12.4, rather than at one of the discrete scales. The expressivity-gap plot shows all sample points above the lower-bound reference, confirming that the continuous-scale operator is more expressive than the discrete-scale version.}
        ]{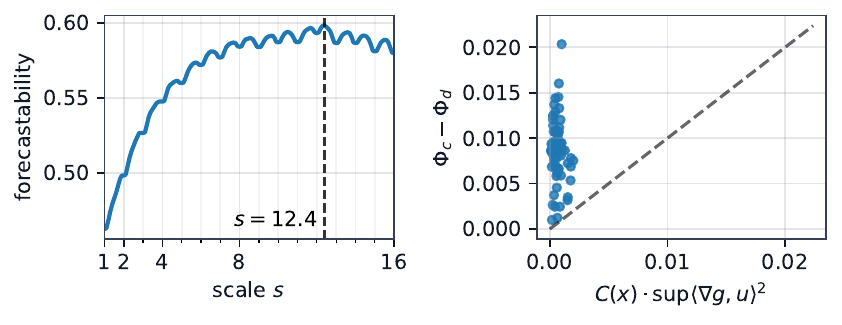}
        \caption[Empirical validation of Theorem 4.2.]{ETTh1}
    \end{subfigure}
    \hfill
    \begin{subfigure}{0.49\textwidth}
        \centering
        \includegraphics[
        width=\linewidth,
        alt={Figure 9b shows the ETTh2 results. The same pattern holds, with forecastability maximized at the non-discrete scale s = 15.6. The observed expressivity gaps remain above the theoretical lower bound for all samples.}
        ]{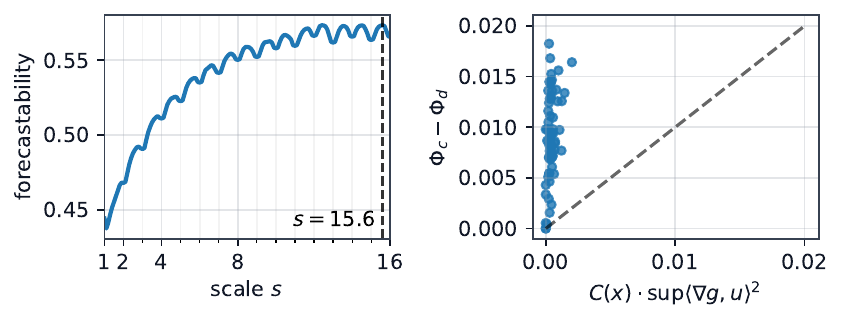}
        \caption{ETTh2}
    \end{subfigure}

    \begin{subfigure}{0.49\textwidth}
        \centering
        \includegraphics[
        width=\linewidth,
        alt={Figure 9c shows the ETTm1 results. Forecastability increases toward a maximum at s = 15.6, indicating that the best scale is again non-discrete. The gap-versus-bound plot supports the theorem by showing that the empirical expressivity gap exceeds the lower bound.}
        ]{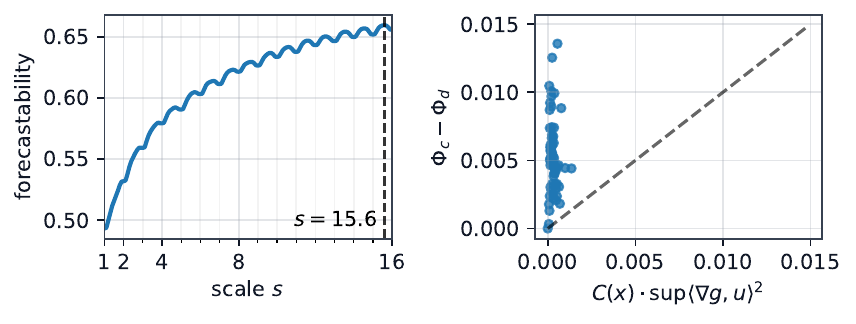}
        \caption{ETTm1}
    \end{subfigure}
    \hfill
    \begin{subfigure}{0.49\textwidth}
        \centering
        \includegraphics[
        width=\linewidth,
        alt={Figure 9d shows the ETTm2 results. Forecastability is highest at the non-discrete scale s = 15.6. The right plot shows that every sample satisfies the theoretical lower-bound condition for the expressivity gap.}
        ]{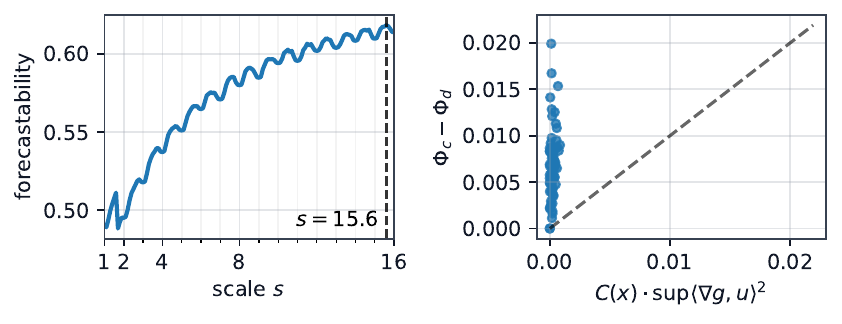}
        \caption{ETTm2}
    \end{subfigure}

    \begin{subfigure}{0.49\textwidth}
        \centering
        \includegraphics[
        width=\linewidth,
        alt={Figure 9e shows the Electricity results. The peak forecastability occurs at s = 11.7, a non-discrete scale. The expressivity-gap plot follows the same validation pattern, with all points lying above the lower-bound reference.}
        ]{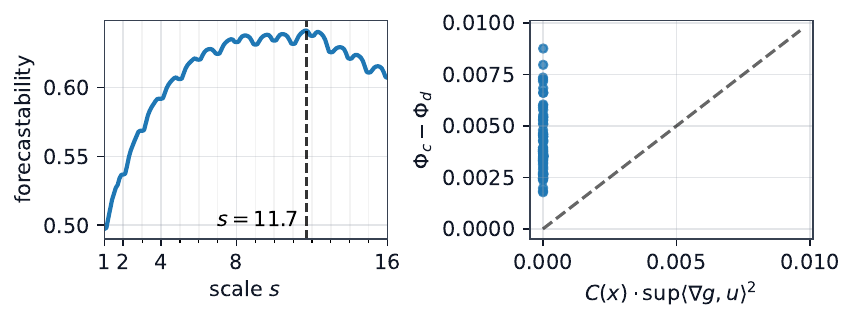}
        \caption{Electricity}
    \end{subfigure}
    \hfill
    \begin{subfigure}{0.49\textwidth}
        \centering
        \includegraphics[
        width=\linewidth,
        alt={Figure 9f shows the Exchange results. Forecastability is maximized at s = 15.6, showing that continuous scales provide a better representation than the fixed discrete scale set. The empirical expressivity gap is consistently larger than the theoretical lower bound.}
        ]{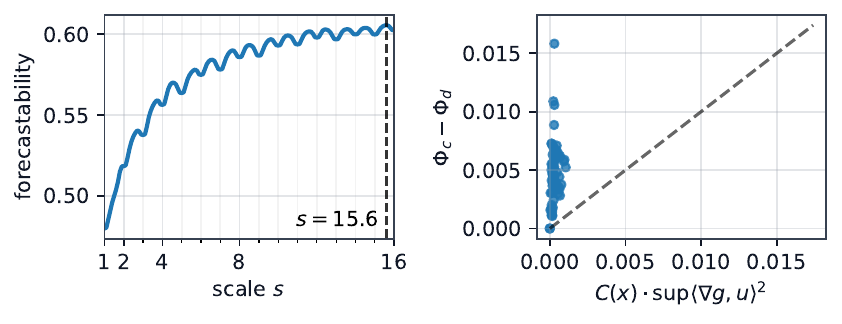}
        \caption{Exchange}
    \end{subfigure}

    \begin{subfigure}{0.49\textwidth}
        \centering
        \includegraphics[
        width=\linewidth,
        alt={Figure 9g shows the Weather results. Forecastability again reaches its maximum at the non-discrete scale s = 15.6. The right plot confirms that the continuous-scale advantage satisfies the lower-bound guarantee across all samples.}
        ]{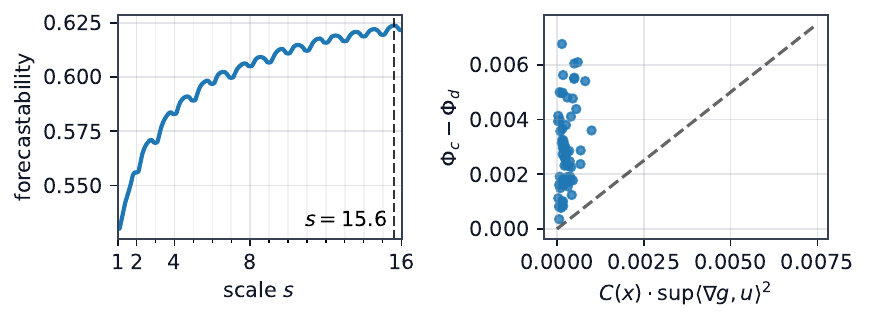}
        \caption{Weather}
    \end{subfigure}

\end{minipage}
}
\vspace{5pt}
\caption[Empirical validation of Theorem 4.2.]{Empirical validation of Theorem~4.2 across all datasets. The scaling operator $f$ is instantiated using an extended mean-pooling family. The left panels show that forecastability is consistently maximized at non-discrete scales, indicating $\Phi_c > \Phi_d$. The right panels show that the expressivity gap $\Phi_c - \Phi_d$ exceeds the theoretical lower bound for all samples.
These results confirm that continuous scales yield strictly higher expressivity than discrete scales, supporting the theorem.}
\label{fig:appendix-theorem42}
\end{figure*}

\newpage

\section{Design Guide for Generalized Scaling Operator Families}
\label{sec:design-guide-ext}

The generalized scaling operator family proposed in Section~\ref{sec:generalized-scale-genearator}
provides a flexible framework for constructing continuous, input-dependent multi-scale
transformations.  
Here we summarize practical guidelines for designing new instances of this family and
illustrate how these principles are instantiated in our implementation.

Classical downsampling operators are defined only at integer scales.  
To make these operators learnable and input-adaptive, we extend the discrete index
$j \in \{1,\dots,J\}$ to a continuous scale variable $s_t \in [1,J]$ at each position~$t$.
Instead of selecting a single integer operator, we blend nearby integer scales through a
convex combination
\[
    f(\vx | s_t)
    = \sum_{j=1}^J w_j(s_t)\, f_j(\vx)_t,
\]
where $f_j$ is the original operator at scale~$j$.  
This convex construction preserves non-expansiveness whenever each $f_j$ is
non-expansive, and it ensures that $f(\cdot | s)$ varies smoothly with respect to~$s$.

To satisfy Definition~\ref{def:scale-op}, the generalized operator must reproduce the
original discrete operator exactly when $s_t$ is an integer.  
This is achieved by choosing weights $\{w_j(s)\}$ such that
\[
   w_j(k) = \mathbf{1}\{j = k\}
   \qquad \text{for all integers } k \in \{1,\dots,J\}.
\]
A convenient construction uses $C^\infty$ bump functions:
\[
   w_j(s) = 
   \frac{\phi(s-j)}{\sum_{k=1}^J \phi(s-k)}, \qquad
   \phi(u) =
   \begin{cases}
      \exp\bigl(-1/(1-u^2)\bigr), & |u| < 1,\\
      0, & \text{otherwise}.
   \end{cases}
\]
At integer $s=k$, the only nonzero bump is $\phi(0)$, which yields $w_k(k)=1$ and
$w_{j\neq k}(k)=0$.  
Thus the generalized operator reduces exactly to the classical operator at integer scales,
ensuring consistency.

Because the bump function $\phi(\cdot)$ is $C^\infty$ and the normalization preserves
smoothness, the weights $w_j(s)$ are differentiable in~$s$ for all non-integer values.
Since the output is a convex combination of the $f_j(\vx)$, the entire operator $f(
\vx|s)$
is differentiable in~$s$, enabling backpropagation through the scale field and supporting
input-adaptive scale prediction via a learnable scale head.

\newpage

\section{Experimental Details}
\label{sec:exp-detail}

For the evaluation of forecasting models, we follow the protocol used in TimesNet~\cite{wu2023timesnet}. 
For \textbf{long-term forecasting} (see Table~\ref{tab:long-term}), 
we report the mean square error (MSE) and mean absolute error (MAE). 
For \textbf{short-term forecasting} (see Table~\ref{tab:short-term}), 
we adopt symmetric mean absolute percentage error (SMAPE), mean absolute scaled error (MASE), 
and overall weighted average (OWA). These metrics are defined as follows:
\begin{align}
\text{SMAPE} &= \frac{200}{H} \sum_{i=1}^{T} \frac{|\mX_i - \hat{\mX}_i|}{|\mX_i| + |\hat{\mX}_i|},\\
\text{MASE} &= \frac{1}{T} \sum_{i=1}^{T} \frac{|\mX_i - \hat{\mX}_i|}{\frac{1}{T-m} \sum_{j=m+1}^{T} |\mX_j - \mX_{j-m}|}, \\
\text{OWA}  &= \tfrac{1}{2} \left[ \frac{\text{SMAPE}}{\text{SMAPE}_{\text{Naive2}}} + \frac{\text{MASE}}{\text{MASE}_{\text{Naive2}}} \right],
\end{align}
where $m$ denotes the seasonal periodicity of the data, and $\mX \in \mathbb{R}^{T \times C}$ and 
$\hat{\mX} \in \mathbb{R}^{T \times C}$ represent the ground truth and predictions for $T$ future time steps with $C$ dimensions. 

For methods that do not originally provide results on the M4 dataset, we use official implementations and conduct a controlled hyperparameter search for fair comparison. All models are implemented in PyTorch~\citep{paszke2019pytorch}, with the modified Bessel function implemented via SciPy, and all experiments are executed on a single \textbf{NVIDIA RTX A6000} GPU. Dataset statistics and detailed experimental configurations are provided in Tables~\ref{tab:data-statistic} and~\ref{tab:hyperparameter-configuration}, respectively.

\begin{table}[h]
\centering
\caption{
Dataset descriptions for long-term and short-term forecasting benchmarks. 
The dataset size is organized as (Train, Validation, Test). For long-term tasks, 
we adopt multivariate datasets covering diverse domains. For short-term tasks, we follow the official M4 benchmark, 
which consists of univariate series at different temporal frequencies.
Following TimeMixer++~\citep{wang2025timemixer}, forecastability is measured as one minus spectral entropy~\citep{goerg2013forecastable}, with higher values indicating better predictability.
}
\label{tab:data-statistic}
\resizebox{\linewidth}{!}{
\begin{tabular}{l|l|c|c|c|l|l|c}
\toprule
\textbf{Tasks} & \textbf{Dataset} & \textbf{Dim} & \textbf{Series Length} & \textbf{Dataset Size} & \textbf{Domain} & \textbf{Frequency} & \textbf{Forecast.} \\
\midrule
\multirow{8}{*}{\shortstack{Forecasting \\ (Long-term)}} 
& ETTm1        & 7   & \{96, 192, 336, 720\} & (34465, 11521, 11521) & Temperature      & 15 min & 0.46 \\
& ETTm2        & 7   & \{96, 192, 336, 720\} & (34465, 11521, 11521) & Temperature & 15 min & 0.55 \\
& ETTh1        & 7   & \{96, 192, 336, 720\} & (8545, 2881, 2881)    & Temperature      & Hourly  & 0.38 \\
& ETTh2        & 7   & \{96, 192, 336, 720\} & (8545, 2881, 2881)    & Temperature      & Hourly  & 0.45 \\
& Electricity  & 321 & \{96, 192, 336, 720\} & (18317, 2633, 5261)   & Electricity      & Hourly  & 0.77 \\
& Traffic      & 862 & \{96, 192, 336, 720\} & (12185, 1757, 3509)   & Transportation   & Hourly  & 0.68 \\
& Exchange     & 8   & \{96, 192, 336, 720\} & (5120, 665, 1422)     & Exchange rate          & Daily   & 0.41 \\
& Weather      & 21  & \{96, 192, 336, 720\} & (36792, 5271, 10540)  & Weather          & 10 min  & 0.75 \\
\midrule
\multirow{6}{*}{\shortstack{Forecasting \\ (Short-term)}} 
& M4-Yearly     & 1 & 6  & (23000, 0, 23000) & Demographic & Yearly    & 0.43 \\
& M4-Quarterly  & 1 & 8  & (24000, 0, 24000) & Finance     & Quarterly & 0.47 \\
& M4-Monthly    & 1 & 18 & (48000, 0, 48000) & Industry    & Monthly   & 0.44 \\
& M4-Weekly     & 1 & 13 & (359, 0, 359)     & Macro       & Weekly    & 0.43 \\
& M4-Daily      & 1 & 14 & (4227, 0, 4227)   & Micro       & Daily     & 0.44 \\
& M4-Hourly     & 1 & 48 & (414, 0, 414)     & Other       & Hourly    & 0.46 \\
\bottomrule
\end{tabular}
}
\vspace{-5pt}
\end{table}
\begin{table}[h]
\centering
\caption[Experiment configurations.]{Experiment configurations of \method\ across datasets. 
All experiments adopt the ADAM~\citep{kinga2015method} optimizer. 
We report the model dimension ($d_{\text{model}}$), 
initial learning rate (LR), loss function, batch size, and training epochs.}
\label{tab:hyperparameter-configuration}
\resizebox{0.47\linewidth}{!}{
\begin{tabular}{c|cr|ccc}
\toprule
\textbf{Dataset} & $d_{\text{model}}$ & LR & Loss & Batch Size & Epochs \\
\midrule
ETTh1       & 32 & 0.0005 & MSE   & 32 & 10 \\
ETTh2       & 32 & 0.0005 & MSE   & 32 & 10 \\
ETTm1       & 8  & 0.02   & MSE   & 32 & 10 \\
ETTm2       & 16 & 0.0005 & MSE   & 32 & 10 \\
Exchange    & 16 & 0.0001 & MSE   & 32 & 10 \\
Electricity & 16 & 0.01   & MSE   & 32 & 10 \\
Traffic     & 16 & 0.01   & MSE   & 32 & 10 \\
Weather     & 8  & 0.005  & MSE   & 32 & 10 \\
M4          & 16 & 0.01   & SMAPE & 32 & 10 \\
\bottomrule
\end{tabular}
}
\end{table}

\newpage

\section{Full Long-Term Forecasting Results}
\label{sec:full}

We report the full long-term forecasting results across all datasets and prediction horizons in Table~\ref{tab:long-term-full}. Overall, \method achieves the best performance in 55 out of 80 settings and ranks second-best in 19 settings.
The improvements are consistently observed across diverse datasets and forecasting horizons, demonstrating the effectiveness of principled multi-scale modeling for diverse long-horizon forecasting tasks.

\begin{table*}[h]
\centering
\caption{Full long-term forecasting results across eight datasets with horizons $T \in \{96,192,336,720\}$ and input length fixed at 96. \method achieves the smallest forecasting errors in 55 out of 80 evaluation settings and the second-best in 19 cases.}
\resizebox{\textwidth}{!}{
\LARGE
\begin{tabular}{cc|cc|cc|cc|cc|cc|cc|cc|cc|cccc}
\toprule
\multicolumn{2}{c|}{Method}
 & \multicolumn{2}{c|}{\begin{tabular}[c]{@{}c@{}}\method\\ (Ours)\end{tabular}} 
 & \multicolumn{2}{c|}{\begin{tabular}[c]{@{}c@{}}AMD\\ \citeyearpar{hu2025adaptive}\end{tabular}} 
 & \multicolumn{2}{c|}{\begin{tabular}[c]{@{}c@{}}MultiPatch.\\ \citeyearpar{naghashi2025multiscale}\end{tabular}} 
 & \multicolumn{2}{c|}{\begin{tabular}[c]{@{}c@{}}WPMixer\\ \citeyearpar{murad2025wpmixer}\end{tabular}} 
 & \multicolumn{2}{c|}{\begin{tabular}[c]{@{}c@{}}TimeMixer\\ \citeyearpar{wang2024timemixer}\end{tabular}} 
 & \multicolumn{2}{c|}{\begin{tabular}[c]{@{}c@{}}MSGNet\\ \citeyearpar{cai2024msgnet}\end{tabular}} 
 & \multicolumn{2}{c|}{\begin{tabular}[c]{@{}c@{}}MICN\\ \citeyearpar{wang2023micn}\end{tabular}} 
 & \multicolumn{2}{c|}{\begin{tabular}[c]{@{}c@{}}TimesNet\\ \citeyearpar{wu2023timesnet}\end{tabular}} 
 & \multicolumn{2}{c}{\begin{tabular}[c]{@{}c@{}}Pyra.\\ \citeyearpar{liu2022pyraformer}\end{tabular}} \\
\cmidrule(lr){1-20}
 \multicolumn{2}{c|}{Metric} & MSE & MAE & MSE & MAE & MSE & MAE & MSE & MAE & MSE & MAE & MSE & MAE & MSE & MAE & MSE & MAE & MSE & MAE \\
\midrule
\multirow{5}{*}{\rotatebox{90}{Weather}} 
 & 96   & \best{0.160} & \best{0.204} & 0.182 & 0.227 & 0.172 & 0.211 & 0.164 & 0.210 & 0.166 & 0.214 & \second{0.161} & \second{0.209} & 0.192 & 0.250 & 0.172 & 0.221 & 0.195 & 0.281 \\
 & 192  & \second{0.209} & \best{0.248} & 0.231 & 0.266 & 0.218 & 0.254 & \second{0.209} & \second{0.250} & \best{0.208} & 0.251 & 0.217 & 0.257 & 0.233 & 0.289 & 0.225 & 0.265 & 0.245 & 0.322 \\
 & 336  & 0.270 & \second{0.293} & 0.283 & 0.302 & 0.275 & 0.296 & \best{0.264} & \best{0.290} & \second{0.265} & \second{0.293} & 0.280 & 0.303 & 0.283 & 0.332 & 0.289 & 0.309 & 0.307 & 0.365 \\
 & 720  & 0.348 & \second{0.345} & 0.357 & 0.350 & 0.355 & 0.348 & \best{0.344} & \best{0.343} & \second{0.345} & \second{0.345} & 0.373 & 0.362 & 0.354 & 0.388 & 0.361 & 0.355 & 0.394 & 0.420 \\
 & Avg  & 0.247 & \best{0.273} & 0.263 & 0.286 & 0.255 & 0.278 & \best{0.245} & \best{0.273} & \second{0.246} & \second{0.276} & 0.258 & 0.283 & 0.266 & 0.315 & 0.262 & 0.288 & 0.285 & 0.347 \\
\midrule
\multirow{5}{*}{\rotatebox{90}{Electricity}}
 & 96   & \best{0.146} & \best{0.241} & 0.187 & 0.269 & 0.173 & 0.259 & 0.167 & 0.259 & \second{0.157} & \second{0.248} & 0.169 & 0.281 & 0.171 & 0.284 & 0.165 & 0.268 & 0.283 & 0.377 \\
 & 192  & \best{0.163} & \best{0.257} & 0.191 & 0.274 & 0.181 & 0.267 & 0.179 & 0.268 & \second{0.169} & \second{0.260} & 0.189 & 0.298 & 0.178 & 0.290 & 0.182 & 0.284 & 0.296 & 0.391 \\
 & 336  & \best{0.179} & \best{0.273} & 0.206 & 0.290 & 0.199 & 0.285 & 0.197 & 0.288 & \second{0.187} & \second{0.277} & 0.201 & 0.310 & 0.189 & 0.301 & 0.198 & 0.299 & 0.306 & 0.401 \\
 & 720  & \second{0.213} & \best{0.304} & 0.248 & 0.323 & 0.239 & 0.318 & 0.232 & 0.315 & 0.227 & \second{0.312} & 0.238 & 0.340 & \best{0.208} & 0.318 & 0.231 & 0.325 & 0.305 & 0.393 \\
 & Avg  & \best{0.175} & \best{0.269} & 0.208 & 0.289 & 0.198 & 0.282 & 0.194 & 0.282 & \second{0.185} & \second{0.274} & 0.199 & 0.307 & 0.187 & 0.298 & 0.194 & 0.294 & 0.298 & 0.390 \\
\midrule
\multirow{5}{*}{\rotatebox{90}{Traffic}}
 & 96   & \best{0.431} & \best{0.288} & 0.544 & 0.345 & \second{0.471} & 0.318 & 0.528 & 0.347 & 0.477 & \second{0.309} & 0.599 & 0.353 & 0.516 & \second{0.309} & 0.590 & 0.318 & 0.678 & 0.384 \\
 & 192  & \best{0.444} & \best{0.296} & 0.527 & 0.335 & \second{0.480} & 0.319 & 0.511 & 0.337 & 0.488 & \second{0.312} & 0.634 & 0.372 & 0.535 & 0.317 & 0.614 & 0.327 & 0.672 & 0.377 \\
 & 336  & \best{0.461} & \best{0.303} & 0.538 & 0.339 & \second{0.499} & 0.329 & 0.519 & 0.337 & 0.506 & \second{0.319} & 0.663 & 0.391 & 0.548 & 0.322 & 0.640 & 0.342 & 0.681 & 0.381 \\
 & 720  & \best{0.494} & \best{0.320} & 0.573 & 0.358 & 0.537 & 0.351 & 0.548 & 0.350 & \second{0.535} & \second{0.332} & 0.721 & 0.420 & 0.574 & \second{0.332} & 0.662 & 0.350 & 0.709 & 0.395 \\
 & Avg  & \best{0.458} & \best{0.302} & 0.546 & 0.344 & \second{0.497} & 0.329 & 0.527 & 0.343 & 0.501 & \second{0.318} & 0.654 & 0.384 & 0.543 & 0.320 & 0.627 & 0.334 & 0.685 & 0.384 \\
\midrule
\multirow{5}{*}{\rotatebox{90}{Exchange}}
 & 96   & \second{0.084} & 0.204 & \best{0.083} & \best{0.201} & 0.089 & 0.208 & 0.086 & \second{0.202} & 0.090 & 0.210 & 0.104 & 0.230 & 0.093 & 0.226 & 0.112 & 0.242 & 0.630 & 0.645 \\
 & 192  & \best{0.174} & \second{0.297} & \second{0.175} & \second{0.297} & 0.187 & 0.308 & 0.176 & \best{0.296} & 0.185 & 0.305 & 0.200 & 0.322 & 0.184 & 0.331 & 0.214 & 0.334 & 0.935 & 0.782 \\
 & 336  & \best{0.322} & \best{0.411} & \second{0.326} & \second{0.412} & 0.357 & 0.436 & 0.343 & 0.421 & 0.351 & 0.428 & 0.394 & 0.460 & \best{0.322} & 0.443 & 0.379 & 0.452 & 1.204 & 0.873 \\
 & 720  & \second{0.833} & \best{0.687} & 0.847 & \second{0.693} & 0.913 & 0.724 & 0.900 & 0.712 & 0.911 & 0.713 & 1.027 & 0.772 & \best{0.780} & 0.694 & 0.961 & 0.746 & 1.956 & 1.117 \\
 & Avg  & \second{0.353} & \best{0.400} & 0.358 & \second{0.401} & 0.387 & 0.419 & 0.376 & 0.408 & 0.384 & 0.414 & 0.431 & 0.446 & \best{0.345} & 0.424 & 0.416 & 0.443 & 1.181 & 0.854 \\
\midrule
\multirow{5}{*}{\rotatebox{90}{ETTh1}}
 & 96   & \second{0.379} & \best{0.393} & 0.385 & \second{0.396} & \best{0.377} & 0.397 & 0.382 & 0.404 & 0.381 & 0.398 & 0.398 & 0.418 & 0.425 & 0.435 & 0.419 & 0.432 & 0.701 & 0.630 \\
 & 192  & \second{0.430} & \best{0.425} & 0.437 & \best{0.425} & \best{0.427} & 0.428 & 0.438 & \second{0.427} & 0.441 & 0.434 & 0.444 & 0.445 & 0.505 & 0.484 & 0.474 & 0.464 & 0.850 & 0.713 \\
 & 336  & 0.481 & \second{0.446} & 0.480 & \best{0.445} & \best{0.469} & 0.449 & 0.499 & 0.464 & \second{0.475} & 0.449 & 0.484 & 0.471 & 0.606 & 0.556 & 0.499 & 0.475 & 0.960 & 0.777 \\
 & 720  & \best{0.480} & \best{0.468} & \second{0.485} & \second{0.469} & 0.499 & 0.485 & 0.487 & 0.471 & 0.522 & 0.494 & 0.509 & 0.498 & 0.752 & 0.648 & 0.529 & 0.500 & 1.005 & 0.803 \\
 & Avg  & \best{0.443} & \best{0.433} & \second{0.447} & \second{0.434} & \best{0.443} & 0.440 & 0.451 & 0.442 & 0.455 & 0.444 & 0.459 & 0.458 & 0.572 & 0.531 & 0.480 & 0.468 & 0.879 & 0.731 \\

\midrule

\multirow{5}{*}{\rotatebox{90}{ETTh2}}
 & 96   & \best{0.289} & \best{0.339} & \second{0.291} & \second{0.340} & 0.293 & 0.347 & \second{0.291} & 0.342 & 0.296 & 0.348 & 0.327 & 0.369 & 0.358 & 0.405 & 0.327 & 0.369 & 1.439 & 0.922 \\
 & 192  & \best{0.369} & \second{0.394} & 0.373 & \best{0.391} & \second{0.372} & 0.396 & 0.376 & 0.397 & 0.375 & \second{0.395} & 0.408 & 0.417 & 0.497 & 0.484 & 0.404 & 0.412 & 5.640 & 1.894 \\
 & 336  & \best{0.415} & \best{0.427} & \second{0.416} & \best{0.427} & 0.420 & \second{0.431} & 0.435 & 0.439 & 0.430 & 0.438 & 0.429 & 0.439 & 0.618 & 0.552 & 0.459 & 0.456 & 4.800 & 1.844 \\
 & 720  & \second{0.431} & \second{0.446} & \best{0.424} & \best{0.441} & 0.431 & 0.450 & 0.459 & 0.462 & 0.456 & 0.460 & 0.446 & 0.459 & 0.856 & 0.666 & 0.451 & 0.459 & 4.466 & 1.824 \\
 & Avg  & \best{0.376} & \second{0.402} & \second{0.376} & \best{0.400} & 0.379 & 0.406 & 0.390 & 0.410 & 0.389 & 0.410 & 0.402 & 0.421 & 0.582 & 0.527 & 0.410 & 0.424 & 4.086 & 1.621 \\
\midrule

\multirow{5}{*}{\rotatebox{90}{ETTm1}}
 & 96   & 0.323 & \second{0.359} & 0.330 & 0.365 & \best{0.319} & \best{0.358} & \second{0.322} & \best{0.358} & \second{0.322} & \second{0.359} & 0.328 & 0.370 & \second{0.322} & 0.373 & 0.334 & 0.374 & 0.592 & 0.514 \\
 & 192  & \best{0.360} & \best{0.381} & 0.372 & 0.384 & 0.363 & 0.385 & \second{0.361} & \second{0.382} & 0.364 & 0.384 & 0.371 & 0.395 & \second{0.361} & 0.402 & 0.404 & 0.409 & 0.645 & 0.568 \\
 & 336  & \second{0.392} & \second{0.404} & 0.406 & 0.405 & 0.398 & 0.410 & \best{0.389} & \best{0.402} & 0.397 & 0.407 & 0.410 & 0.419 & 0.409 & 0.437 & 0.418 & 0.421 & 0.776 & 0.643 \\
 & 720  & \best{0.455} & \second{0.442} & 0.471 & \best{0.440} & 0.460 & 0.448 & 0.469 & 0.447 & \second{0.456} & 0.444 & 0.494 & 0.465 & 0.506 & 0.498 & 0.484 & 0.455 & 0.936 & 0.730 \\
 & Avg  & \best{0.383} & \best{0.397} & 0.395 & 0.399 & \second{0.385} & 0.400 & \second{0.385} & \best{0.397} & \second{0.385} & \second{0.398} & 0.401 & 0.412 & 0.399 & 0.427 & 0.410 & 0.415 & 0.737 & 0.614 \\
\midrule

\multirow{5}{*}{\rotatebox{90}{ETTm2}}
 & 96   & \best{0.174} & \best{0.257} & 0.183 & 0.267 & 0.177 & 0.259 & \second{0.175} & \best{0.257} & 0.176 & \second{0.258} & 0.179 & 0.263 & 0.186 & 0.283 & 0.187 & 0.266 & 0.387 & 0.464 \\
 & 192  & \best{0.239} & \best{0.299} & 0.246 & 0.306 & 0.243 & 0.304 & \second{0.240} & \best{0.299} & 0.241 & \second{0.302} & 0.250 & 0.308 & 0.278 & 0.352 & 0.257 & 0.309 & 0.676 & 0.623 \\
 & 336  & \best{0.296} & \best{0.337} & 0.305 & \second{0.342} & 0.305 & 0.346 & \second{0.304} & \second{0.342} & \second{0.304} & 0.345 & 0.311 & 0.345 & 0.405 & 0.437 & 0.322 & 0.349 & 1.196 & 0.836 \\
 & 720  & \best{0.394} & \best{0.394} & 0.404 & \second{0.397} & 0.407 & 0.404 & \second{0.398} & \second{0.397} & 0.405 & 0.402 & 0.416 & 0.406 & 0.546 & 0.515 & 0.427 & 0.409 & 3.588 & 1.460 \\
 & Avg  & \best{0.276} & \best{0.322} & 0.285 & 0.328 & 0.283 & 0.328 & \second{0.279} & \second{0.324} & 0.281 & 0.327 & 0.289 & 0.330 & 0.354 & 0.397 & 0.298 & 0.333 & 1.462 & 0.846 \\

\bottomrule
\label{tab:long-term-full}
\end{tabular}
}
\vspace{-15pt}
\end{table*}

\newpage

\section{Efficiency Evaluation on Additional Datasets}
\label{sec:efficiency}

We extend the efficiency analysis to more complex datasets, including Traffic and Electricity. As shown in Figure~\ref{fig:efficiency-v2}, the memory footprint of \method increases with the number of time-series variables because of its channel-independent design. In contrast, methods such as AMD jointly model cross-variable dependencies, which can lead to lower memory usage in these cases.
Nevertheless, \method achieves substantially faster training due to its fully parallel formulation. In particular, it is 25.0× faster on Traffic and 22.5× faster on Electricity compared to AMD.

\begin{figure*}[h]
  \centering
\begin{subfigure}[t]{0.45\textwidth}
\centering
\includegraphics[
width=\linewidth,
alt={Figure 10a shows the efficiency analysis on the Traffic dataset under the predict-720 setting. SiGMA achieves the lowest MSE with 3.8 GB memory usage and 116 milliseconds per training iteration. Compared with AMD, which uses 2.8 GB memory and takes 2897 milliseconds per iteration, SiGMA trains 25.0 times faster while achieving better accuracy. WPMixer is faster at 70 milliseconds per iteration and uses 1.5 GB memory, but has higher MSE.}
]{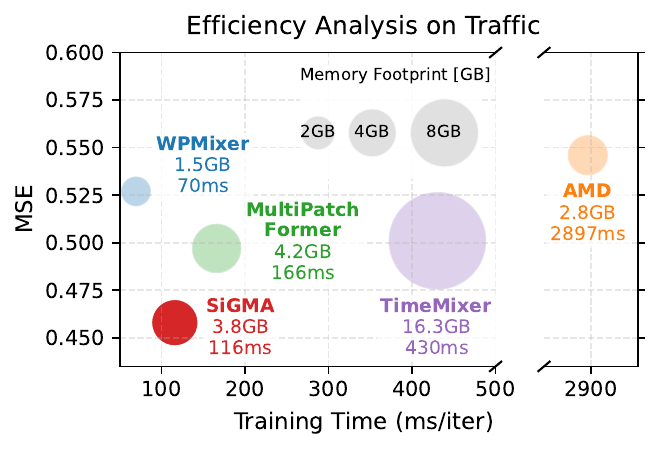}
\caption{Traffic}
\end{subfigure}
\hspace{10pt}
\begin{subfigure}[t]{0.45\textwidth}
\centering
\includegraphics[
width=\linewidth,
alt={Figure 10b shows the efficiency analysis on the Electricity dataset under the predict-720 setting. SiGMA achieves the lowest MSE with 1.4 GB memory usage and 47 milliseconds per training iteration. Compared with AMD, which uses 1.1 GB memory and takes 1060 milliseconds per iteration, SiGMA trains 22.5 times faster while achieving better accuracy. WPMixer is faster at 27 milliseconds per iteration and uses 0.6 GB memory, but has higher MSE.}
]{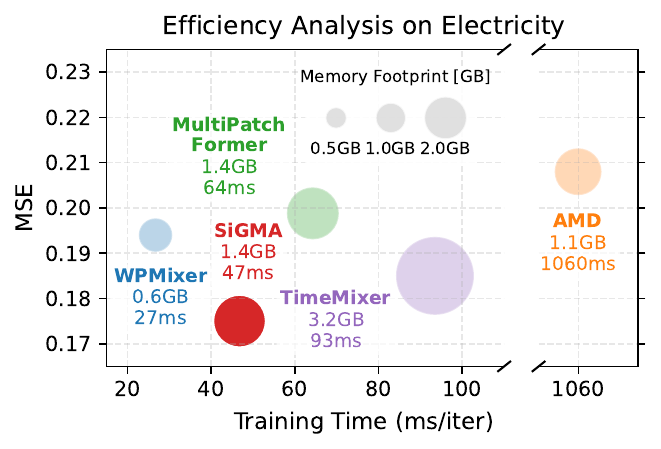}
\caption{Electricity}
\end{subfigure}
  \caption{Efficiency analysis on Traffic and Electricity with the predict-720 setting. \method
delivers the best accuracy while maintaining competitive training time and memory usage, showing
robustness on more complex datasets.}
  \vspace{-5pt}
  \label{fig:efficiency-v2}
\end{figure*}


\section{Computational Complexity and Efficient Application of LDG}
\label{sec:ldg-complexity}

The LDG operator adopts a distance-indexed parameterization, e.g., we learn a parameter vector $\vs \in \mathbb{R}^{L}$ indexed by pairwise distance $d = |i-j|$, and construct the kernel as
\begin{equation}
K_{ij} = e^{-s_d} I_d(s_d),
\end{equation}
where $I_d(\cdot)$ denotes the modified Bessel function of the first kind. 
As a result, each entry depends only on the relative distance $d$, implying that the induced kernel matrix is symmetric Toeplitz. In practice, we compute the corresponding distance-aware kernel and apply the operator through matrix multiplication. 
More generally, the LDG operator can be interpreted as a one-dimensional convolution with kernel coefficients $K_d$, leading to several implementation regimes:
\begin{itemize}[nosep, topsep=2pt]
    \item \textbf{Dense matrix multiplication:} $O(L^2)$
    \item \textbf{Truncated convolution:} $O(LW)$
    \item \textbf{FFT-based Toeplitz multiplication:} $O(L\log L)$~\citep{kressner2018fast}
\end{itemize}
Here, $W$ denotes the effective kernel support. A standard definition based on tail mass is
\begin{equation}
W(\epsilon)=
\min \left\{
w:
\sum_{|d|>w} K_d
\le
\epsilon \sum_d K_d
\right\},
\end{equation}
where $\epsilon$ is a user-specified tolerance~\citep{greengard1991fast}.

We report the computational time and memory usage of different LDG implementations in Figure~\ref{fig:ldg_cost}. The results show that truncated convolution exhibits the most favorable empirical scaling behavior, achieving approximately linear growth in both runtime and memory usage. By contrast, dense multiplication becomes increasingly expensive as $L$ grows, while FFT-based implementations incur substantial memory overhead and do not outperform simpler methods in the moderate-length regime.
In practice, FFT-based implementations involve additional overhead from zero-padding to length $2L-1$, complex-valued arithmetic, and multiple intermediate buffers (e.g., padded tensors and spectral-domain representations). Consequently, despite its favorable asymptotic complexity, FFT may be slower and more memory-intensive than simpler alternatives when $L$ is relatively small.

Overall, among the available strategies, truncated convolution provides the most favorable trade-off in practical settings, achieving near-linear scaling in both runtime and memory while effectively exploiting the strong locality induced by the LDG kernel.

\begin{figure}[h]
  \centering
  \begin{subfigure}{0.33\columnwidth}
    \centering
    \includegraphics[
    width=\linewidth,
    alt={Figure 11a shows computational time for three LDG implementation strategies as the input length L increases from 96 to 720. Dense multiplication grows from about 0.25 ms to 1.7 ms, FFT grows from about 0.3 ms to 1.35 ms, and truncated convolution grows more slowly from about 0.2 ms to 0.45 ms. Truncated convolution has the most favorable runtime scaling across all input lengths.}
    ]{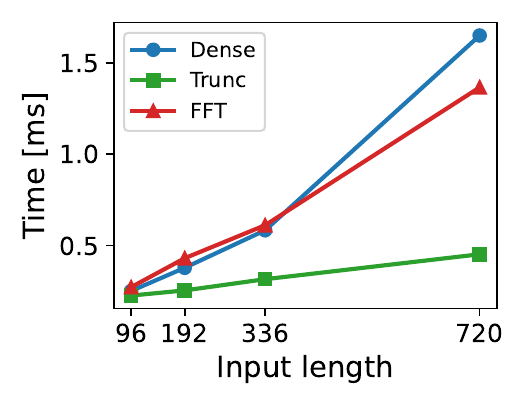}
    \caption{Computational Time}
  \end{subfigure}
  \hspace{1pt}
  \begin{subfigure}{0.33\columnwidth}
    \centering
    \includegraphics[
    width=\linewidth,
    alt={Figure 11b shows memory usage for the same three LDG implementation strategies as the input length L increases from 96 to 720. FFT has the largest memory growth, increasing from about 25 MB to 220 MB. Dense multiplication increases more moderately, reaching about 40 MB at L = 720, while truncated convolution remains the lowest, growing only from about 10 MB to 40 MB. Truncated convolution therefore provides the best memory-efficiency trend among the three strategies.}]{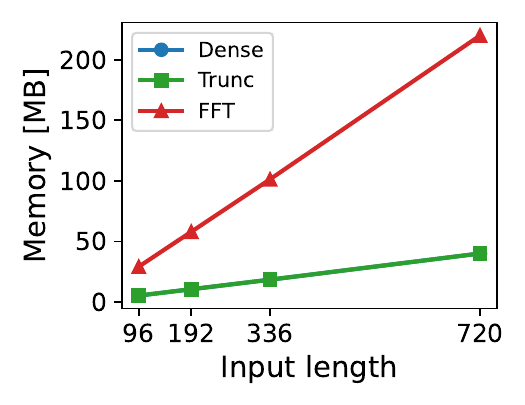}
    \caption{Memory}
  \end{subfigure}

  \caption{Empirical scaling analysis of the LDG operator under different implementation strategies. We report computational time (left) and memory usage (right) as functions of the input length $L$.}
  \vspace{-5pt}
  \label{fig:ldg_cost}
\end{figure}


\section{Error Bars}

To assess the robustness of our experiments, we report the mean performance and standard deviation for \method and the second-best baselines in \Cref{tab:errorbar-long,tab:errorbar-short}. 
On the long-term forecasting benchmarks, \method achieves lower average MSE and MAE than AMD on nearly all datasets, while maintaining sufficiently small standard deviations.

On the short-term M4 benchmark, the averaged SMAPE, MASE, and OWA scores of \method remain consistently better, and the corresponding standard deviations are of similar or smaller magnitude. 
Taken together, these error-bar analyses confirm that the gains reported by \method are stable across runs and not attributable to random fluctuations.

\begin{table}[h]
\centering
\caption{Standard deviation for \method and the second-best method (AMD) on long-term forecasting datasets.}
\label{tab:errorbar-long}
\small
\resizebox{0.6\linewidth}{!}{
\begin{tabular}{l|cc|cc}
\toprule
Method & \multicolumn{2}{c|}{\best{\method}} & \multicolumn{2}{c}{\second{AMD}~\citeyearpar{hu2025adaptive}} \\
\midrule
Dataset & MSE & MAE & MSE & MAE \\
\midrule
Weather        
& 0.247 {\tiny$\pm$ 0.003} & 0.273 {\tiny$\pm$ 0.002}
& 0.283 {\tiny$\pm$ 0.001} & 0.282 {\tiny$\pm$ 0.001} \\

Electricity    
& 0.175 {\tiny$\pm$ 0.001} & 0.269 {\tiny$\pm$ 0.001}
& 0.208 {\tiny$\pm$ 0.000} & 0.303 {\tiny$\pm$ 0.000} \\

Traffic        
& 0.458 {\tiny$\pm$ 0.001} & 0.302 {\tiny$\pm$ 0.001}
& 0.546 {\tiny$\pm$ 0.000} & 0.344 {\tiny$\pm$ 0.000} \\

Exchange       
& 0.353 {\tiny$\pm$ 0.001} & 0.400 {\tiny$\pm$ 0.001}
& 0.358 {\tiny$\pm$ 0.009} & 0.401 {\tiny$\pm$ 0.002} \\

ETTh1          
& 0.443 {\tiny$\pm$ 0.004} & 0.433 {\tiny$\pm$ 0.002}
& 0.447 {\tiny$\pm$ 0.004} & 0.434 {\tiny$\pm$ 0.003} \\

ETTh2          
& 0.376 {\tiny$\pm$ 0.003} & 0.402 {\tiny$\pm$ 0.002}
& 0.376 {\tiny$\pm$ 0.022} & 0.400 {\tiny$\pm$ 0.005} \\

ETTm1          
& 0.383 {\tiny$\pm$ 0.003} & 0.397 {\tiny$\pm$ 0.002}
& 0.395 {\tiny$\pm$ 0.003} & 0.399 {\tiny$\pm$ 0.001} \\

ETTm2          
& 0.276 {\tiny$\pm$ 0.001} & 0.322 {\tiny$\pm$ 0.001}
& 0.285 {\tiny$\pm$ 0.043} & 0.328 {\tiny$\pm$ 0.012} \\
\bottomrule
\end{tabular}}
\end{table}
\begin{table}[h]
\centering
\caption{Standard deviation for \method and the second-best method (MultiPatchFormer) on the short-term forecasting dataset (M4).}
\label{tab:errorbar-short}
\resizebox{0.8\linewidth}{!}{
\begin{tabular}{l|ccc|ccc}
\toprule
Method
& \multicolumn{3}{c|}{\best{\method}} 
& \multicolumn{3}{c}{\second{MultiPatchFormer}~\citeyearpar{naghashi2025multiscale}}\\
\midrule
Dataset & SMAPE & MASE & OWA & SMAPE & MASE & OWA \\
\midrule
Yearly     & 13.314 {\tiny$\pm$ 0.022} & 2.989 {\tiny$\pm$ 0.015} & 0.783 {\tiny$\pm$ 0.002} &
              13.296 {\tiny$\pm$ 0.012} & 3.009 {\tiny$\pm$ 0.008} & 0.785 {\tiny$\pm$ 0.001} \\
Quarterly  & 10.060 {\tiny$\pm$ 0.052} & 1.177 {\tiny$\pm$ 0.009} & 0.886 {\tiny$\pm$ 0.005} &
              10.166 {\tiny$\pm$ 0.008} & 1.178 {\tiny$\pm$ 0.003} & 0.892 {\tiny$\pm$ 0.002} \\
Monthly    & 12.750 {\tiny$\pm$ 0.019} & 0.936 {\tiny$\pm$ 0.006} & 0.882 {\tiny$\pm$ 0.003} &
              12.810 {\tiny$\pm$ 0.016} & 0.942 {\tiny$\pm$ 0.001} & 0.887 {\tiny$\pm$ 0.001} \\
Others     & 4.867 {\tiny$\pm$ 0.046} & 3.316 {\tiny$\pm$ 0.050} & 1.037 {\tiny$\pm$ 0.011} &
              4.849 {\tiny$\pm$ 0.037} & 3.271 {\tiny$\pm$ 0.023} & 1.028 {\tiny$\pm$ 0.007} \\
Averaged   & 11.840 {\tiny$\pm$ 0.016} & 1.585 {\tiny$\pm$ 0.009} & 0.868 {\tiny$\pm$ 0.002} &
              11.889 {\tiny$\pm$ 0.010} & 1.591 {\tiny$\pm$ 0.001} & 0.872 {\tiny$\pm$ 0.000} \\
\bottomrule
\end{tabular}}
\end{table}

\newpage

\section{Comparison with Recent Forecasting Baselines}

We evaluate \method against strong baselines, including 
TimePro~\citep{ma2025timepro},
TimeFilter~\citep{hu2025timefilter},
TimeMixer++~\citep{wang2025timemixer},
iTransformer~\citep{liu2024itransformer},
PatchTST~\citep{nie2023a}, and 
DLinear~\citep{zeng2023transformers} for long-term forecasting and short-term forecasting.
Across the long-term forecasting benchmarks, \method generally achieves the best accuracy on datasets with fewer variables (e.g., ETT).  
As the number of variables increases (e.g., Traffic), TimeFilter obtains the strongest overall performance, owing to its patch-wise filtration mechanism that explicitly 
captures spatiotemporal relationships.  
This suggests that explicitly modeling multi-scale patterns in multivariate 
settings can provide additional benefits. For short-term forecasting, \method also performs strongly across all prediction horizons, 
demonstrating that \method remains effective under diverse 
prediction-length settings.

\begin{table*}[h]
\centering
\caption{Long-term forecasting results comparing seven methods: \method, TimePro, TimeFilter, TimeMixer++, iTransformer, PatchTST, and DLinear. Red and blue denote the best and second-best results, respectively.}
\label{tab:non-multi-long}
\resizebox{0.93\textwidth}{!}{
\begin{tabular}{cc|cc|cc|cc|cc|cc|cc|cc}
\toprule
\multicolumn{2}{c|}{Method}
 & \multicolumn{2}{c|}{\begin{tabular}[c]{@{}c@{}}\method\\ (Ours)\end{tabular}} 
 & \multicolumn{2}{c|}{\begin{tabular}[c]{@{}c@{}}TimePro\\ \citeyearpar{ma2025timepro}\end{tabular}} 
 & \multicolumn{2}{c|}{\begin{tabular}[c]{@{}c@{}}TimeFilter\\ \citeyearpar{hu2025timefilter}\end{tabular}} 
 & \multicolumn{2}{c|}{\begin{tabular}[c]{@{}c@{}}TimeMixer++\\ \citeyearpar{wang2025timemixer}\end{tabular}} 
 & \multicolumn{2}{c|}{\begin{tabular}[c]{@{}c@{}}iTransformer\\ \citeyearpar{liu2024itransformer}\end{tabular}} 
 & \multicolumn{2}{c|}{\begin{tabular}[c]{@{}c@{}}PatchTST\\ \citeyearpar{nie2023a}\end{tabular}} 
 & \multicolumn{2}{c}{\begin{tabular}[c]{@{}c@{}}DLinear\\ \citeyearpar{zeng2023transformers}\end{tabular}} \\
\midrule
\multicolumn{2}{c|}{Metric}
& MSE & MAE
& MSE & MAE
& MSE & MAE
& MSE & MAE
& MSE & MAE
& MSE & MAE
& MSE & MAE \\
\midrule
\multirow{5}{*}{\rotatebox{90}{Weather}}
& 96  & \second{0.160} & \best{0.204} & 0.174 & 0.213 & \best{0.159} & \second{0.205} & 0.163 & 0.211 & 0.176 & 0.216 & 0.174 & 0.216 & 0.196 & 0.258 \\
& 192 & \second{0.209} & \best{0.248} & 0.219 & 0.256 & \best{0.207} & \second{0.249} & 0.213 & 0.255 & 0.225 & 0.257 & 0.220 & 0.257 & 0.237 & 0.296 \\
& 336 & \second{0.270} & \second{0.293} & 0.276 & 0.299 & \best{0.264} & \best{0.291} & 0.273 & 0.298 & 0.281 & 0.299 & 0.278 & 0.298 & 0.283 & 0.333 \\
& 720 & 0.348 & \second{0.345} & 0.355 & 0.350 & \best{0.343} & \best{0.343} & 0.348 & 0.348 & 0.360 & 0.351 & 0.354 & 0.347 & \second{0.346} & 0.383 \\
& Avg & \second{0.247} & \second{0.273} & 0.256 & 0.279 & \best{0.243} & \best{0.272} & 0.249 & 0.278 & 0.260 & 0.281 & 0.256 & 0.279 & 0.266 & 0.318 \\
\midrule
\multirow{5}{*}{\rotatebox{90}{Electricity}}
& 96  & \second{0.146} & \second{0.241} & 0.142 & 0.237 & \best{0.138} & \best{0.235} & 0.186 & 0.275 & 0.151 & 0.243 & 0.180 & 0.273 & 0.210 & 0.301 \\
& 192 & \second{0.163} & \second{0.257} & 0.160 & 0.252 & \best{0.156} & \best{0.253} & 0.190 & 0.277 & 0.167 & 0.258 & 0.187 & 0.279 & 0.210 & 0.305 \\
& 336 & \second{0.179} & \second{0.273} & 0.175 & 0.269 & \best{0.170} & \best{0.268} & 0.205 & 0.291 & 0.181 & 0.274 & 0.204 & 0.295 & 0.223 & 0.319 \\
& 720 & \second{0.213} & 0.304 & \best{0.212} & \second{0.302} & 0.191 & \best{0.289} & 0.249 & 0.327 & \second{0.213} & 0.301 & 0.246 & 0.328 & 0.258 & 0.350 \\
& Avg & \second{0.175} & \second{0.269} & 0.172 & 0.265 & \best{0.164} & \best{0.261} & 0.207 & 0.292 & 0.178 & \second{0.269} & 0.204 & 0.294 & 0.225 & 0.319 \\
\midrule
\multirow{5}{*}{\rotatebox{90}{Traffic}}
& 96  & 0.431 & 0.288 & \second{0.400} & \second{0.267} & \best{0.391} & \best{0.260} & 0.570 & 0.365 & 0.397 & 0.271 & 0.460 & 0.298 & 0.696 & 0.429 \\
& 192 & 0.444 & 0.296 & \second{0.424} & \second{0.277} & \best{0.413} & \best{0.269} & 0.554 & 0.349 & 0.417 & 0.279 & 0.467 & 0.302 & 0.646 & 0.407 \\
& 336 & 0.461 & 0.303 & \second{0.443} & \second{0.286} & \best{0.429} & \best{0.277} & 0.568 & 0.352 & 0.432 & 0.287 & 0.483 & 0.308 & 0.653 & 0.410 \\
& 720 & 0.494 & 0.320 & \second{0.475} & \second{0.306} & \best{0.462} & \best{0.296} & 0.604 & 0.370 & 0.466 & 0.305 & 0.516 & 0.325 & 0.694 & 0.429 \\
& Avg & 0.458 & 0.302 & \second{0.435} & \second{0.284} & \best{0.424} & \best{0.276} & 0.574 & 0.359 & 0.428 & 0.285 & 0.482 & 0.308 & 0.672 & 0.419 \\
\midrule
\multirow{5}{*}{\rotatebox{90}{Exchange}}
& 96  & \second{0.084} & \second{0.204} & 0.085 & 0.205 & \best{0.083} & \best{0.202} & 0.101 & 0.224 & 0.087 & 0.207 & 0.087 & 0.204 & 0.095 & 0.227 \\
& 192 & \best{0.174} & \best{0.297} & \second{0.179} & \second{0.301} & 0.178 & 0.299 & 0.200 & 0.321 & 0.179 & 0.302 & 0.188 & 0.308 & 0.184 & 0.323 \\
& 336 & \best{0.322} & \best{0.411} & \second{0.331} & \second{0.417} & 0.332 & 0.416 & 0.372 & 0.449 & 0.335 & 0.420 & 0.341 & 0.423 & 0.328 & 0.434 \\
& 720 & 0.833 & 0.687 & 0.921 & 0.724 & \second{0.785} & \second{0.669} & 1.000 & 0.762 & 0.854 & 0.697 & 0.921 & 0.720 & \best{0.762} & \best{0.667} \\
& Avg & 0.353 & \second{0.400} & 0.379 & 0.411 & \second{0.345} & \best{0.397} & 0.418 & 0.439 & 0.364 & 0.407 & 0.384 & 0.414 & \best{0.342} & 0.413 \\
\midrule
\multirow{5}{*}{\rotatebox{90}{ETTh1}}
& 96  & \best{0.379} & \best{0.393} & 0.378 & 0.399 & 0.384 & \second{0.395} & 0.407 & 0.418 & 0.388 & 0.405 & \best{0.379} & 0.398 & 0.396 & 0.410 \\
& 192 & \best{0.430} & \second{0.425} & 0.426 & 0.429 & 0.438 & \best{0.424} & 0.449 & 0.443 & 0.444 & 0.437 & \best{0.430} & 0.434 & 0.445 & 0.441 \\
& 336 & 0.481 & \best{0.446} & \best{0.469} & \second{0.452} & 0.481 & \best{0.446} & 0.493 & 0.466 & 0.486 & 0.457 & \second{0.473} & 0.460 & 0.493 & 0.471 \\
& 720 & \second{0.480} & \second{0.468} & \best{0.473} & 0.473 & 0.479 & \best{0.466} & 0.543 & 0.504 & 0.505 & 0.491 & 0.523 & 0.506 & 0.515 & 0.512 \\
& Avg & \best{0.443} & \best{0.433} & 0.436 & 0.438 & 0.446 & \best{0.433} & 0.473 & 0.458 & 0.456 & 0.448 & 0.451 & 0.450 & 0.462 & 0.459 \\
\midrule
\multirow{5}{*}{\rotatebox{90}{ETTh2}}
& 96  & \best{0.289} & \best{0.339} & 0.298 & 0.348 & \second{0.290} & \second{0.340} & 0.316 & 0.364 & 0.300 & 0.350 & 0.300 & 0.351 & 0.346 & 0.399 \\
& 192 & \best{0.369} & \best{0.394} & \second{0.367} & \second{0.395} & 0.377 & \best{0.394} & 0.402 & 0.418 & 0.379 & 0.398 & 0.380 & 0.400 & 0.478 & 0.477 \\
& 336 & \best{0.415} & \best{0.427} & \second{0.419} & \second{0.430} & 0.424 & 0.435 & 0.444 & 0.449 & 0.424 & 0.434 & 0.431 & 0.441 & 0.597 & 0.543 \\
& 720 & \second{0.431} & \best{0.446} & \best{0.429} & \best{0.446} & 0.464 & 0.464 & 0.464 & 0.469 & 0.433 & 0.448 & 0.447 & 0.461 & 0.841 & 0.661 \\
& Avg & \best{0.376} & \best{0.402} & \second{0.378} & \second{0.405} & 0.389 & 0.408 & 0.407 & 0.425 & 0.384 & 0.408 & 0.390 & 0.413 & 0.566 & 0.520 \\
\midrule
\multirow{5}{*}{\rotatebox{90}{ETTm1}}
& 96  & \second{0.323} & \second{0.359} & 0.328 & 0.366 & \best{0.320} & \best{0.357} & 0.332 & 0.368 & 0.344 & 0.377 & 0.330 & 0.368 & 0.345 & 0.372 \\
& 192 & \best{0.360} & \best{0.381} & 0.372 & 0.388 & \second{0.362} & \best{0.381} & 0.374 & 0.395 & 0.383 & 0.395 & 0.370 & 0.391 & 0.382 & \second{0.390} \\
& 336 & \second{0.392} & \second{0.404} & 0.401 & 0.408 & \best{0.391} & \best{0.402} & 0.386 & 0.407 & 0.417 & 0.418 & 0.402 & 0.411 & 0.414 & 0.414 \\
& 720 & \best{0.455} & \second{0.442} & 0.472 & 0.447 & 0.461 & \best{0.438} & 0.469 & 0.456 & 0.488 & 0.457 & \second{0.459} & 0.446 & 0.474 & 0.451 \\
& Avg & \best{0.383} & \second{0.397} & 0.393 & 0.402 & \best{0.383} & \best{0.394} & 0.390 & 0.407 & 0.408 & 0.412 & \second{0.390} & 0.404 & 0.404 & 0.407 \\
\midrule
\multirow{5}{*}{\rotatebox{90}{ETTm2}}
& 96  & \second{0.174} & \best{0.257} & 0.184 & 0.265 & \best{0.172} & \second{0.258} & 0.190 & 0.276 & 0.184 & 0.270 & 0.183 & 0.265 & 0.195 & 0.295 \\
& 192 & \second{0.239} & \best{0.299} & 0.245 & 0.306 & \best{0.237} & \second{0.300} & 0.256 & 0.315 & 0.251 & 0.312 & 0.246 & 0.308 & 0.282 & 0.359 \\
& 336 & \best{0.296} & \best{0.337} & 0.304 & 0.344 & \best{0.296} & \second{0.338} & 0.331 & 0.365 & 0.314 & 0.351 & \second{0.312} & 0.350 & 0.363 & 0.414 \\
& 720 & \best{0.394} & \best{0.394} & 0.405 & 0.402 & \best{0.394} & \second{0.396} & 0.429 & 0.420 & 0.412 & 0.406 & \second{0.419} & 0.412 & 0.547 & 0.519 \\
& Avg & \second{0.276} & \best{0.322} & 0.284 & 0.329 & \best{0.275} & \second{0.323} & 0.302 & 0.344 & 0.290 & 0.335 & 0.290 & 0.334 & 0.347 & 0.397 \\
\bottomrule
\end{tabular}
}
\end{table*}

\begin{table}[t]
\centering
\caption{Short-term forecasting results on the M4 benchmark across seven methods: \method, TimePro, TimeFilter, TimeMixer++, iTransformer, PatchTST, and DLinear. Red and blue denote the best and second-best results, respectively. \method achieves the best performance in 13 out of 15 evaluation cases, demonstrating consistently strong accuracy across temporal granularities and forecasting horizons.}
\label{tab:non-multi-short}
\setlength{\tabcolsep}{2pt}
\small
\renewcommand{\arraystretch}{0.95}
\newcolumntype{L}[1]{>{\raggedright\arraybackslash}p{#1}}
\newcolumntype{R}[1]{>{\raggedleft\arraybackslash}p{#1}}

\resizebox{0.9\textwidth}{!}{%
\begin{tabular}{ll|r|r|r|r|r|r|r}

\midrule
 &
\multicolumn{1}{|c|}{\parbox{0.85cm}{\centering Metric}} &
{\parbox{1.8cm}{\centering \method\\(Ours)}} &
{\parbox{1.8cm}{\centering TimePro\\\citeyearpar{ma2025timepro}}} &
{\parbox{1.8cm}{\centering TimeFilter\\\citeyearpar{hu2025timefilter}}} &
{\parbox{1.8cm}{\centering TimeMixer++\\\citeyearpar{wang2025timemixer}}} &
{\parbox{1.8cm}{\centering iTransformer\\\citeyearpar{liu2024itransformer}}} &
{\parbox{1.8cm}{\centering PatchTST\\\citeyearpar{nie2023a}}} &
{\parbox{1.8cm}{\centering DLinear\\\citeyearpar{zeng2023transformers}}}  \\
\specialrule{0.3pt}{1pt}{2pt}

\multirow{3}{*}{\rotatebox[origin=c]{90}{\scalebox{0.8}{Yearly}}}
& \multicolumn{1}{|l|}{SMAPE} & \best{13.314} & \second{13.368} & 18.836 & 13.368 & 14.091 & 14.311 & 14.343 \\
& \multicolumn{1}{|l|}{MASE}  & \best{2.989}  & \second{3.002}  & 4.153  & 3.002  & 3.133  & 3.240  & 3.123 \\
& \multicolumn{1}{|l|}{OWA}   & \best{0.783}  & \second{0.787}  & 1.099  & 0.787  & 0.825  & 0.845  & 0.832 \\
\specialrule{0.3pt}{1pt}{2pt}

\multirow{3}{*}{\rotatebox[origin=c]{90}{\scalebox{0.8}{Quarterly}}}
& \multicolumn{1}{|l|}{SMAPE} & \best{10.060} & \second{10.121} & 10.660 & 10.269 & 11.775 & 10.242 & 10.502 \\
& \multicolumn{1}{|l|}{MASE}  & \best{1.177}  & \second{1.179}  & 1.228  & 1.211  & 1.449  & 1.211  & 1.240 \\
& \multicolumn{1}{|l|}{OWA}   & \best{0.886}  & \second{0.890}  & 0.932  & 0.908  & 1.063  & 0.907  & 0.929 \\
\specialrule{0.3pt}{1pt}{2pt}

\multirow{3}{*}{\rotatebox[origin=c]{90}{\scalebox{0.8}{Monthly}}}
& \multicolumn{1}{|l|}{SMAPE} & \best{12.750} & \second{12.806} & 13.477 & 13.387 & 15.623 & 12.889 & 13.373 \\
& \multicolumn{1}{|l|}{MASE}  & \best{0.936}  & \second{0.940}  & 1.025  & 1.019  & 1.273  & 0.955  & 1.004 \\
& \multicolumn{1}{|l|}{OWA}   & \best{0.882}  & \second{0.886}  & 0.949  & 0.943  & 1.140  & 0.896  & 0.935 \\
\specialrule{0.3pt}{1pt}{2pt}

\multirow{3}{*}{\rotatebox[origin=c]{90}{\scalebox{0.8}{Others}}}
& \multicolumn{1}{|l|}{SMAPE} & \best{4.867}  & 5.326 & 6.136 & 5.535 & 5.407 & \second{4.986} & 5.110 \\
& \multicolumn{1}{|l|}{MASE}  & \second{3.316} & 3.458 & 4.042 & 3.661 & 3.768 & \best{3.231} & 3.655 \\
& \multicolumn{1}{|l|}{OWA}   & \second{1.037} & 1.081 & 1.257 & 1.160 & 1.163 & \best{1.023} & 1.132 \\
\specialrule{0.3pt}{1pt}{2pt}

\multirow{3}{*}{\rotatebox[origin=c]{90}{\scalebox{0.8}{\shortstack{Weighted\\[-0.5ex]Average}}}}
& \multicolumn{1}{|l|}{SMAPE} & \best{11.840} & \second{11.947} & 13.666 & 12.241 & 13.836 & 12.186 & 12.494 \\
& \multicolumn{1}{|l|}{MASE}  & \best{1.585}  & \second{1.614}  & 1.944  & 1.653  & 1.868  & 1.656  & 1.681 \\
& \multicolumn{1}{|l|}{OWA}   & \best{0.868}  & \second{0.877}  & 0.995  & 0.884  & 0.998  & 0.893  & 0.920 \\
\midrule

\label{tab:short-term-4methods-clean}
\end{tabular}
}
\renewcommand{\arraystretch}{1.0}
\vspace{-10pt}
\end{table}


\section{Comparison Under Varying Look-Back Windows}

To examine the effect of the look-back window beyond the fixed setting, we compare SiGMA and PatchTST over $L \in \{24,48,96,192,336,512,720\}$. Table~\ref{tab:appendix-sigma-patchtst-l96} reports the best result for each dataset together with its selected configuration, while Tables~\ref{tab:appendix-sigma-diffl} and~\ref{tab:appendix-patchtst-diffl} provide the full results across all look-back windows.
Overall, \method outperforms PatchTST on most datasets. In particular, on the Traffic dataset, \method improves over PatchTST by 9.7\% in MSE and 4.6\% in MAE under the best-performing configurations. It also consistently outperforms PatchTST on all ETT benchmarks, suggesting that distance-aware multi-scale representations provide robust modeling capacity.

PatchTST also benefits from a broader search over context lengths. For example, it achieves its best performance on Weather with a relatively large input length ($L=336$), whereas Exchange favors a much shorter context length ($L=48$). These results suggest that the optimal look-back window is dataset-dependent rather than monotonically increasing with larger contexts. 
Moreover, increasing the look-back window incurs significantly higher cost for PatchTST. On Traffic with $L=720$ under the predict-720 setting, one training epoch requires 4.0 minutes for \method, compared to 23.3 minutes for PatchTST, making PatchTST approximately 5.8$\times$ slower. Overall, \method achieves robust forecasting performance across input lengths while maintaining substantially higher efficiency.

\begin{table}[h]
\centering
\caption{Experimental evaluation against PatchTST over $L\in\{24,48,96,192,336,512,720\}$. For each dataset and method, we report the best performance across look-back windows, together with the look-back window $L$ that achieves it. Overall, the results show that \method achieves robust forecasting performance across input lengths.}
\label{tab:appendix-sigma-patchtst-l96}
\setlength{\tabcolsep}{5pt}
\renewcommand{\arraystretch}{1.0}
\resizebox{0.48\textwidth}{!}{%
\begin{tabular}{l|ccc|ccc}
\toprule
\multirow{2}{*}{Models} 
& \multicolumn{3}{c|}{\method} 
& \multicolumn{3}{c}{PatchTST} \\
\cmidrule(lr){2-4} \cmidrule(lr){5-7}
& MSE & MAE & $L$ & MSE & MAE & $L$ \\
\midrule
Weather     & \second{0.243} & \second{0.277} & 336 & \best{0.234} & \best{0.270} & 336 \\
Electricity & \best{0.163} & \best{0.259} & 192 & \second{0.165} & \second{0.266} & 720  \\
Traffic     & \best{0.362} & \best{0.269} & 720 & \second{0.401} & \second{0.282} & 720  \\
Exchange    & \second{0.353} & \second{0.400} & 96  & \best{0.351} & \best{0.397} & 48  \\
ETTh1       & \best{0.422} & \best{0.431} & 336 & \second{0.451} & \second{0.450} & 96  \\
ETTh2       & \best{0.366} & \best{0.400} & 336 & \second{0.386} & \second{0.415} & 192 \\
ETTm1       & \best{0.361} & \best{0.384} & 192 & \second{0.379} & \second{0.403} & 192 \\
ETTm2       & \best{0.266} & \best{0.320} & 336 & \second{0.287} & \second{0.337} & 192 \\
\bottomrule
\end{tabular}
}
\end{table}

\begin{table*}[t]
\centering
\caption[SiGMA across look-back windows.]{Long-term forecasting results \method with different look-back windows.}
\label{tab:appendix-sigma-diffl}
\resizebox{\columnwidth}{!}{
\begin{tabular}{cc|cc|cc|cc|cc|cc|cc|cc}
\toprule
\multicolumn{2}{c|}{$L$}
 & \multicolumn{2}{c|}{$L=24$}
 & \multicolumn{2}{c|}{$L=48$}
 & \multicolumn{2}{c|}{$L=96$}
 & \multicolumn{2}{c|}{$L=192$}
 & \multicolumn{2}{c|}{$L=336$}
 & \multicolumn{2}{c|}{$L=512$}
 & \multicolumn{2}{c}{$L=720$}
 \\
\midrule
\multicolumn{2}{c|}{Metric}
 & MSE & MAE
 & MSE & MAE
 & MSE & MAE
 & MSE & MAE
 & MSE & MAE
 & MSE & MAE
 & MSE & MAE \\
\midrule

\multirow{5}*{\rotatebox{90}{Weather}} & 96 & 0.197 & 0.229 & 0.185 & 0.223 & 0.160 & 0.204 & 0.164 & 0.213 & 0.162 & 0.216 & 0.161 & 0.216 & 0.161 & 0.214 \\
  & 192 & 0.236 & 0.262 & 0.225 & 0.258 & 0.209 & 0.248 & 0.210 & 0.254 & 0.203 & 0.250 & 0.213 & 0.262 & 0.217 & 0.265 \\
  & 336 & 0.299 & 0.307 & 0.282 & 0.299 & 0.270 & 0.293 & 0.262 & 0.291 & 0.263 & 0.295 & 0.277 & 0.306 & 0.291 & 0.318 \\
  & 720 & 0.388 & 0.364 & 0.367 & 0.354 & 0.348 & 0.345 & 0.340 & 0.344 & 0.346 & 0.349 & 0.358 & 0.359 & 0.388 & 0.378 \\
 & Avg & 0.280 & 0.290 & 0.265 & 0.283 & 0.247 & 0.272 & 0.244 & 0.276 & 0.243 & 0.277 & 0.252 & 0.286 & 0.264 & 0.294 \\
\midrule

\multirow{5}*{\rotatebox{90}{Electricity}} & 96 & 0.231 & 0.303 & 0.177 & 0.268 & 0.146 & 0.241 & 0.132 & 0.228 & 0.133 & 0.231 & 0.133 & 0.233 & 0.135 & 0.238 \\
  & 192 & 0.233 & 0.307 & 0.189 & 0.277 & 0.163 & 0.257 & 0.154 & 0.249 & 0.160 & 0.259 & 0.158 & 0.258 & 0.158 & 0.259 \\
  & 336 & 0.256 & 0.327 & 0.208 & 0.295 & 0.179 & 0.273 & 0.170 & 0.267 & 0.177 & 0.276 & 0.173 & 0.272 & 0.176 & 0.278 \\
  & 720 & 0.300 & 0.358 & 0.242 & 0.325 & 0.213 & 0.304 & 0.195 & 0.291 & 0.190 & 0.289 & 0.198 & 0.297 & 0.198 & 0.299 \\
 & Avg & 0.255 & 0.324 & 0.204 & 0.291 & 0.175 & 0.269 & 0.163 & 0.259 & 0.165 & 0.263 & 0.166 & 0.265 & 0.167 & 0.268 \\
\midrule

\multirow{5}*{\rotatebox{90}{Traffic}} & 96 & 0.599 & 0.371 & 0.514 & 0.332 & 0.431 & 0.288 & 0.380 & 0.264 & 0.357 & 0.259 & 0.341 & 0.255 & 0.337 & 0.254 \\
  & 192 & 0.610 & 0.379 & 0.520 & 0.336 & 0.444 & 0.296 & 0.401 & 0.272 & 0.378 & 0.268 & 0.351 & 0.261 & 0.353 & 0.265 \\
  & 336 & 0.635 & 0.387 & 0.544 & 0.347 & 0.461 & 0.303 & 0.416 & 0.280 & 0.381 & 0.275 & 0.366 & 0.268 & 0.360 & 0.270 \\
  & 720 & 0.659 & 0.400 & 0.568 & 0.358 & 0.494 & 0.320 & 0.441 & 0.296 & 0.416 & 0.289 & 0.402 & 0.284 & 0.398 & 0.289 \\
 & Avg & 0.626 & 0.384 & 0.537 & 0.343 & 0.458 & 0.302 & 0.410 & 0.278 & 0.383 & 0.272 & 0.365 & 0.267 & 0.362 & 0.269 \\
\midrule

\multirow{5}*{\rotatebox{90}{Exchange}} & 96 & 0.085 & 0.202 & 0.086 & 0.205 & 0.084 & 0.204 & 0.094 & 0.216 & 0.094 & 0.218 & 0.100 & 0.224 & 0.111 & 0.238 \\
  & 192 & 0.187 & 0.306 & 0.187 & 0.308 & 0.174 & 0.297 & 0.204 & 0.323 & 0.195 & 0.316 & 0.210 & 0.329 & 0.224 & 0.343 \\
  & 336 & 0.339 & 0.418 & 0.345 & 0.424 & 0.322 & 0.411 & 0.336 & 0.422 & 0.362 & 0.436 & 0.444 & 0.490 & 0.399 & 0.467 \\
  & 720 & 0.888 & 0.710 & 0.915 & 0.720 & 0.833 & 0.687 & 1.068 & 0.765 & 1.043 & 0.766 & 1.554 & 0.896 & 1.375 & 0.881 \\
 & Avg & 0.375 & 0.409 & 0.383 & 0.414 & 0.353 & 0.400 & 0.426 & 0.432 & 0.423 & 0.434 & 0.577 & 0.485 & 0.527 & 0.482 \\
\midrule

\multirow{5}*{\rotatebox{90}{ETTh1}} & 96 & 0.430 & 0.416 & 0.385 & 0.395 & 0.379 & 0.393 & 0.388 & 0.403 & 0.378 & 0.398 & 0.374 & 0.401 & 0.386 & 0.412 \\
  & 192 & 0.485 & 0.448 & 0.439 & 0.428 & 0.430 & 0.425 & 0.430 & 0.423 & 0.413 & 0.419 & 0.405 & 0.422 & 0.418 & 0.434 \\
  & 336 & 0.538 & 0.476 & 0.490 & 0.453 & 0.481 & 0.446 & 0.458 & 0.438 & 0.437 & 0.436 & 0.434 & 0.440 & 0.454 & 0.456 \\
  & 720 & 0.535 & 0.489 & 0.505 & 0.480 & 0.480 & 0.468 & 0.479 & 0.471 & 0.459 & 0.469 & 0.508 & 0.494 & 0.659 & 0.554 \\
 & Avg & 0.497 & 0.457 & 0.455 & 0.439 & 0.443 & 0.433 & 0.439 & 0.434 & 0.422 & 0.431 & 0.430 & 0.439 & 0.479 & 0.464 \\
\midrule

\multirow{5}*{\rotatebox{90}{ETTh2}} & 96 & 0.325 & 0.356 & 0.297 & 0.342 & 0.289 & 0.339 & 0.289 & 0.341 & 0.295 & 0.350 & 0.290 & 0.349 & 0.292 & 0.351 \\
  & 192 & 0.416 & 0.407 & 0.383 & 0.394 & 0.369 & 0.394 & 0.367 & 0.394 & 0.355 & 0.389 & 0.364 & 0.394 & 0.373 & 0.407 \\
  & 336 & 0.472 & 0.449 & 0.436 & 0.436 & 0.415 & 0.427 & 0.398 & 0.419 & 0.384 & 0.413 & 0.394 & 0.422 & 0.431 & 0.449 \\
  & 720 & 0.477 & 0.461 & 0.454 & 0.455 & 0.431 & 0.446 & 0.435 & 0.451 & 0.431 & 0.450 & 0.449 & 0.464 & 0.472 & 0.478 \\
 & Avg & 0.422 & 0.418 & 0.393 & 0.407 & 0.376 & 0.402 & 0.372 & 0.401 & 0.366 & 0.400 & 0.374 & 0.407 & 0.392 & 0.421 \\
\midrule

\multirow{5}*{\rotatebox{90}{ETTm1}} & 96 & 0.677 & 0.502 & 0.467 & 0.426 & 0.323 & 0.359 & 0.300 & 0.346 & 0.304 & 0.350 & 0.306 & 0.357 & 0.319 & 0.364 \\
  & 192 & 0.717 & 0.526 & 0.508 & 0.447 & 0.360 & 0.381 & 0.340 & 0.370 & 0.335 & 0.371 & 0.353 & 0.386 & 0.368 & 0.396 \\
  & 336 & 0.761 & 0.551 & 0.551 & 0.473 & 0.392 & 0.404 & 0.368 & 0.391 & 0.374 & 0.397 & 0.377 & 0.402 & 0.414 & 0.425 \\
  & 720 & 0.795 & 0.573 & 0.591 & 0.497 & 0.455 & 0.442 & 0.435 & 0.429 & 0.434 & 0.434 & 0.447 & 0.444 & 0.480 & 0.461 \\
 & Avg & 0.737 & 0.538 & 0.529 & 0.461 & 0.383 & 0.397 & 0.361 & 0.384 & 0.362 & 0.388 & 0.370 & 0.397 & 0.395 & 0.411 \\
\midrule

\multirow{5}*{\rotatebox{90}{ETTm2}} & 96 & 0.211 & 0.289 & 0.191 & 0.273 & 0.174 & 0.257 & 0.171 & 0.254 & 0.172 & 0.258 & 0.171 & 0.257 & 0.176 & 0.262 \\
  & 192 & 0.280 & 0.331 & 0.259 & 0.316 & 0.239 & 0.299 & 0.234 & 0.296 & 0.239 & 0.301 & 0.246 & 0.306 & 0.240 & 0.303 \\
  & 336 & 0.352 & 0.374 & 0.325 & 0.355 & 0.296 & 0.337 & 0.290 & 0.334 & 0.289 & 0.335 & 0.297 & 0.341 & 0.311 & 0.349 \\
  & 720 & 0.458 & 0.431 & 0.428 & 0.412 & 0.394 & 0.394 & 0.381 & 0.392 & 0.365 & 0.386 & 0.374 & 0.392 & 0.385 & 0.401 \\
 & Avg & 0.325 & 0.356 & 0.301 & 0.339 & 0.276 & 0.322 & 0.269 & 0.319 & 0.266 & 0.320 & 0.272 & 0.324 & 0.278 & 0.329 \\
 
\bottomrule
\end{tabular}
}
\end{table*}
\begin{table}[t]
\centering
\caption[PatchTST across look-back windows.]{
Long-term forecasting results of PatchTST under different look-back windows. 
$L=336$ and $L=512$ correspond to PatchTST/42 and PatchTST/64, respectively. 
All settings use patch length $P=16$ and stride $S=8$.
}
\label{tab:appendix-patchtst-diffl}
\resizebox{\columnwidth}{!}{
\begin{tabular}{cc|cc|cc|cc|cc|cc|cc|cc}
\toprule
\multicolumn{2}{c|}{$L$}
 & \multicolumn{2}{c|}{$L=24$}
 & \multicolumn{2}{c|}{$L=48$}
 & \multicolumn{2}{c|}{$L=96$}
 & \multicolumn{2}{c|}{$L=192$}
 & \multicolumn{2}{c|}{$L=336$}
 & \multicolumn{2}{c|}{$L=512$}
 & \multicolumn{2}{c}{$L=720$}
 \\
\midrule
\multicolumn{2}{c|}{Metric}
 & MSE & MAE
 & MSE & MAE
 & MSE & MAE
 & MSE & MAE
 & MSE & MAE
 & MSE & MAE
 & MSE & MAE \\
\midrule

\multirow{5}*{\rotatebox{90}{Weather}} & 96 & 0.220 & 0.247 & 0.208 & 0.242 & 0.174 & 0.216 & 0.157 & 0.204 & 0.153 & 0.203 & 0.151 & 0.204 & 0.158 & 0.214 \\
  & 192 & 0.262 & 0.279 & 0.251 & 0.275 & 0.220 & 0.257 & 0.202 & 0.245 & 0.198 & 0.246 & 0.202 & 0.252 & 0.203 & 0.256 \\
  & 336 & 0.323 & 0.322 & 0.306 & 0.313 & 0.278 & 0.298 & 0.259 & 0.288 & 0.254 & 0.289 & 0.262 & 0.301 & 0.260 & 0.301 \\
  & 720 & 0.404 & 0.374 & 0.383 & 0.363 & 0.354 & 0.347 & 0.337 & 0.341 & 0.330 & 0.341 & 0.325 & 0.341 & 0.331 & 0.350 \\
 & Avg & 0.302 & 0.305 & 0.287 & 0.298 & 0.257 & 0.279 & 0.239 & 0.270 & 0.234 & 0.270 & 0.235 & 0.274 & 0.238 & 0.280 \\
\midrule
\multirow{5}*{\rotatebox{90}{Electricity}} 
 & 96  & 0.276 & 0.328 & 0.232 & 0.306 & 0.180 & 0.273 & 0.145 & 0.246 & 0.139 & 0.242 & 0.137 & 0.240 & 0.136 & 0.240 \\
 & 192 & 0.268 & 0.329 & 0.227 & 0.306 & 0.187 & 0.279 & 0.161 & 0.261 & 0.156 & 0.257 & 0.153 & 0.255 & 0.153 & 0.255 \\
 & 336 & 0.292 & 0.347 & 0.247 & 0.323 & 0.204 & 0.295 & 0.179 & 0.278 & 0.173 & 0.273 & 0.170 & 0.271 & 0.168 & 0.270 \\
 & 720 & 0.330 & 0.377 & 0.287 & 0.354 & 0.246 & 0.328 & 0.217 & 0.310 & 0.210 & 0.304 & 0.205 & 0.300 & 0.202 & 0.298 \\
 & Avg & 0.291 & 0.345 & 0.248 & 0.322 & 0.204 & 0.294 & 0.175 & 0.274 & 0.170 & 0.270 & 0.166 & 0.267 & 0.165 & 0.266 \\
\midrule

\multirow{5}*{\rotatebox{90}{Traffic}} 
 & 96  & 0.753 & 0.416 & 0.651 & 0.375 & 0.460 & 0.298 & 0.400 & 0.277 & 0.386 & 0.273 & 0.379 & 0.273 & 0.374 & 0.270 \\
 & 192 & 0.715 & 0.401 & 0.607 & 0.357 & 0.467 & 0.302 & 0.416 & 0.285 & 0.402 & 0.281 & 0.393 & 0.278 & 0.387 & 0.275 \\
 & 336 & 0.748 & 0.414 & 0.631 & 0.366 & 0.483 & 0.308 & 0.430 & 0.290 & 0.413 & 0.286 & 0.404 & 0.283 & 0.402 & 0.282 \\
 & 720 & 0.781 & 0.429 & 0.666 & 0.382 & 0.516 & 0.325 & 0.460 & 0.307 & 0.444 & 0.302 & 0.440 & 0.302 & 0.440 & 0.302 \\
 & Avg & 0.749 & 0.415 & 0.639 & 0.370 & 0.482 & 0.308 & 0.427 & 0.290 & 0.411 & 0.286 & 0.404 & 0.284 & 0.401 & 0.282 \\
 \midrule

\multirow{5}*{\rotatebox{90}{Exchange}} & 96 & 0.082 & 0.198 & 0.083 & 0.200 & 0.087 & 0.204 & 0.095 & 0.216 & 0.100 & 0.227 & 0.104 & 0.236 & 0.120 & 0.255 \\
  & 192 & 0.170 & 0.292 & 0.174 & 0.294 & 0.188 & 0.308 & 0.212 & 0.329 & 0.236 & 0.351 & 0.249 & 0.365 & 0.267 & 0.378 \\
  & 336 & 0.317 & 0.405 & 0.319 & 0.407 & 0.341 & 0.423 & 0.356 & 0.435 & 0.413 & 0.472 & 0.428 & 0.486 & 0.554 & 0.550 \\
  & 720 & 0.841 & 0.690 & 0.827 & 0.685 & 0.921 & 0.720 & 0.890 & 0.699 & 1.058 & 0.757 & 1.110 & 0.779 & 1.361 & 0.863 \\
 & Avg & 0.352 & 0.396 & 0.351 & 0.397 & 0.384 & 0.414 & 0.388 & 0.420 & 0.452 & 0.452 & 0.473 & 0.466 & 0.575 & 0.512 \\
\midrule

\multirow{5}*{\rotatebox{90}{ETTh1}} & 96 & 0.415 & 0.416 & 0.384 & 0.402 & 0.379 & 0.398 & 0.423 & 0.437 & 0.451 & 0.452 & 0.455 & 0.464 & 0.470 & 0.470 \\
  & 192 & 0.475 & 0.449 & 0.443 & 0.438 & 0.430 & 0.434 & 0.504 & 0.480 & 0.511 & 0.489 & 0.506 & 0.491 & 0.502 & 0.496 \\
  & 336 & 0.530 & 0.481 & 0.498 & 0.471 & 0.473 & 0.460 & 0.531 & 0.498 & 0.552 & 0.513 & 0.520 & 0.509 & 0.560 & 0.535 \\
  & 720 & 0.536 & 0.503 & 0.505 & 0.494 & 0.523 & 0.506 & 0.594 & 0.544 & 0.591 & 0.541 & 0.610 & 0.556 & 0.673 & 0.594 \\
 & Avg & 0.489 & 0.462 & 0.457 & 0.451 & 0.451 & 0.450 & 0.513 & 0.490 & 0.526 & 0.499 & 0.523 & 0.505 & 0.551 & 0.524 \\
\midrule

\multirow{5}*{\rotatebox{90}{ETTh2}} & 96 & 0.319 & 0.352 & 0.306 & 0.348 & 0.300 & 0.351 & 0.324 & 0.371 & 0.346 & 0.389 & 0.339 & 0.388 & 0.362 & 0.407 \\
  & 192 & 0.413 & 0.406 & 0.391 & 0.399 & 0.380 & 0.400 & 0.384 & 0.408 & 0.452 & 0.450 & 0.443 & 0.443 & 0.441 & 0.443 \\
  & 336 & 0.465 & 0.446 & 0.440 & 0.439 & 0.431 & 0.441 & 0.412 & 0.434 & 0.466 & 0.468 & 0.452 & 0.457 & 0.475 & 0.472 \\
  & 720 & 0.470 & 0.461 & 0.448 & 0.456 & 0.447 & 0.461 & 0.425 & 0.445 & 0.457 & 0.473 & 0.508 & 0.500 & 0.491 & 0.490 \\
 & Avg & 0.417 & 0.417 & 0.396 & 0.411 & 0.390 & 0.413 & 0.386 & 0.415 & 0.430 & 0.445 & 0.436 & 0.447 & 0.442 & 0.453 \\
\midrule

\multirow{5}*{\rotatebox{90}{ETTm1}} & 96 & 0.635 & 0.486 & 0.424 & 0.404 & 0.330 & 0.368 & 0.324 & 0.369 & 0.324 & 0.374 & 0.336 & 0.385 & 0.329 & 0.384 \\
  & 192 & 0.681 & 0.513 & 0.467 & 0.428 & 0.370 & 0.391 & 0.358 & 0.388 & 0.371 & 0.398 & 0.379 & 0.405 & 0.383 & 0.411 \\
  & 336 & 0.727 & 0.539 & 0.506 & 0.453 & 0.402 & 0.411 & 0.389 & 0.412 & 0.400 & 0.420 & 0.417 & 0.429 & 0.454 & 0.459 \\
  & 720 & 0.764 & 0.563 & 0.559 & 0.484 & 0.459 & 0.446 & 0.445 & 0.444 & 0.477 & 0.467 & 0.492 & 0.473 & 0.515 & 0.497 \\
 & Avg & 0.702 & 0.525 & 0.489 & 0.442 & 0.390 & 0.404 & 0.379 & 0.403 & 0.393 & 0.415 & 0.406 & 0.423 & 0.420 & 0.438 \\
\midrule

\multirow{5}*{\rotatebox{90}{ETTm2}} & 96 & 0.210 & 0.289 & 0.191 & 0.274 & 0.183 & 0.265 & 0.182 & 0.269 & 0.190 & 0.277 & 0.196 & 0.288 & 0.197 & 0.284 \\
  & 192 & 0.281 & 0.333 & 0.261 & 0.319 & 0.246 & 0.308 & 0.244 & 0.312 & 0.249 & 0.319 & 0.253 & 0.325 & 0.264 & 0.331 \\
  & 336 & 0.353 & 0.375 & 0.332 & 0.362 & 0.312 & 0.350 & 0.317 & 0.356 & 0.320 & 0.365 & 0.323 & 0.371 & 0.322 & 0.371 \\
  & 720 & 0.457 & 0.432 & 0.432 & 0.418 & 0.419 & 0.412 & 0.404 & 0.410 & 0.420 & 0.428 & 0.423 & 0.427 & 0.406 & 0.422 \\
 & Avg & 0.325 & 0.357 & 0.304 & 0.343 & 0.290 & 0.334 & 0.287 & 0.337 & 0.295 & 0.347 & 0.299 & 0.353 & 0.297 & 0.352 \\
 
\bottomrule
\end{tabular}
}
\end{table}

\newpage
\clearpage

\section{Workflow Diagram and Algorithms}
\label{sec:workflow}

For completeness, we provide the overall workflow of \method together with detailed algorithms for the end-to-end forward pass and the LDG kernel computations.

\vspace{10pt}

\begin{figure}[h]
\centering
\small
\resizebox{0.72\linewidth}{!}{
\begin{tikzpicture}[
    font=\small,
    >=Latex,
    node distance=3.2mm,
    box/.style={draw, rounded corners=2pt, thick, align=center, minimum width=42mm, minimum height=5.2mm, inner sep=1.4pt},
    smallbox/.style={
    draw,
    rounded corners=2pt,
    thick,
    align=center,
    minimum width=28mm,
    minimum height=5.2mm,
    inner sep=1.4pt},
    io/.style={draw, rounded corners=2pt, thick, fill={rgb,255:red,0;green,76;blue,152}, text=white, align=center, minimum width=42mm, minimum height=5.2mm, inner sep=1.4pt},
    op/.style={draw, rounded corners=2pt, thick, fill=gray!10, align=center, minimum width=42mm, minimum height=5.2mm, inner sep=1.4pt},
    proj/.style={draw, rounded corners=2pt, thick, align=center, minimum width=42mm, minimum height=5.2mm, inner sep=1.4pt},
    param/.style={draw, rounded corners=2pt, thick, fill=orange!14, align=center, minimum width=36mm, minimum height=5.2mm, inner sep=1.4pt},
    sum/.style={circle, draw, thick, minimum size=4.8mm, inner sep=0pt},
    line/.style={->, thick}
]

\node[io] (input) {Input $\mathbf{x}$};

\node[op, below=of input] (norm) {Normalization (RevIN)};

\node[op, below=of norm] (embed) {Embedding $\mathbf{X}$};

\node[op, below=of embed] (ldg) {LDG decomposition $\mathbf{K}(\mathbf{s})$};

\node[smallbox, below left=3.2mm and 4mm of ldg, xshift=13mm] (smooth)
{$\mathbf{K}(\mathbf{s})\mathbf{X}$};

\node[smallbox, below right=3.2mm and 4mm of ldg, xshift=-13mm] (resid)
{$(I-\mathbf{K}(\mathbf{s}))\mathbf{X}$};

\node[op, below=3.8mm of $(smooth)!0.5!(resid)$] (concat) {Concatenation};

\node[box, below=of concat] (H)
{$\mathbf{H} = \mathbf{K}(\mathbf{s})\mathbf{X} \,\|\, (I-\mathbf{K}(\mathbf{s}))\mathbf{X}$};

\node[op, below=of H] (mlp) {$\mathrm{MLP}(\mathbf{H})$};

\node[sum, below=of mlp] (sum) {$+$};

\node[box, below=of sum] (Hres) {$\mathbf{H} + \mathrm{MLP}(\mathbf{H})$};

\node[proj, below=of Hres] (proj)
{$\mathbf{W}_1\bigl(\mathbf{H}+\mathrm{MLP}(\mathbf{H})\bigr)\mathbf{W}_2$};

\node[op, below=of proj] (denorm) {De-normalization};

\node[io, below=of denorm] (output) {Output $\hat{\mathbf{y}}$};

\node[param, right=8mm of ldg] (scale)
{Scale parameter $\mathbf{s}$};

\node[param, right=8mm of proj, yshift=5mm] (W1)
{Temporal projection $\mathbf{W}_1$};

\node[param, right=8mm of proj, yshift=-5mm] (W2)
{Feature projection $\mathbf{W}_2$};

\draw[line] (input) -- (norm);
\draw[line] (norm) -- (embed);
\draw[line] (embed) -- (ldg);

\draw[line] (scale.west) -- (ldg.east);

\draw[line] (ldg) -- (smooth);
\draw[line] (ldg) -- (resid);

\draw[line] (smooth) |- (concat);
\draw[line] (resid) |- (concat);

\draw[line] (concat) -- (H);
\draw[line] (H) -- (mlp);

\draw[line] (mlp) -- (sum);
\draw[line] (H.east) -- ++(5mm,0) |- (sum.east);

\draw[line] (sum) -- (Hres);
\draw[line] (Hres) -- (proj);

\draw[line] (W1.west) -- (proj.east);
\draw[line] (W2.west) -- (proj.east);

\draw[line] (proj) -- (denorm);
\draw[line] (denorm) -- (output);

\end{tikzpicture}
}

\vspace{10pt}
\caption[Workflow of SiGMA.]{
Workflow of \method. The input $\mathbf{x}$ is normalized and embedded into $\mathbf{X}$. The LDG operator parameterized by $\mathbf{s}$ decomposes $\mathbf{X}$ into smoothed and residual components, which are concatenated into $\mathbf{H}$ and processed with a residual MLP. The resulting representation is projected to the prediction horizon via $\mathbf{W}_1$ and $\mathbf{W}_2$, followed by de-normalization to obtain $\hat{\mathbf{y}}$.
}
\label{fig:sigma_workflow_vertical_compact}
\end{figure}

\newpage
\clearpage

\begin{algorithm}[H]
\caption{Forward pass of \method}
\label{alg:sigma}
\begin{algorithmic}[1]

\STATE \textbf{Input:} $\mathbf{x}\in\mathbb{R}^{B\times L\times C}$ with batch size $B$, look-back length $L$, and number of variables $C$
\STATE \textbf{Output:} $\hat{\mathbf y}\in\mathbb{R}^{B\times T\times C}$ with prediction horizon $T$

\STATE $\mathbf{x}\gets \mathrm{Normalize}(\mathbf{x})$

\STATE reshape $\mathbf{x}\in\mathbb{R}^{B\times L\times C}$ to $\mathbf{x}\in\mathbb{R}^{(BC)\times L\times 1}$

\STATE $\mathbf{H}\gets \mathrm{Embed}(\mathbf{x})$

\STATE $\mathbf{s}\gets \mathrm{softplus}(\boldsymbol{\theta})$
\STATE $(\mathbf{H}_{\mathrm{s}}, \mathbf{H}_{\mathrm{r}})\gets \mathrm{LDGForward}(\mathbf{H}, \mathbf{s})$

\STATE $\bar{\mathbf{H}}\gets \mathbf{H}_{\mathrm{s}} \,\|\, \mathbf{H}_{\mathrm{r}}$

\STATE $\mathbf{U}\gets \bar{\mathbf{H}}+\mathrm{MLP}(\bar{\mathbf{H}})$

\STATE split $\mathbf{U}$ into $\tilde{\mathbf{H}}_{\mathrm{s}}, \tilde{\mathbf{H}}_{\mathrm{r}}\in\mathbb{R}^{BC\times L\times d}$

\STATE $\mathbf{H}\gets \tilde{\mathbf{H}}_{\mathrm{s}}+\tilde{\mathbf{H}}_{\mathrm{r}}$

\FOR{each branch $\mathbf{U}\in\{\tilde{\mathbf{H}}_{\mathrm{s}}, \tilde{\mathbf{H}}_{\mathrm{r}}\}$}
    \STATE $\mathbf{Z}\gets \mathbf{W}_{1}\mathbf{U}$
    \STATE $\tilde{\mathbf{Y}}^{(\mathbf{U})}\gets \mathbf{Z}\mathbf{W}_{2}$
    \STATE reshape $\tilde{\mathbf{Y}}^{(\mathbf{U})}$ back to $\mathbb{R}^{B\times T\times C}$
\ENDFOR

\STATE $\tilde{\mathbf y}\gets \tilde{\mathbf{Y}}^{(\tilde{\mathbf{H}}_{\mathrm{s}})}
+\tilde{\mathbf{Y}}^{(\tilde{\mathbf{H}}_{\mathrm{r}})}$

\STATE $\hat{\mathbf y}\gets \mathrm{Denormalize}(\tilde{\mathbf y})$
\STATE \textbf{return} $\hat{\mathbf y}$

\end{algorithmic}
\end{algorithm}

\begin{algorithm}[h]
\caption{LDG kernel forward computation}
\label{alg:ldg-forward}
\begin{algorithmic}[1]

\STATE \textbf{Input:} hidden representation $\mathbf{H}\in\mathbb{R}^{BC\times L\times d}$, scale parameters $\mathbf{s}\in\mathbb{R}_{+}^{L}$
\STATE \textbf{Output:} smoothed representation $\mathbf{H}_{\mathrm{s}}\in\mathbb{R}^{BC\times L\times d}$, residual representation $\mathbf{H}_{\mathrm{r}}\in\mathbb{R}^{BC\times L\times d}$

\STATE construct the distance matrix $\mathbf{D}\in\mathbb{N}^{L\times L}$ with $D_{i,j}=|i-j|$

\FOR{$d=0,\dots,L-1$}
    \STATE $k_d \gets e^{-s_d} I_d(s_d)$
\ENDFOR

\STATE form the kernel matrix $\mathbf{K}\in\mathbb{R}^{L\times L}$ by $K_{i,j}=k_{D_{i,j}}$

\STATE $\mathbf{H}_{\mathrm{s}}\gets \mathbf{K}\mathbf{H}$

\STATE $\mathbf{H}_{\mathrm{r}}\gets \mathbf{H}-\mathbf{H}_{\mathrm{s}}$

\STATE \textbf{return} $\mathbf{H}_{\mathrm{s}}, \mathbf{H}_{\mathrm{r}}$

\end{algorithmic}
\end{algorithm}

\begin{algorithm}[h]
\caption{LDG kernel backward computation}
\label{alg:ldg-backward}
\begin{algorithmic}[1]

\STATE \textbf{Input:} upstream gradient $\frac{\partial \mathcal{L}}{\partial \mathbf{K}}\in\mathbb{R}^{L\times L}$, scale parameters $\mathbf{s}\in\mathbb{R}_{+}^{L}$
\STATE \textbf{Output:} gradient w.r.t. $\mathbf{s}$, i.e., $\frac{\partial \mathcal{L}}{\partial \mathbf{s}}\in\mathbb{R}^{L}$

\STATE construct the distance matrix $\mathbf{D}\in\mathbb{N}^{L\times L}$ with $D_{i,j}=|i-j|$

\FOR{$d=0,\dots,L-1$}
    \STATE compute the derivative
    \[
    \frac{\partial k_d}{\partial s_d}
    =
    e^{-s_d}
    \left(
    \frac{I_{d-1}(s_d)+I_{d+1}(s_d)}{2}
    -
    I_d(s_d)
    \right)
    \]

    \STATE aggregate gradients for distance $d$:
    \[
    g_d \gets
    \sum_{i,j:D_{i,j}=d}
    \frac{\partial \mathcal{L}}{\partial K_{i,j}}
    \]

    \STATE $\frac{\partial \mathcal{L}}{\partial s_d}
    \gets
    g_d\frac{\partial k_d}{\partial s_d}$
\ENDFOR

\STATE \textbf{return} $\frac{\partial \mathcal{L}}{\partial \mathbf{s}}$

\end{algorithmic}
\end{algorithm}

\section{Relation to Deformable Convolution or Pooling}

\method introduces adaptivity into multi-scale modeling by learning distance-aware scale parameters, rather than relying on fixed discrete scaling operators. Because \method learns adaptive scale parameters, it is useful to distinguish it from other adaptive operators such as deformable convolution~\citep{dai2017deformable}. While both approaches move beyond rigid uniform operators, their objectives are fundamentally different. Deformable convolution improves representational flexibility by adapting where information is sampled through learned input-dependent offsets, whereas \method constructs a valid continuous-scale operator family for multi-scale representation learning.

Rather than modifying the sampling geometry, \method controls how information is aggregated across temporal distances through a structured kernel. This distinction is important because the proposed operator satisfies principled properties, including non-expansiveness and energy reduction, thereby enabling  consistent multi-scale behavior. Thus, \method is best understood not as a deformable sampling mechanism, but as a structured continuous-scale operator for principled multi-scale time-series forecasting.

\end{document}